\documentclass[twoside,11pt]{article}

\usepackage[accepted]{melba}

\usepackage{mwe} 

%

\usepackage{amsmath,amsfonts}
\usepackage{booktabs}
\usepackage{bm}



\newcommand{\expnumber}[2]{{#1}\mathrm{e}{#2}}

\melbaid{2024:026}  
\doi{https://doi.org/10.59275/j.melba.2024-276b}
\melbaauthors{Honkamaa and Marttinen}  
\volume{2}
\firstpageno{2148}  
\melbayear{2024}  
\datesubmitted{01/2024}  
\datepublished{11/2024}  


\ShortHeadings{Symmetric, inverse consistent, and topology preserving image registration}{Honkamaa and Marttinen}

\title{SITReg: Multi-resolution architecture for symmetric, inverse consistent, and topology preserving image registration}

\author{\firstname Joel \surname Honkamaa\orcid{0000-0003-1532-9848} \email joel.honkamaa@aalto.fi \\  
	\addr Department of Computer Science, Aalto University, Finland
	\AND
	\name Pekka Marttinen\orcid{0000-0001-7078-7927} \email pekka.marttinen@aalto.fi \\
	\addr Department of Computer Science, Aalto University, Finland
}

\begin{document}

\maketitle

\begin{abstract}
	Deep learning has emerged as a strong alternative for classical iterative methods for deformable medical image registration, where the goal is to find a mapping between the coordinate systems of two images. Popular classical image registration methods enforce the useful inductive biases of symmetricity, inverse consistency, and topology preservation by construction. However, while many deep learning registration methods encourage these properties via loss functions, no earlier methods enforce all of them by construction. Here, we propose a novel registration architecture based on extracting multi-resolution feature representations which is by construction symmetric, inverse consistent, and topology preserving. We also develop an implicit layer for memory efficient inversion of the deformation fields. Our method achieves state-of-the-art registration accuracy on three datasets.
	The code is available at~\url{https://github.com/honkamj/SITReg}.
\end{abstract}

\begin{keywords}
	Machine Learning, Image Registration
\end{keywords}
\section{Introduction}\label{sec:intro}

Deformable medical image registration aims at finding a mapping between coordinate systems of two images, called a \textit{deformation}, to align them anatomically. Deep learning can be used to train a registration network which takes as input two images and outputs a deformation. We focus on unsupervised intra-modality registration without a ground-truth deformation and where images are of the same modality, applicable, e.g., when deforming brain MRI images from different patients to an atlas or analyzing a patient's breathing cycle from multiple images.

Success of the most popular deep learning architectures is often seen through the desirable inductive biases imposed by suitable geometric priors (such as translation equivariance for CNNs) \citep{bronstein2017geometric, bronstein2021geometric}. In image registration priors of \textit{inverse consistency}, \textit{symmetry}, and \textit{topology preservation} are widely considered to be useful \citep{sotiras2013deformable}. While some of the most popular classical methods enforce all of these properties by construction \citep{ashburner2007fast, avants2008symmetric}, no prior deep learning methods do, and we address this gap (see a detailed literature review in Appendix \ref{appendix:related_work}). To clearly state our contributions, we start by defining the properties (further clarifications in Appendix \ref{appendix:property_examples}).

We define a \textit{registration method} as a function $f$ that takes two images, say $x_A$ and $x_B$, and produces a deformation. In general one can predict the deformation with any method in either direction by varying the input order, but some methods predict both forward and inverse deformations directly, and we use subscripts to indicate the direction for such methods. For example, $f_{1\to2}$ produces a deformation that aligns the image of the first argument to the image of the second argument. Note that even if $f_{1\to 2}(x_A, x_B)$ and $f_{2\to 1}(x_B, x_A)$ both denote a deformation for aligning image $x_A$ to $x_B$, they in general can be different functions. As a result, a registration method may predict up to four different deformations for any given input pair: $f_{1\to2}(x_A, x_B)$, $f_{2\to1}(x_A, x_B)$, $f_{1\to2}(x_B, x_A)$, and $f_{2\to1}(x_B, x_A)$. If a method predicts a deformation only in one direction for a given input order, we might omit the subscript.

\textit{Inverse consistent} registration methods ensure that $f_{1\to2}(x_A, x_B)$ is an accurate inverse of $f_{2\to1}(x_A, x_B)$, which we quantify using the \textit{inverse consistency error}: $||f_{1\to2}(x_A, x_B) \circ f_{2\to1}(x_A, x_B) - \mathcal{I}||^2$, where $\circ$ is the composition operator and $\mathcal{I}$ is the identity deformation. Inverse consistency has been enforced using a loss or by construction. However, due to a limited spatial resolution of the predicted deformations, even for the by construction inverse consistent methods the inverse consistency error is not exactly zero \citep{ashburner2007fast}.

\begin{figure}
\centering
\includegraphics[width=1.0\textwidth]{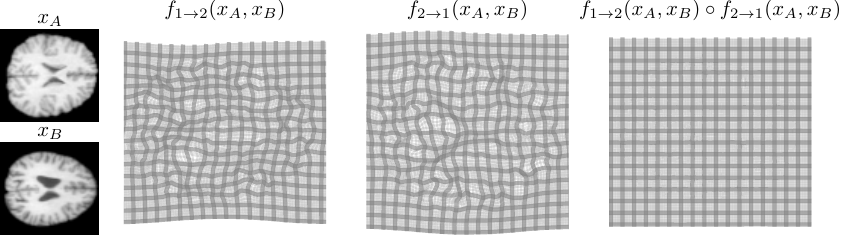}
\caption{\textbf{Example deformation from the method.} \textit{Left:} Forward deformation. \textit{Middle:} Inverse deformation. \textit{Right:} Composition of the forward and inverse deformations. Only one 2D slice is shown of the 3D deformation. The deformation is from the LPBA40 experiment. For more detailed visualization of a predicted deformation, see Figure \ref{appendix-fig:detailed_deformation_example} in Appendix \ref{appendix:result_visualization}.}
\label{fig:example_deformation}
\end{figure}

In \textit{symmetric registration}, the registration outcome does not depend on the order of the inputs, i.e., $f_{1\to2}(x_A, x_B)$ equals $f_{2\to1}(x_B, x_A)$. Unlike with inverse consistency, $f_{1\to2}(x_A, x_B)$ can equal $f_{2\to1}(x_B, x_A)$ exactly \citep{avants2008symmetric, estienne2021mics}, which we call \textit{symmetric by construction}. A related property, cycle consistency, can be assessed using \textit{cycle consistency error} $||f(x_A, x_B) \circ f(x_B, x_A) - \mathcal{I}||^2$. It can be computed for any method since it does not require the method to predict deformations in both directions. If the method is symmetric by construction, inverse consistency error equals cycle consistency error.

We define \textit{topology preservation} of predicted deformations similarly to \citet{christensen11995topological}. From the real-world point of view this means the preservation of anatomical structures, preventing non-smooth changes. Mathematically we want the deformations to be homeomorphisms, i.e., invertible and continuous. In registration literature it is common to talk about diffeomorphims which are additionally differentiable. In practice we want a deformation not to fold on top of itself which we measure by estimating the local Jacobian determinants of the predicted deformations, and checking whether they are positive.

With these definitions at hand, we summarize our main contributions as follows:
\begin{itemize}
    \item We propose a novel multi-resolution deep learning registration architecture which is by construction inverse consistent, symmetric, and preserves topology. The properties are fulfilled for the whole multi-resolution pipeline, not just separately for each resolution. Apart from the parallel works by \citep{greer2023inverse, zhang2023symmetric}, we are not aware of other multi-resolution deep learning registration methods which are by construction both symmetric and inverse consistent. For motivation of the multi-resolution approach, see Section \ref{sec:multi-resolution_background}.
    \item As a component in our architecture, we propose an \textit{implicit} neural network layer, which we call \textit{deformation inversion layer}, based on a well-known fixed point iteration formula \citep{chen2008simple} and recent advances in Deep Equilibrium models \citep{bai2019deep, duvenaud2020deep}. The layer allows memory efficient inversion of deformation fields.
    \item We show that the method achieves state-of-the-art results on two brain subject-to-subject registration datasets, and state-of-the-art results for deep learning methods on inspiration-exhale registration of lung CT scans.
\end{itemize}

We name the method \textit{SITReg} after its symmetricity, inverse consistency and topology preservation properties.

\section{Background and preliminaries}

\subsection{Topology preserving registration}\label{sec:topology_preserving_background}

The diffeomorphic LDDMM framework \citep{cao2005large} was the first approach suggested for by construction topology preserving registration. The core idea was to generate diffeomorphisms through integration of time-varying velocity fields constrained by certain differential equations. \citet{arsigny2006log} suggested more constrained but computationally cheaper stationary velocity field (SVF) formulation and it was later adopted by popular registration algorithms \citep{ashburner2007fast, vercauteren2009diffeomorphic}. While some unsupervised deep learning methods do use the LDDMM approach for generating topology preserving deformations \citep{shen2019region, ramon2022lddmm, wang2023metamorph}, most commonly \citep{chen2023survey} topology preservation in unsupervised deep learning is achieved using the more efficient SVF formulation, e.g. by \citet{krebs2018unsupervised, krebs2019learning, niethammer2019metric, shen2019networks, shen2019region, mok2020fast}.

Another classical method by \citet{choi2000injectivity, rueckert2006diffeomorphic} generates topology preserving deformations by constraining each deformation to be diffeomorphic but small, and forming the final deformation as a composition of multiple small deformations. Since diffeomorphisms form a group under composition, the final deformation is diffeomorphic. The principle is close to a practical implementation of the SVF, where the velocity field is integrated by first scaling it down by a power of two and interpreting the result as a small deformation, which is then repeatedly composed with itself. The idea is hence similar: a composition of small deformations.

In this work we build topology preserving deformations using the same strategy, as a composition of small topology preserving deformations.

\subsection{Inverse consistent registration}

Originally inverse consistency was achieved via variational losses \citep{christensen11995topological} but later LDDMM and SVF frameworks allowed for inverse consistent by construction methods since the inverse can be obtained by integrating the velocity field in the opposite direction. Both approaches are found among the deep learning methods as well, as some enforce inverse consistency via a penalty \citep{zhang2018inverse, kim2019unsupervised, estienne2021mics}, and many use the SVF formulation as mentioned in Section \ref{sec:topology_preserving_background}.

Compared to the earlier deep learning approaches, we take a methodologically slightly different approach of using the proposed deformation inversion layer for building an inverse consistent architecture.

\subsection{Symmetric registration}\label{sec:symmetric_registration_background}

Symmetric registration methods consider both images equally: swapping the input order should not change the registration outcome. Developing symmetric registration methods has a long history \citep{sotiras2013deformable}, but one particularly relevant for this work is a method called symmetric normalization \citep[SyN,][]{avants2008symmetric} which learns two separate transformations: one for deforming the first image half-way toward the second image and the other for deforming the second image half-way toward the first image. The images are matched in the intermediate coordinates and the full deformation is obtained as a composition of the half-way deformations (one of which is inverted). The same idea was applied in deep learning setting by SYMNet \citep{mok2020fast}. However, SYMNet does not guarantee symmetricity by construction during inference (see Figure \ref{appendix-fig:cycle_consistency_visualization} in Appendix \ref{appendix:result_visualization}). Some existing deep learning registration methods enforce cycle consistency via a penalty \citep{mahapatra2019training, gu2020pair, zheng2021symreg}, and the method by \citet{estienne2021mics} is symmetric by construction but only for a single component of their multi-step formulation, and not inverse consistent by construction. Recently, parallel with and unrelated to us, \citet{iglesias2023ready, greer2023inverse, zhang2023symmetric} have proposed by construction symmetric and inverse consistent registration methods within the SVF framework, in a different way from us.

We use the idea of deforming the images half-way towards each other to achieve symmetry throughout our multi-resolution architecture.

\subsection{Multi-resolution registration}\label{sec:multi-resolution_background}

Multi-resolution registration methods learn the deformation by first estimating it in a low resolution and then incrementally improving it while increasing the resolution. For each resolution one feeds the input images deformed with the deformation learned thus far, and incrementally composes the full deformation. Since its introduction a few decades ago \citep{rueckert1999nonrigid, oliveira2014medical}, the approach has been used in the top-performing classical and deep learning registration methods \citep{avants2008symmetric, klein2009evaluation, mok2020large, mok2021conditional, hering2022learn2reg}.

In this work we propose the first multi-resolution deep learning registration architecture that is by construction symmetric, inverse consistent, and topology preserving.

\subsection{Deep equilibrium networks}\label{sec:deqn}

Deep equilibrium networks use \textit{implicit} fixed point iteration layers, which have emerged as an alternative to the common \textit{explicit} layers \citep{bai2019deep, bai2020multiscale, duvenaud2020deep}. Unlike explicit layers, which produce output via an exact sequence of operations, the output of an implicit layer is defined indirectly as a solution to a fixed point equation, which is specified using a fixed point mapping. In the simplest case the fixed point mapping takes two arguments, one of which is the input. For example, let  $g: A \times B \to B$ be a fixed point mapping defining an implicit layer. Then, for a given input $a$, the output of the layer is the solution $z$ to equation
\begin{equation}
    z = g(z, a).
\end{equation}
This equation is called a fixed point equation and the solution is called a fixed point solution. If $g$ has suitable properties, the equation can be solved iteratively by starting with an initial guess and repeatedly feeding the output as the next input to $g$. More advanced iteration methods have also been developed for solving fixed point equations, such as Anderson acceleration \citep{walker2011anderson}.

The main mathematical innovation related to deep equilibrium networks is that the derivative of an implicit layer w.r.t. its inputs can be calculated based solely on a fixed point solution, i.e., no intermediate iteration values need to be stored for back-propagation. Now, given some solution $(a_0, z_0)$, such that $z_0 = g(z_0, a_0)$, and assuming certain local invertibility properties for $g$, the implicit function theorem says that there exists a solution mapping in the neighborhood of $(a_0, z_0)$, which maps other inputs to their corresponding solutions. Let us denote the solution mapping as $z^*$. The solution mapping can be seen as the theoretical explicit layer corresponding to the implicit layer. To find the derivatives of the implicit layer we need to find the Jacobian of $z^*$ at point $a_0$ which can be obtained using implicit differentiation as
\begin{equation*}
\partial z^*(a_0) = \left[ I - \partial_1 g(z_0, a_0) \right]^{-1}\partial_0 g(z_0, a_0).
\end{equation*}
The vector-Jacobian product of $z^*$ needed for back-propagation can be calculated using another fixed point equation without fully computing the Jacobians, see, e.g., \citet{duvenaud2020deep}. Hence, both forward and backward passes of the implicit layer can be computed as a fixed point iteration.

We use these ideas to develop a neural network layer for inverting deformations based on the fixed point equation, following \citet{chen2008simple}. The layer is very memory efficient as only the fixed point solution needs to be stored for the backward pass.

\section{Methods}

Let $n$ denote the dimensionality of the image, e.g., $n = 3$ for 3D medical images, and $k$ the number of channels, e.g., $k = 3$ for an RGB-image. The goal in deformable image registration is to find a mapping from $\mathbb{R}^n$ to $\mathbb{R}^n$, connecting the coordinate systems of two non-aligned images  $x_A, x_B: \mathbb{R}^n\to \mathbb{R}^k$, called a deformation. Application of a deformation to an image can be mathematically represented as a (function) composition of the image and the deformation, denoted by $\circ$. Furthermore, in practice linear interpolation is used to represent images (and deformations) in continuous coordinates.

In this work the deformations are in practice stored as displacement fields with the same resolution as the registered images, that is, each pixel or voxel is associated with a displacement vector describing the coordinate difference between the original image and the deformed image (e.g. if $n=3$, displacement field is tensor with shape $3 \times H \times W \times D$ where $H \times W \times D$ is the shape of the image). In our notation we equate the displacement fields with the corresponding coordinate mappings, and always use $\circ$ to denote the deformation operation (sometimes called warping).

In deep learning based image registration, we aim at learning a \textit{neural network} $f$ that takes two images as input and outputs a mapping between the image coordinates. Specifically, in medical context $f$ should be such that $x_A \circ f(x_A, x_B)$ matches anatomically with $x_B$.

\subsection{Symmetric formulation}\label{sec:symmetric_formulation_introduction}

As discussed in Section \ref{sec:intro}, we want our method to be cycle consistent. That is, since in the ideal case of $f$ finding the correct coordinate mapping between any given inputs $x_A$ and $x_B$, $f(x_A, x_B)$ and $f(x_B, x_A)$ should be inverses of each other. Hence enforcing such a property by construction should provide a useful constraint on the optimization space. To achieve this, we define the network $f$ using another \textit{auxiliary network} $u$ which also predicts deformations:
\begin{equation}\label{eq:anti-symmetric_formulation}
    f(x_A, x_B) := u(x_A, x_B) \circ u(x_B, x_A)^{-1}.
\end{equation}
By defining $f$ this way, it holds by construction that $f(x_A, x_B) = f(x_B, x_A)^{-1}$ apart from small numerical errors introduced by the composition and inversion. An additional benefit is that $f(x_A, x_A)$ equals the identity transformation, again apart from numerical inaccuracies, which is a natural requirement for a registration method.
Applying the formulation in Equation \ref{eq:anti-symmetric_formulation} naively would double the computational cost. To avoid this we encode features from the inputs separately before feeding them to the deformation extraction network in Equation \ref{eq:anti-symmetric_formulation}. A similar approach has been used in recent registration methods \citep{estienne2021mics, young2022superwarp}. Denoting the feature extraction network by $h$, the modified formulation is
\begin{equation}\label{eq:anti-symmetric_formulation_with_features}
    f(x_A, x_B) := u(h(x_A), h(x_B)) \circ u(h(x_B), h(x_A))^{-1}.
\end{equation}

\subsection{Multi-resolution architecture}\label{sec:multi-resolution}

\begin{figure*}[t]
\centering
\includegraphics[width=1.0\textwidth]{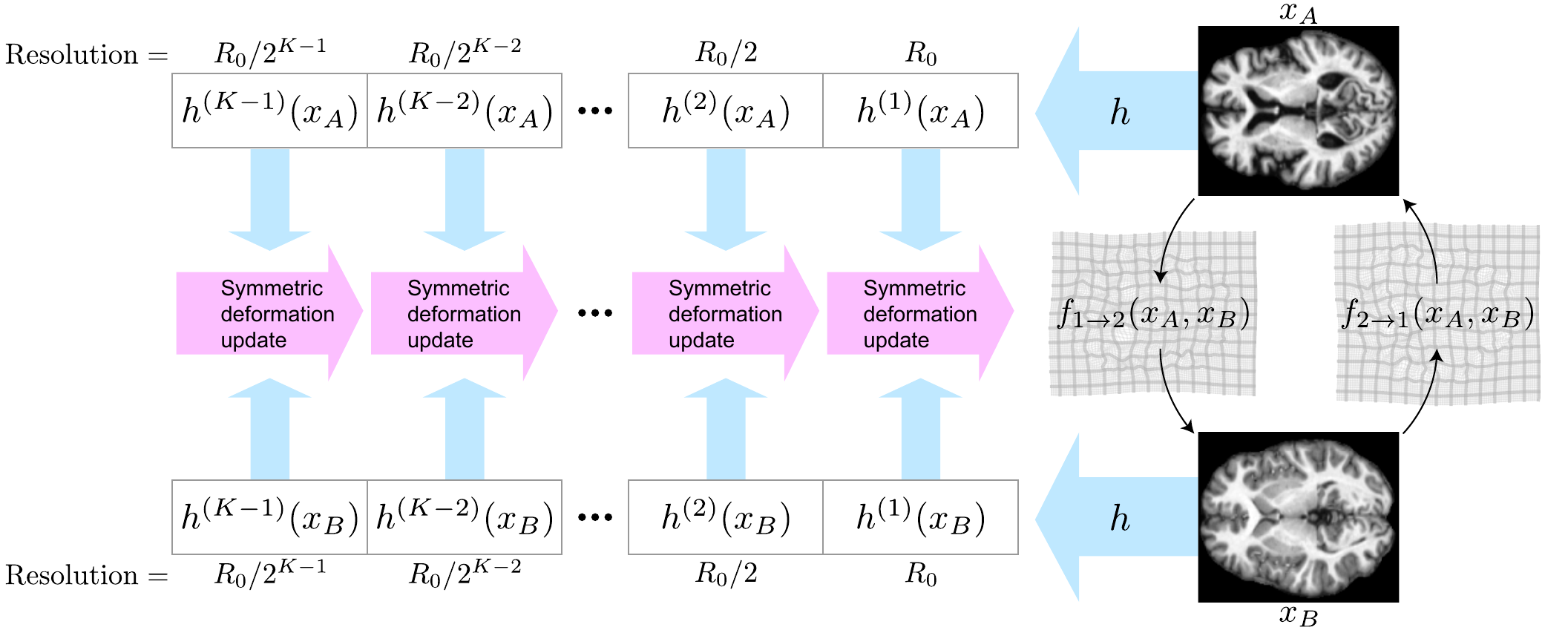}
\caption{\textbf{Overview of the proposed architecture.} Multi-resolution features are first extracted from the inputs $x_A$ and $x_B$ using convolutional encoder $h$. Output deformations $f_{1\to2}(x_A, x_B)$ and $f_{2\to1}(x_A, x_B)$ are built recursively from the multi-resolution features using the symmetric deformation updates described in Section \ref{sec:multi-resolution} and visualized in Figure \ref{fig:anti-symmetric_update}. The architecture is symmetric and inverse consistent with respect to the inputs and the final deformation is obtained in both directions. The brain images are from the OASIS dataset \citep{marcus2007open}}
\label{fig:architecture_overview}
\end{figure*}

As the overarching architecture, we propose a novel symmetric and inverse consistent multi-resolution coarse-to-fine approach. For motivation, see Section \ref{sec:multi-resolution_background}. Overview of the architecture is shown in Figure \ref{fig:architecture_overview}, and the prediction process is demonstrated visually in Figure \ref{appendix-fig:detailed_deformation_example} (Appendix \ref{appendix:result_visualization}).

First, we extract image feature representations $h^{(k)}(x_A), h^{(k)}(x_B)$, at different resolutions $k \in \{0,\dots,K - 1\}$. Index $k=0$ is the original resolution and increasing $k$ by one halves the spatial resolution. In practice $h$ is a ResNet \citep{he2016deep} style convolutional network and features at each resolution are extracted sequentially from previous features. Starting from the lowest resolution $k=K-1$, we recursively build the final deformation between the inputs using the extracted representations. To ensure symmetry, we build two deformations: one deforming the first image half-way towards the second image, and the other for deforming the second image half-way towards the first image (see Section \ref{sec:symmetric_registration_background}). The full deformation is composed of these at the final stage. Let us denote the half-way deformations extracted at resolution $k$ as $d_{1\to1.5}^{(k)}$ and $d_{2\to1.5}^{(k)}$. Initially, at level $k = K$, these are identity deformations. Then, at each $k = K-1,\ldots, 0$, the half-way deformations are updated by composing them with a predicted update deformation. In detail, the update at level $k$ consists of three steps (visualized in Figure \ref{fig:anti-symmetric_update}):
    \begin{enumerate}
        \item Deform the feature representations $h^{(k)}(x_A), h^{(k)}(x_B)$ of level $k$ towards each other by the half-way deformations from the previous level $k + 1$:
        \begin{equation}\label{eq:deformed_features}
            z_1^{(k)}:=h^{(k)}(x_A) \circ d_{1\to1.5}^{(k + 1)}\ \quad \text{and} \quad \ z_2^{(k)}:=h^{(k)}(x_B) \circ d_{2\to1.5}^{(k + 1)}.
        \end{equation}
        Note that the deformations have a higher (or the same when $k=0$) spatial resolution than the deformed feature volumes, and in practice we first downsample (with linear interpolation) the deformation to the resolution of the feature volume, and then resample the feature volume based on the downsampled deformation.
        \item Define an \textit{update deformation} $\delta^{(k)}$, using the idea from Equation \ref{eq:anti-symmetric_formulation_with_features} and the half-way deformed feature representations $z_1^{(k)}$ and $z_2^{(k)}$:
        \begin{equation}
            \delta^{(k)} := u^{(k)}(z_1^{(k)}, z_2^{(k)}) \circ u^{(k)}(z_2^{(k)}, z_1^{(k)})^{-1}.\label{eq:update_deformation_formula}
        \end{equation}
        Here, $u^{(k)}$ is a trainable convolutional neural network predicting an invertible auxiliary deformation (details in Section \ref{sec:topology_preserving_u}). The intuition here is that the symmetrically predicted update deformation $\delta^{(k)}$ should learn to adjust for whatever differences in the image features remain after deforming them half-way towards each other in Step 1 with deformations $d^{(k+1)}$ from the previous resolution.
        \item Obtain the updated half-way deformation $d_{1\to1.5}^{(k)}$ by composing the earlier half-way deformation of level $k + 1$ with the update deformation $\delta^{(k)}$
        \begin{equation}\label{eq:forward_update}
            d_{1\to1.5}^{(k)} := d_{1\to1.5}^{(k + 1)}\ \circ\ \delta^{(k)}.
        \end{equation}
        For the other direction $d_{2\to1.5}^{(k)}$, we use the inverse of the deformation update $\left(\delta^{(k)}\right)^{-1}$ which can be obtained simply by reversing $z_1^{(k)}$ and $z_2^{(k)}$ in Equation \ref{eq:update_deformation_formula} (see Figure \ref{fig:anti-symmetric_update}):
        \begin{equation}\label{eq:inverse_update}
            d_{2\to1.5}^{(k)}\\
                = d_{2\to1.5}^{(k + 1)}\ \circ\ \left(\delta^{(k)}\right)^{-1}.
        \end{equation}
        The inverses $\left(d_{1\to1.5}^{(k)}\right)^{-1}$ and $\left(d_{2\to1.5}^{(k)}\right)^{-1}$ are updated similarly.
    \end{enumerate}

The full registration architecture is then defined by the functions $f_{1\to2}$ and $f_{2\to1}$ which compose the half-way deformations from stage $k=0$:
    \begin{equation}\label{eq:final_deformation}
        f_{1\to2}(x_A, x_B) := d_{1\to1.5}^{(0)} \circ \left(d_{2\to1.5}^{(0)}\right)^{-1}
        \ \quad \text{and}\ \quad
        f_{2\to1}(x_A, x_B) := d_{2\to1.5}^{(0)} \circ \left(d_{1\to1.5}^{(0)}\right)^{-1}.
    \end{equation}
Note that $d_{1\to1.5}^{(0)}$, $d_{2\to1.5}^{(0)}$, and their inverses 
are functions of $x_A$ and $x_B$ through the features $h^{(k)}(x_A), h^{(k)}(x_B)$ in Equation \ref{eq:deformed_features}, but the dependence is suppressed in the notation for clarity.

\begin{figure}
\centering
\includegraphics[width=0.7\columnwidth]{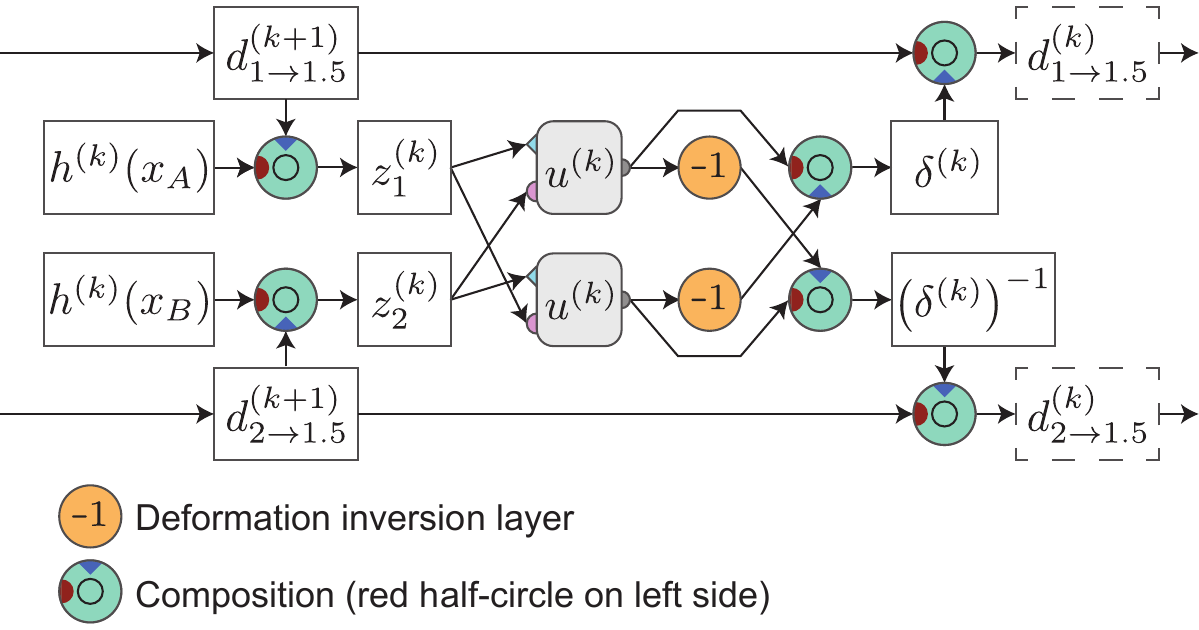}
\caption{\textbf{Recursive multi-resolution deformation update.} The deformation update at resolution $k$, described in Section \ref{sec:multi-resolution}, takes as input the half-way deformations $d_{1\to1.5}^{(k+1)}$ and $d_{2\to1.5}^{(k+1)}$ from the previous resolution, and updates them through a composition with an update deformation $\delta^{(k)}$. The update deformation  $\delta^{(k)}$ is calculated symmetrically from image features $z_1^{(k)}$ and $z_2^{(k)}$ (deformed mid-way towards each other with the previous half-way deformations) using a neural network $u^{(k)}$ according to Equation \ref{eq:update_deformation_formula}. The deformation inversion layer for inverting auxiliary deformations predicted by $u^{(k)}$ is described in Section \ref{sec:def_inv_layer}.}
\label{fig:anti-symmetric_update}
\end{figure}

By using half-way deformations at each stage, we avoid the problem with full deformations of having to select either of the image coordinates to which to deform the feature representations of the next stage, breaking the symmetry of the architecture. Now we can instead deform the feature representations of both inputs by the symmetrically predicted half-way deformations, which ensures that the updated deformations after each stage are separately invariant to input order.

\subsection{Implicit deformation inversion layer}\label{sec:def_inv_layer}

Implementing the architecture requires inverting deformations from $u^{(k)}$ in Equation \ref{eq:update_deformation_formula}. This could be done, e.g., with the SVF framework, but we propose an approach which requires storing $\approx 5$ times less data for the backward pass than the standard SVF. The memory saving is significant due to the high memory consumption of volumetric data, allowing larger images to be registered. During each forward pass $2 \times (K - 1)$ inversions are required. More details are provided in Appendix \ref{appendix:memory}.

As shown by \citet{chen2008simple}, deformations can be inverted in certain cases by a  fixed point iteration. Consequently, we propose to use the deep equilibrium network framework from Section \ref{sec:deqn} for inverting deformations, and label the resulting layer \textit{deformation inversion layer}. The fixed point equation proposed by \citet{chen2008simple} is
\begin{equation*}
    g(z, a) := -(a - \mathcal{I}) \circ z + \mathcal{I},
\end{equation*}
where $a$ is the deformation to be inverted, $z$ is the candidate for the inverse of $a$, and $\mathcal{I}$ is the identity deformation. It is easy to see that substituting $a^{-1}$ for $z$, yields $a^{-1}$ as output. We use Anderson acceleration \citep{walker2011anderson} for solving the fixed point equation and use the memory-effecient back-propagation \citep{bai2019deep, duvenaud2020deep} strategy discussed in Section \ref{sec:deqn}.

Lipschitz condition is sufficient for the fixed point algorithm to converge \citep{chen2008simple}, and we ensure that the predicted deformations fulfill the condition (see Section \ref{sec:topology_preserving_u}). The iteration converges well also in practice as shown in Appendix \ref{appendix:deformation_inversion_iteration_counts}.

\subsection{Topology preserving deformation prediction networks}\label{sec:topology_preserving_u}

Each $u^{(k)}$ predicts a deformation based on the features $z_1^{(k)}$ and $z_2^{(k)}$ and we define the networks $u^{(k)}$ as CNNs predicting cubic spline control point grid in the resolution of the features $z_1^{(k)}$ and $z_2^{(k)}$. The cubic spine control point grid can then be used to generate the displacement field representing the deformation. The use of cubic spline control point grid for representing deformations is a well-known strategy in image registration, see e.g. \citep{rueckert2006diffeomorphic, de2019deep}.

The deformations generated by $u^{(k)}$ have to be invertible to ensure the topology preservation property, and particularly invertible by the deformation inversion layer. To ensure that, we limit the control point absolute values below a hard constraint $\gamma^{(k)}$ using a scaled $\operatorname{Tanh}$ function.

In more detail, each $u^{(k)}$ consists of the following sequential steps:
\begin{enumerate}
    \item Concatenation of the two inputs, $z_1^{(k)}$ and $z_2^{(k)}$, along the channel dimension. Before concatenation we reparametrize the features as $z_1^{(k)} - z_2^{(k)}$ and $z_1^{(k)} + z_2^{(k)}$ as suggested by \citet{young2022superwarp}.
    \item Any number of spatial resolution preserving ($\text{stride}=1$ and $\text{padding}=\text{same}$) convolutions with an activation after each of the convolutions except the final one. The number of output channels of the final convolutions should equal the dimensionality ($3$ in our experiments).
    \item\label{appendix-enumerate:tanh_step} $\gamma^{(k)} \times \operatorname{Tanh}$ function
    \item Cubic spline upsampling to the image resolution by interpreting the output of the step \ref{appendix-enumerate:tanh_step} as cubic spline control point grid, similarly to e.g. \cite{de2019deep}. Cubic spline upsampling can be effeciently implemented as one dimensional transposed convolutions along each axis. 
\end{enumerate}

As shown in Appendix \ref{appendix:optimal_gamma}, the optimal upper bound for $\gamma^{(k)}$ ensuring invertibility by the deformation inversion layer but also in general, can be obtained by the formula $\gamma^{(k)} < \frac{1}{K^{{k}}_n}$
where
\begin{equation}\label{eq:optimal_gamma}
K^{(k)}_n := \max_{\begin{subarray}{l}x \in X\end{subarray}}\ \sum_{\alpha\in \mathbb{Z}^n} \left| \sum_{j=1}^{n}\frac{B(x_j + \frac{1}{2^k} - \alpha_j) - B(x_j - \alpha_j)}{1 / 2^k} \prod_{i\in N\setminus \{j\}} B(x_i - \alpha_i)\right|,
\end{equation}
$n$ is the dimensionality of the images (in this work $n = 3$),  $X := \{\frac{1}{2}+ \frac{1}{2^{k + 1}} + \frac{i}{2^k}\ |\ i \in \mathbb{Z}\}^n \cap [0, 1]^n$ are the relative sampling positions used in cubic spline upsampling (imlementation detail), and $B$ is a centered cardinal cubic B-spline (symmetric function with finite support). In practice we define $\gamma^{(k)} := 0.99 \times \frac{1}{K^{(k)}_n}$.

The formula can be evaluated exactly for dimensions $n = 2, 3$ for any practical number of resolution levels (for concrete values, see Table \ref{appendix-table:K_values} in Appendix \ref{appendix:optimal_gamma}). Note that the outer sum over $\mathbb{Z}^n$ is finite since $B$ has a finite support and hence only a finite number of terms are non-zero.

The strategy of limiting the absolute values of cubic spline control points to ensure invertibility is similar to the one taken by \citet{rueckert2006diffeomorphic} based on the proof by \citet{choi2000injectivity}. However, our mathematical proof additionally covers the convergence by the deformation inversion layer, and provides an exact bound even in our discretised setting (see Appendix \ref{appendix:optimal_gamma} for more details).

\subsection{Theoretical properties}

\begin{theorem}\label{theorem:inverse_consistent}
The proposed architecture is inverse consistent by construction.
\end{theorem}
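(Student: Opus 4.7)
The plan is to verify inverse consistency by direct computation from the definition of $f_{1\to2}$ and $f_{2\to1}$ in Equation \ref{eq:final_deformation}, exploiting that composition of a deformation with its inverse yields the identity. Since the claim is ``by construct,'' I expect the argument to reduce to a one-line cancellation rather than a deep structural proof.

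Concretely, I would start by writing out the composition
\begin{equation*}
f_{1\to2}(x_A, x_B) \circ f_{2\to1}(x_A, x_B)
= d_{1\to1.5}^{(0)} \circ \left(d_{2\to1.5}^{(0)}\right)^{-1} \circ d_{2\to1.5}^{(0)} \circ \left(d_{1\to1.5}^{(0)}\right)^{-1},
\end{equation*}
using the definitions from Equation \ref{eq:final_deformation}. Associativity of function composition then lets me cancel the inner pair $\left(d_{2\to1.5}^{(0)}\right)^{-1} \circ d_{2\to1.5}^{(0)} = \mathcal{I}$, leaving $d_{1\to1.5}^{(0)} \circ \left(d_{1\to1.5}^{(0)}\right)^{-1} = \mathcal{I}$. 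The same manipulation with the two factors swapped gives $f_{2\to1} \circ f_{1\to2} = \mathcal{I}$, so the inverse consistency error is zero in the exact (continuous) setting.

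The only subtlety worth flagging is what ``inverse'' means here. Inside the architecture, $\left(d_{1\to1.5}^{(k)}\right)^{-1}$ and $\left(d_{2\to1.5}^{(k)}\right)^{-1}$ are not recomputed from scratch by re-inverting $d^{(k)}$; rather, they are maintained recursively as in Equation \ref{eq:inverse_update} by composing the previously tracked inverse with $\left(\delta^{(k)}\right)^{-1}$, which is itself produced by swapping the arguments of $u^{(k)}$ in Equation \ref{eq:update_deformation_formula}. So I would briefly note (by induction on $k$ going from $K$ down to $0$) that the quantities labelled $\left(d^{(k)}_{\cdot\to1.5}\right)^{-1}$ really are exact inverses of $d^{(k)}_{\cdot\to1.5}$ in the continuous sense, given that at $k=K$ both are the identity and at each step the update is $d^{(k)} = d^{(k+1)} \circ \delta^{(k)}$ paired with $\left(d^{(k)}\right)^{-1} = \left(\delta^{(k)}\right)^{-1} \circ \left(d^{(k+1)}\right)^{-1}$.

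The main (and essentially only) obstacle is therefore a purely bookkeeping one: making sure that the symbol $(\cdot)^{-1}$ as used throughout Section \ref{sec:multi-resolution} refers to genuine two-sided inverses in the continuous formulation, so that the cancellation is legitimate. In practice inversion is realized by the deformation inversion layer of Section \ref{sec:def_inv_layer}, and the control-point bound $\gamma^{(k)} < 1/K_n^{(k)}$ from Section \ref{sec:topology_preserving_u} guarantees that each $\delta^{(k)}$ is indeed invertible and that the fixed point iteration converges; hence the inductive step is well-defined. As usual for diffeomorphic multi-resolution pipelines, finite sampling resolution and interpolation during the compositions introduce a small residual error, but this is precisely the ``by construct'' caveat already discussed in Section \ref{sec:intro} for methods such as DARTEL and SyN, and not a gap in the proof.
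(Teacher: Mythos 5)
Your proof is correct and takes essentially the same route as the paper's: the paper simply writes $f_{1\to2}(x_A,x_B) = d_{1\to1.5}^{(0)}\circ \left(d_{2\to1.5}^{(0)}\right)^{-1} = \left(d_{2\to1.5}^{(0)}\circ \left(d_{1\to1.5}^{(0)}\right)^{-1}\right)^{-1} = f_{2\to1}(x_A,x_B)^{-1}$, the same one-line cancellation from Equation \ref{eq:final_deformation} that you carry out by composing the two outputs directly. Your additional induction confirming that the recursively maintained quantities labelled $\left(d^{(k)}_{\cdot\to1.5}\right)^{-1}$ are genuine inverses, and the caveat about finite sampling resolution, correspond to the remarks the paper makes immediately after its proof.
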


\begin{theorem}\label{theorem:symmetric}
The proposed architecture is symmetric by construction.
\end{theorem}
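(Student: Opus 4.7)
The plan is to proceed by induction on the resolution index $k$, downward from $k = K$ to $k = 0$. The invariant to maintain is that swapping the two input images interchanges the two half-way deformations, namely
\begin{equation*}
d_{1\to1.5}^{(k)}(x_B, x_A) = d_{2\to1.5}^{(k)}(x_A, x_B) \quad \text{and} \quad d_{2\to1.5}^{(k)}(x_B, x_A) = d_{1\to1.5}^{(k)}(x_A, x_B).
\end{equation*}
Once this holds at $k = 0$, the desired conclusion $f_{1\to2}(x_A, x_B) = f_{2\to1}(x_B, x_A)$ is immediate from Equation \ref{eq:final_deformation}, since swapping the inputs merely exchanges the two factors in $d_{1\to1.5}^{(0)} \circ (d_{2\to1.5}^{(0)})^{-1}$ and reproduces the definition of $f_{2\to1}$.

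The base case at $k = K$ is trivial because both half-way deformations are declared to be the identity and carry no dependence on the inputs. For the inductive step, I would first propagate the inductive hypothesis through Equation \ref{eq:deformed_features}. Swapping the inputs interchanges the encoder outputs $h^{(k)}(x_A) \leftrightarrow h^{(k)}(x_B)$, while by the hypothesis at level $k+1$ it also swaps the half-way deformations they are warped by, yielding $z_1^{(k)}(x_B, x_A) = z_2^{(k)}(x_A, x_B)$ and $z_2^{(k)}(x_B, x_A) = z_1^{(k)}(x_A, x_B)$.

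Substituting this into Equation \ref{eq:update_deformation_formula} and using the elementary identity $(a \circ b^{-1})^{-1} = b \circ a^{-1}$ yields $\delta^{(k)}(x_B, x_A) = \left(\delta^{(k)}(x_A, x_B)\right)^{-1}$. Feeding this together with the inductive hypothesis on $d_{1\to1.5}^{(k+1)}$ and $d_{2\to1.5}^{(k+1)}$ into the update rules \ref{eq:forward_update} and \ref{eq:inverse_update} then propagates the invariant down to level $k$, completing the induction.

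None of the individual algebraic steps is technically hard; the main care lies in tracking inverses through the nested compositions without dropping one. The one point that should be made explicit is that every inverse appearing in the argument must genuinely exist as a valid deformation. This is exactly what Section \ref{sec:topology_preserving_u} secures: the bound $\gamma^{(k)}$ on the cubic spline control point magnitudes ensures that each output of $u^{(k)}$ is invertible and that the deformation inversion layer converges, so every $\delta^{(k)}$ and every $d_{\cdot\to1.5}^{(k)}$ manipulated above is well-defined and the symbolic identities used are legitimate.
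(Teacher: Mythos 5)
Your proposal is correct and follows essentially the same route as the paper's own proof: a downward induction on the resolution level with the invariant $d_{1\to1.5}^{(k)}(x_A,x_B)=d_{2\to1.5}^{(k)}(x_B,x_A)$, established via $z_1^{(k)}(x_A,x_B)=z_2^{(k)}(x_B,x_A)$ and $\delta^{(k)}(x_B,x_A)=\bigl(\delta^{(k)}(x_A,x_B)\bigr)^{-1}$. Your closing remark on the invertibility of the $u^{(k)}$ outputs (secured in Section \ref{sec:topology_preserving_u} and Appendix \ref{appendix:optimal_gamma}) is a welcome explicit justification that the paper leaves implicit.
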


\begin{theorem}\label{theorem:topology_preserving}
The proposed architecture is topology preserving. 
\end{theorem}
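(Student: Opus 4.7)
The plan is to show that each of the two output fields $f_{1\to 2}(x_A,x_B)$ and $f_{2\to 1}(x_A,x_B)$ is a homeomorphism (in fact a diffeomorphism), since by the definitions in Equation \ref{eq:final_deformation} each output is a composition of two half-way deformations (one of them inverted) and compositions and inverses of homeomorphisms are again homeomorphisms. So it suffices to show that every $d_{1\to 1.5}^{(k)}$ and $d_{2\to 1.5}^{(k)}$ constructed in Section \ref{sec:multi-resolution} is a homeomorphism for $k=0,\dots,K$.

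I would proceed by downward induction on $k$. The base case $k=K$ is trivial because $d_{1\to 1.5}^{(K)}=d_{2\to 1.5}^{(K)}=\mathcal{I}$ is the identity. For the inductive step, assume $d_{1\to 1.5}^{(k+1)}$ and $d_{2\to 1.5}^{(k+1)}$ are homeomorphisms; by Equations \ref{eq:forward_update} and \ref{eq:inverse_update}, $d_{1\to 1.5}^{(k)}$ and $d_{2\to 1.5}^{(k)}$ are compositions with $\delta^{(k)}$ and $(\delta^{(k)})^{-1}$ respectively, so it suffices to prove that $\delta^{(k)}$ is a homeomorphism. From Equation \ref{eq:update_deformation_formula}, $\delta^{(k)}=u^{(k)}(z_1^{(k)},z_2^{(k)})\circ u^{(k)}(z_2^{(k)},z_1^{(k)})^{-1}$, so I reduce the claim further to showing that every output of $u^{(k)}$ is a homeomorphism.

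This final reduction is the main content of the proof, and it is where the design choices of Section \ref{sec:topology_preserving_u} come in. Each $u^{(k)}$ produces a displacement field as a cubic B-spline interpolation of a control point grid whose absolute values are clamped below $\gamma^{(k)}=0.99/K_n^{(k)}$ via the scaled $\operatorname{Tanh}$. By the result cited from Appendix \ref{appendix:optimal_gamma} (Equation \ref{eq:optimal_gamma}), this bound is precisely the threshold that guarantees the resulting displacement field defines an invertible map with positive Jacobian determinant everywhere and, moreover, that its inverse is obtainable via the fixed-point iteration of the deformation inversion layer of Section \ref{sec:def_inv_layer}. Therefore $u^{(k)}(\cdot,\cdot)$ and its deformation-inversion-layer inverse are both homeomorphisms, hence $\delta^{(k)}$ is a homeomorphism, closing the induction.

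The main obstacle in writing this out carefully is not the inductive bookkeeping, which is routine, but rather the invocation of the Appendix \ref{appendix:optimal_gamma} bound: one must make sure that the statement being cited really provides what is needed here, namely that $\gamma^{(k)}<1/K_n^{(k)}$ simultaneously (i) forces the cubic-spline-upsampled displacement field to be a diffeomorphism on its continuous domain and (ii) guarantees that the fixed-point iteration of Section \ref{sec:def_inv_layer} converges to its genuine inverse (so that the ``$(\cdot)^{-1}$'' used in Equations \ref{eq:update_deformation_formula} and \ref{eq:final_deformation} is the true inverse rather than an arbitrary approximant). Once that point is established, everything else is closure of the group of homeomorphisms under composition and inversion.
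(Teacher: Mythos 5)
Your proposal is correct and follows essentially the same route as the paper's proof: reduce topology preservation to the invertibility of each $u^{(k)}$ output (guaranteed by the control-point bound $\gamma^{(k)} < 1/K_n^{(k)}$ from Appendix \ref{appendix:optimal_gamma}, which also covers convergence of the deformation inversion layer), then conclude by closure of invertible deformations under composition and inversion. Your explicit downward induction over resolution levels is just a more spelled-out version of the paper's one-line composition argument.
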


\begin{proof}\renewcommand{\BlackBox}{}
Appendix \ref{appendix:theoretical_properties_proof}, including discussion on numerical errors caused by limited sampling resolution.
\end{proof}

\subsection{Training and implementation}

We train the model in an unsupervised end-to-end manner similarly to most other unsupervised registration methods, by using similarity and deformation regularization losses. The similarity loss encourages deformed images to be similar to the target images, and the regularity loss encourages desirable properties, such as smoothness, on the predicted deformations. For similarity we use local normalized cross-correlation with window width 7 and for regularization we use $L^2$ penalty on the gradients of the displacement fields, identically to VoxelMorph \citep{balakrishnan2019voxelmorph}. We apply the losses in both directions to maintain symmetry. One could apply the losses in the intermediate coordinates and avoid building the full deformations during training. 
The final loss is:
\begin{equation}
    \mathcal{L} = \operatorname{NCC}(x_A \circ d_{1\to 2},\ x_B) +
    \operatorname{NCC}(x_A,\ x_B\circ d_{2\to 1}) + \lambda \times \left[
        \operatorname{Grad}(d_{1\to 2}) + \operatorname{Grad}(d_{2\to 1})
    \right],
\end{equation}
where $d_{1\to2} := f_{1\to2}(x_A, x_B)$, $d_{2\to1} := f_{2\to1}(x_A, x_B)$,  $\operatorname{NCC}$ the local normalized cross-correlation loss, $\operatorname{Grad}$ the $L^2$ penalty on the gradients of the displacement fields, and $\lambda$ is the regularization weight. For details on hyperparameter selection, see Appendix \ref{appendix:hyperparameter_selection}.
Our implementation is in PyTorch \citep{pytorch}, and is available at ~\url{https://github.com/honkamj/SITReg}. Evaluation methods and preprocessing done by us, see Section \ref{sec:experiments}, are included.

\subsection{Inference}\label{sec:inference}

We consider two variants: \textbf{Standard}: The final deformation is formed by iteratively resampling at each image resolution (common approach). \textbf{Complete}: All individual deformations (outputs of $u^{(k)}$) are stored in memory and the final deformation is their true composition.
The latter is included only to demonstrate that the deformation is everywhere invertible (no negative determinants) without numerical sampling errors, but the first one is used unless stated otherwise, and perfect invertibility is not necessary in practice. Due to limited sampling resolution even the existing "diffeomorphic" registration frameworks such as SVF do not usually achieve perfect invertibility. 

\section{Experimental setup}\label{sec:experiments}

We evaluate our method on subject-to-subject registration of brain MRI images.

\subsection{Datasets}

We evaluate our method on two tasks: subject-to-subject registration of brain images and on inspiration-exhale registration of lung CT scans. The latter is considered challenging for deep learning methods, and classical optimization based methods remain state-of-the-art \citep{falta2024lung250m}.

We use two subject-to-subject registration datasets and evaluate on both of them separately: \textit{OASIS} brains dataset with 414 T1-weighted brain MRI images \citep{marcus2007open} as pre-processed for Learn2Reg challenge \mbox{\citep{hoopes2021hypermorph, hering2022learn2reg}} 
\footnote{\href{https://www.oasis-brains.org/\#access}{https://www.oasis-brains.org/\#access}}
, and \textit{LPBA40} dataset from University of California Laboratory of Neuro Imaging (USC LONI) with 40 brain MRI images \citep{shattuck2008construction}
\footnote{\href{https://resource.loni.usc.edu/resources/atlases/license-agreement/}{https://resource.loni.usc.edu/resources/atlases/license-agreement/}}

Pre-processing for both brain subject-to-subject datasets includes bias field correction, normalization, and cropping. For OASIS dataset we use affinely pre-aligned images and for LPBA40 dataset we use rigidly pre-aligned images. Additionally we train the models without any pre-alignment on OASIS data (\textit{OASIS raw}) to compare the methods with larger initial displacements. We crop the images in affinely pre-aligned OASIS dataset to $144 \times 192 \times 160$ resolution and in LPBA40 dataset to $160 \times 192 \times 160$ resolution. Images in raw OASIS dataset have resolution $256 \times 256 \times 256$ and we do not crop the images. Voxel sizes of the affinely aligned and raw datasets are the same. We split the OASIS dataset into $255$, $20$ and $139$ images for training, validation, and testing. The split differs from the Learn2Reg challenge since the test set is not available, but sizes correspond to the splits used by \citet{mok2020fast, mok2020large, mok2021conditional}. We used all image pairs for testing and validation, yielding $9591$ test and $190$ validation pairs. LPBA40 dataset is much smaller and we split it into $25$, $5$ and $10$ images for training, validation, and testing. This leaves us with $10$ pairs for validation and $45$ for testing.

For inspiration-exhale registration of lung CT scans we use a recently published Lung250M-4B dataset by \citet{falta2024lung250m}. The dataset is based on multiple earlier datasets: DIR-LAB COPDgene \citep{castillo2013reference}, EMPIRE10 \citep{murphy2011evaluation}, L2R-LungCT dataset from Learn2reg challenge \citep{hering2022learn2reg}, The National Lung Screening Trial (NLST) dataset \citep{doi:10.1056/NEJMoa1102873, nlst2013, TCIAClark2013}, TCIA-Ventilation dataset \citep{TCIAClark2013, ESLICK2018267, Eslick2022-mu}, and TCIA-NSCLC dataset \citep{TCIAClark2013, Hugo2016-aa, hugoetal2017, BALIK2013372, ROMAN20121566}. In total the dataset has 97, 18, and 9 pairs for training, testing, and validation. However, we can not use 12 image pairs from EMPIRE10 dataset due terms of use, 10 of which are part of the training set, and 2 of which are part of the validation set. The test set is not affected. For evaluation the data set includes manually placed landmarks on the validation and the test image pairs. In addition, the dataset contains automatically generated landmarks for training pairs but we want to stay in the unsupervised setting and do not use those for training. We use lung masked images for training, and downsample them to isotropic $2$ mm resolution, normalize and clip them to range $[-1024, 0]$, and crop the volumes to the shape of $160 \times 112 \times 160$.


%
%

\subsection{Evaluation metrics}

We evaluate brain subject-to-subject \textit{registration accuracy} using segmentations of brain structures included in the datasets: ($35$ structures for OASIS and $56$ for LPBA40), and two metrics: Dice score (Dice) and 95\% quantile of the Hausdorff distances (HD95), similarly to  Learn2Reg challenge \citep{hering2022learn2reg}. Dice score measures the overlap of the segmentations of source images deformed by the method and the segmentations of target images, and HD95 measures the distance between the surfaces of the segmentations.

For evaluating the inspiration-exhale registration of lung CT scans we use mean distance between landmark pairs after registration, denoted as target registration error (TRE).
However, comparing methods only based on the overlap of anatomic regions or landmarks is insufficient \citep{pluim2016truth, rohlfing2011image}, and hence also \textit{deformation regularity} should be measured, for which we use conventional metrics based on the local Jacobian determinants at $10^6$ sampled locations in each volume. The local derivatives were estimated via small perturbations of $10^{-7}$ voxels. We measure topology preservation as the proportion of the locations with a negative determinant ($\%$ of $|J_{\phi}|_{\leq 0}$), and deformation smoothness as the standard deviation of the determinant ($\operatorname{std}(|J_{\phi}|)$). Additionally we report inverse and cycle \textit{consistency} errors, see Section \ref{sec:intro}.

\subsection{Baseline methods}

We compare against \textit{VoxelMorph} \citep{balakrishnan2019voxelmorph}, \textit{SYMNet} \citep{mok2020fast}, conditional LapIRN (\textit{cLapIRN}) \citep{mok2020large, mok2021conditional}, and \textit{ByConstructionICON} \citep{greer2023inverse}. VoxelMorph is a standard baseline in deep learning based unsupervised registration. With SYMNet we are interested in how well our method preserves topology and how accurate the generated inverse deformations are compared to the SVF based methods. Additionally, since SYMNet is symmetric from the loss point of view, it is interesting to see how symmetric predictions it produces in practice. cLapIRN was the best method on OASIS dataset in Learn2Reg 2021 challenge \citep{hering2022learn2reg}. ByConstructionICON is a parallel work to ours developing a multi-step inverse consistent, symmetric, and topology preserving deep learning registration method based on SVF formulation. We used the official implementations\footnote{\href{https://github.com/voxelmorph/voxelmorph}{https://github.com/voxelmorph/voxelmorph}}\footnote{\href{https://github.com/cwmok/Fast-Symmetric-Diffeomorphic-Image-Registration-with-Convolutional-Neural-Networks}{https://github.com/cwmok/Fast-Symmetric-Diffeomorphic-Image-Registration-with-Convolutional-Neural-Networks}}\footnote{\href{https://github.com/cwmok/Conditional_LapIRN}{https://github.com/cwmok/Conditional\_LapIRN/}}\footnote{\href{https://github.com/uncbiag/ByConstructionICON}{https://github.com/uncbiag/ByConstructionICON/}} adjusted to our datasets. SYMNet uses anti-folding loss to penalize negative determinant. Since this loss is a separate component that could be easily used with any method, we also train SYMNet without it, denoted \textit{SYMNet (simple)} for two of our four datasets. This provides a comparison on how well the vanilla SVF framework can generate invertible deformations in comparison to our method. For details on hyperparameter selection for baseline models, see Appendix \ref*{appendix:baseline_hyperparameter}.

\subsection{Ablation study}

To study the usefulness of the symmetric formulation introduced in Section \ref{sec:symmetric_formulation_introduction}, we also conduct the experiments without it while keeping the architecture otherwise identical. In more detail, we change Equation \ref{eq:update_deformation_formula} into the following form:
\begin{equation}
    \delta^{(k)} := u^{(k)}(z_1^{(k)}, z_2^{(k)}) \circ u^{(k)}(z_1^{(k)}, z_2^{(k)}).
\end{equation}
For the inverse update (not necessarily inverse anymore despite the notation) we then simply use
\begin{equation}
    \left(\delta^{(k)}\right)^{-1} := u^{(k)}(z_2^{(k)}, z_1^{(k)}) \circ u^{(k)}(z_2^{(k)}, z_1^{(k)}).
\end{equation}
The resulting architecture is still topology preserving but no longer inverse or cycle consistent by construction. We refer to the architecture as \textit{SITReg (non-sym)}

\section{Results}\label{sec:results}

Evaluation results for the affinely pre-aligned OASIS dataset are shown in Table \ref{table:results_oasis}, the OASIS raw dataset in Table \ref{table:results_oasis_raw}., the LPBA40 dataset in Table \ref{table:results_lpba40}, and for the Lung250M-4B dataset in Table \ref{table:results_lung250m_4b}. Figure \ref{fig:deformation_regularity_visualization} visualizes differences in deformation regularity on OASIS dataset, and additional visualizations are available in Appendix \ref{appendix:result_visualization}. A comparison of the methods' inference time efficiencies on OASIS dataset are shown in Table \ref{table:performance_results}.

\begin{table*}[h]
\centering
\setlength\tabcolsep{1.5pt}
\scriptsize
\caption{\textbf{Results, OASIS dataset.} Mean and standard deviation of each metric are computed on the test set.
The percentage of folding voxels ($\%$ of $|J_{\phi}|_{\leq 0}$) from the complete SITReg version is shown in {\color{blue}blue}, other results are with the standard version (see Section \ref{sec:inference}).
VoxelMorph and cLapIRN do not predict inverse deformations and hence the inverse-consistency error is not shown.}
\begin{tabular}{lcccccc}

\toprule
& \multicolumn{2}{c}{Accuracy} & \multicolumn{2}{c}{Deformation regularity} & \multicolumn{2}{c}{Consistency}\\
\cmidrule(r){2-3} \cmidrule(r){4-5} \cmidrule(r){6-7}
Model & Dice $\uparrow$ & HD95 $\downarrow$ & $\%$ of $|J_{\phi}|_{\leq 0} \downarrow$ & $\operatorname{std}(|J_{\phi}|) \downarrow$ & Cycle $\downarrow$ & Inverse $\downarrow$ \\
\midrule SYMNet (original) & $0.788 (0.029)$ &   $2.15 (0.57)$   & $\bm{\expnumber{1.5}{-3}} (\expnumber{4.3}{-4})$ &               $\bm{0.44} (0.039)$                & $\expnumber{3.0}{-1} (\expnumber{2.9}{-2})$ & $\bm{\expnumber{3.5}{-3}} (\expnumber{4.2}{-4})$ \\
 SYMNet (simple)            & $0.787 (0.029)$ &   $2.17 (0.58)$   & $\expnumber{1.5}{-2} (\expnumber{3.1}{-3})$ &               $0.46 (0.045)$                & $\expnumber{2.8}{-1} (\expnumber{2.8}{-2})$ & $\expnumber{5.2}{-3} (\expnumber{8.4}{-4})$ \\
 VoxelMorph                 & $0.803 (0.031)$ &   $2.08 (0.57)$   & $\expnumber{1.4}{-1} (\expnumber{9.4}{-2})$ &               $0.49 (0.032)$                & $\expnumber{4.5}{-1} (\expnumber{5.3}{-2})$ &                      -                      \\
 cLapIRN                    & $0.812 (0.027)$ &   $1.93 (0.50)$   & $\expnumber{1.1}{0} (\expnumber{2.1}{-1})$  &               $0.55 (0.032)$                & $\expnumber{1.2}{0} (\expnumber{1.6}{-1})$  &                      -                      \\
 ByConstructionICON         & $0.813 (0.022)$ &   $1.83 (0.42)$   & $\expnumber{2.3}{-2} (\expnumber{6.4}{-3})$ &               $0.48 (0.069)$                & $\bm{\expnumber{5.3}{-3}} (\expnumber{1.5}{-3})$ &  $\expnumber{5.3}{-3} (\expnumber{1.5}{-3})$                     \\
 \midrule SITReg            & $0.818 (0.025)^*$ &   $1.84 (0.45)$   & $\expnumber{8.1}{-3} (\expnumber{1.6}{-3}) / \color{blue}{\bm{0}(0)}$ &               $0.45 (0.038)$                & $\expnumber{5.5}{-3} (\expnumber{6.9}{-4})$ & $\expnumber{5.5}{-3} (\expnumber{6.9}{-4})$ \\
 SITReg (non-sym)           & $\bm{0.819} (0.024)^*$ &   $\bm{1.82} (0.44)$   & $\expnumber{1.7}{-2} (\expnumber{4.5}{-3})$ &               $0.50 (0.037)$                & $\expnumber{1.2}{-1} (\expnumber{1.4}{-2})$ & $\expnumber{1.2}{-1} (\expnumber{1.4}{-2})$ \\

\midrule
\multicolumn{7}{l}{
$^*$ Statistically significant ($p < 0.05$) improvement compared to the baselines, for details see Appendix \ref{appendix:statistical_significance}.
}

\end{tabular}
\label{table:results_oasis}
\end{table*}

\begin{table*}[h]
\centering
\setlength\tabcolsep{1.5pt}
\scriptsize
\caption{\textbf{Results, OASIS raw dataset.} The results are interpreted similarly to Table \ref{table:results_oasis}. SYMNet and VoxelMorph did not converge to anything meaningful due to the large initial displacement.}
\begin{tabular}{lcccccc}

\toprule
& \multicolumn{2}{c}{Accuracy} & \multicolumn{2}{c}{Deformation regularity} & \multicolumn{2}{c}{Consistency}\\
\cmidrule(r){2-3} \cmidrule(r){4-5} \cmidrule(r){6-7}
Model & Dice $\uparrow$ & HD95 $\downarrow$ & $\%$ of $|J_{\phi}|_{\leq 0} \downarrow$ & $\operatorname{std}(|J_{\phi}|) \downarrow$ & Cycle $\downarrow$ & Inverse $\downarrow$ \\
\midrule SYMNet    & $0.176 (0.14)$  &   $19.7 (10.)$    & $\bm{\expnumber{1.8}{-4}} (\expnumber{1.8}{-4})$ &               $0.24 (0.031)$                & $\expnumber{3.6}{-1} (\expnumber{1.4}{-1})$ & $\expnumber{7.2}{-4} (\expnumber{2.1}{-4})$ \\
 VoxelMorph         & $0.230 (0.19)$  &   $19.4 (11.)$    & $\expnumber{9.2}{-2} (\expnumber{4.5}{-2})$ &               $0.26 (0.019)$                &  $\expnumber{4.0}{0} (\expnumber{2.3}{0})$  &                      -                      \\
 cLapIRN            & $0.744 (0.073)$ &   $3.14 (1.9)$    & $\expnumber{5.8}{-1} (\expnumber{1.4}{-1})$ &               $0.38 (0.045)$                &  $\expnumber{3.0}{0} (\expnumber{2.1}{0})$  &                      -                      \\
 ByConstructionICON & $0.803 (0.023)$ &   $1.83 (0.55)$   & $\expnumber{3.3}{-3} (\expnumber{2.0}{-3})$ &               $0.21 (0.045)$                & $\bm{\expnumber{1.1}{-3}} (\expnumber{6.1}{-4})$ &  $\bm{\expnumber{1.1}{-3}} (\expnumber{6.1}{-4})$              \\
 \midrule SITReg    & $\bm{0.813} (0.023)^*$ &   $\bm{1.80} (0.52)^*$   & $\expnumber{1.0}{-3} (\expnumber{3.9}{-4})/ \color{blue}{\bm{0}(0)}$ &               $\bm{0.20} (0.031)$                & $\expnumber{1.3}{-3} (\expnumber{3.2}{-4})$ & $\expnumber{1.3}{-3} (\expnumber{3.2}{-4})$ \\

\midrule
\multicolumn{7}{l}{
$^*$ Statistically significant ($p < 0.05$) improvement compared to the baselines, for details see Appendix \ref{appendix:statistical_significance}.
}

\end{tabular}
\label{table:results_oasis_raw}
\end{table*}

\begin{table*}[h]
\centering
\setlength\tabcolsep{1.5pt}
\scriptsize
\caption{\textbf{Results, LPBA40 dataset.} The results are interpreted similarly to Table \ref{table:results_oasis}.}
\begin{tabular}{lcccccc}

\toprule
& \multicolumn{2}{c}{Accuracy} & \multicolumn{2}{c}{Deformation regularity} & \multicolumn{2}{c}{Consistency}\\
\cmidrule(r){2-3} \cmidrule(r){4-5} \cmidrule(r){6-7}
Model & Dice $\uparrow$ & HD95 $\downarrow$ & $\%$ of $|J_{\phi}|_{\leq 0} \downarrow$ & $\operatorname{std}(|J_{\phi}|) \downarrow$ & Cycle $\downarrow$ & Inverse $\downarrow$ \\
 \midrule SYMNet (original) & $0.669 (0.033)$ &   $6.79 (0.70)$   & $\bm{\expnumber{1.1}{-3}} (\expnumber{4.6}{-4})$ &               $0.35 (0.050)$                & $\expnumber{2.7}{-1} (\expnumber{6.1}{-2})$ & $\expnumber{2.1}{-3} (\expnumber{4.3}{-4})$ \\
 SYMNet (simple)            & $0.664 (0.034)$ &   $6.88 (0.73)$   & $\expnumber{4.7}{-3} (\expnumber{1.6}{-3})$ &               $0.37 (0.053)$                & $\expnumber{2.8}{-1} (\expnumber{5.8}{-2})$ & $\expnumber{2.9}{-3} (\expnumber{6.7}{-4})$ \\
 VoxelMorph                 & $0.676 (0.032)$ &   $6.72 (0.68)$   & $\expnumber{2.2}{-1} (\expnumber{2.1}{-1})$ &               $0.35 (0.040)$                & $\expnumber{3.1}{-1} (\expnumber{1.1}{-1})$ & -                                           \\
 cLapIRN                    & $0.714 (0.019)$ &   $5.93 (0.43)$   & $\expnumber{8.4}{-2} (\expnumber{2.9}{-2})$ &               $\bm{0.27} (0.020)$                & $\expnumber{5.6}{-1} (\expnumber{1.8}{-1})$ & -                                           \\
 ByConstructionICON         & $0.674 (0.031)$ &   $6.60 (0.71)$   & $\expnumber{4.7}{-3} (\expnumber{2.9}{-3})$ &                $0.33 (0.41)$                & $\bm{\expnumber{1.6}{-3}} (\expnumber{6.6}{-4})$ & $\bm{\expnumber{1.6}{-3}} (\expnumber{6.6}{-4})$         \\
 \midrule SITReg            & $\bm{0.720} (0.017)^*$ &   $\bm{5.88} (0.43)$   & $\expnumber{2.4}{-3} (\expnumber{6.4}{-4}) / \color{blue}{\bm{0}(0)}$ &               $0.31 (0.032)$                & $\expnumber{2.6}{-3} (\expnumber{4.2}{-4})$ & $\expnumber{2.6}{-3} (\expnumber{4.2}{-4})$ \\
 SITReg (non-sym)           & $0.697 (0.024)$ &   $6.29 (0.57)$   & $\expnumber{1.2}{-3} (\expnumber{5.6}{-4})$ &               $0.34 (0.033)$                & $\expnumber{2.2}{-1} (\expnumber{3.2}{-2})$ & $\expnumber{2.2}{-1} (\expnumber{3.2}{-2})$ \\
\midrule
\multicolumn{7}{l}{$^*$ Statistically significant ($p < 0.05$) improvement compared to the baselines, for details see Appendix \ref{appendix:statistical_significance}.}
\end{tabular}
\label{table:results_lpba40}
\end{table*}

\begin{table*}[h]
\centering
\setlength\tabcolsep{1.5pt}
\scriptsize
\caption{\textbf{Results, Lung250M-4B dataset.} The results are interpreted similarly to Table \ref{table:results_oasis}. Note that TRE is computed in the inhale coordinates in contrast to the exhale coordinates used in the paper by \citet{falta2024lung250m}. In the inhale coordinates our method obtains TRE=$2.27$.}
\begin{tabular}{lccccc}

\toprule
& \multicolumn{1}{c}{} & \multicolumn{2}{c}{Deformation regularity} & \multicolumn{2}{c}{Consistency}\\
\cmidrule(r){3-4} \cmidrule(r){5-6}
Model & TRE $\downarrow$ & $\%$ of $|J_{\phi}|_{\leq 0} \downarrow$ & $\operatorname{std}(|J_{\phi}|) \downarrow$ & Cycle $\downarrow$ & Inverse $\downarrow$ \\
 \midrule SYMNet    &   $8.25 (2.1)$   &  $\bm{\expnumber{0.0}{0}} (\expnumber{0.0}{0})$  &               $0.32 (0.093)$                & $\expnumber{5.3}{-1} (\expnumber{2.2}{-1})$ & $\expnumber{3.6}{-4} (\expnumber{1.7}{-4})$ \\
 VoxelMorph         &   $6.66 (1.9)$   & $\expnumber{1.2}{0} (\expnumber{5.9}{-1})$  &               $0.39 (0.090)$                &  $\expnumber{1.1}{1} (\expnumber{6.5}{0})$  & -                                           \\
 cLapIRN            &   $5.34 (1.9)$   & $\expnumber{4.9}{-3} (\expnumber{6.5}{-3})$ &               $0.25 (0.071)$                &  $\expnumber{5.9}{0} (\expnumber{3.7}{0})$  & -                                           \\
 ByConstructionICON &   $8.63 (3.1)$   &  $\bm{\expnumber{0.0}{0}} (\expnumber{0.0}{0})$  &               $\bm{0.19} (0.054)$                & $\bm{\expnumber{4.8}{-5}} (\expnumber{2.8}{-5})$ & $\bm{\expnumber{4.8}{-5}} (\expnumber{2.8}{-5})$                                           \\
 \midrule SITReg    &  $\bm{2.71} (0.93)^*$   & $\expnumber{4.0}{-5} (\expnumber{6.6}{-5}) / \color{blue}{\bm{0}(0)}$ &               $0.30 (0.084)$                & $\expnumber{1.2}{-3} (\expnumber{4.5}{-4})$ & $\expnumber{1.2}{-3} (\expnumber{4.5}{-4})$ \\
 SITReg (non-sym)   &   $4.98 (2.3)$   &  $\bm{\expnumber{0.0}{0}} (\expnumber{0.0}{0})$  &               $0.32 (0.088)$                & $\expnumber{4.6}{-1} (\expnumber{2.8}{-1})$ & $\expnumber{4.6}{-1} (\expnumber{2.8}{-1})$ \\
\midrule
\multicolumn{6}{l}{$^*$ Statistically significant ($p < 0.05$) improvement compared to the baselines, for details see Appendix \ref{appendix:statistical_significance}.}
\end{tabular}
\label{table:results_lung250m_4b}
\end{table*}

\begin{table*}[h]
\centering
\setlength\tabcolsep{3pt}
\scriptsize
\caption{\textbf{Computational efficiency, OASIS dataset.} Mean and standard deviation are shown. Inference time and memory usage were measured on NVIDIA GeForce RTX 3090.}
\begin{tabular}{lccc}
\toprule
 Model                               & Inference Time (s) $\downarrow$ & Inference Memory (GB) $\downarrow$ & \# parameters (M) $\downarrow$ \\
 \midrule SYMNet    &        $0.095 (0.00052)$        &               $\bm{1.9}$                &             $\bm{0.9}$              \\
 VoxelMorph         &         $0.16 (0.0010)$         &               $5.6$                &             $1.3$              \\
 cLapIRN            &        $\bm{0.10} (0.00052)$         &               $4.1$                &             $1.2$              \\
 ByConstrictionICON &         $0.32 (0.0022)$         &               $2.4$                &             $45.1$             \\
 \midrule SITReg    &         $0.37 (0.0057)$         &               $3.4$                &             $1.2$              \\
\bottomrule
\end{tabular}
\label{table:performance_results}
\end{table*}
\clearpage
\begin{figure}[h]
\centering
\includegraphics[width=1.0\textwidth]{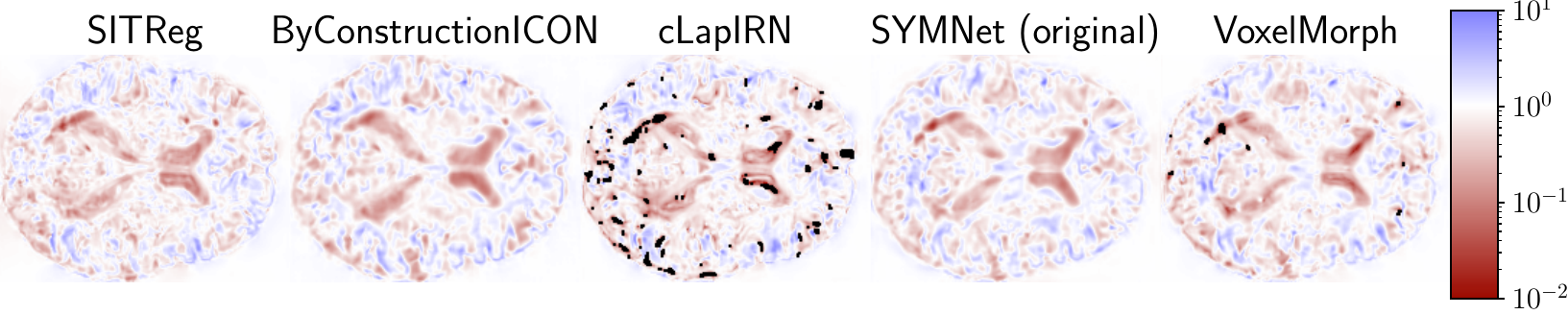}
\caption{\textbf{Visual deformation regularity comparison.} Local Jacobian determinants are visualized for each model for a single predicted deformation in OASIS experiment. Folding voxels (determinant below zero) are marked with black color. Only one axial slice of the predicted 3D deformation is visible.}
\label{fig:deformation_regularity_visualization}
\end{figure}

\section{Discussion}\label{sec:discussion}

Evaluation of registration algorithms is difficult due to lack of ground truth \citep{pluim2016truth}. In particular, the assessment of registration performance should not be based only on a tissue overlap score since a clearly unrealistic and wrong deformation could still have a good tissue overlap score (an extreme example is presented by \citet{rohlfing2011image}). For that reason, usually also deformation regularity is measured. However, that introduces further difficulties for evaluation because of the trade-off invoked by the regularization hyperparameter (in our case $\lambda$) between the tissue overlap and deformation regularity, which can change the ranking of the methods for different metrics. For that reason one should look at the overall performance across the regularity and accuracy metrics (see e.g. Learn2reg challenge \citep{hering2022learn2reg}). We further evaluate using the consistency metrics as is often done in the literature \citep{holden2000voxel, pluim2016truth}.

In a such overall comparison over all four datasets, our method performs clearly the best. In more detail:

\begin{itemize}
    \item \textbf{VoxelMorph:} Our method outperforms it on every metric in every experiment.
    \item \textbf{SYMNet:} While SYMNet (original) has in general slightly better deformation regularity (compared to our standard inference variant), our method has a significantly better dice score or TRE. By increasing regularization one could make our model to have better regularity while still maintaining significantly better dice score or TRE than SYMNet. This is demonstrated for validation set results in Tables \ref{appendix-table:hyperparameter_optimization_oasis}, \ref{appendix-table:hyperparameter_optimization_lpba40}, and \ref{appendix-table:hyperparameter_optimization_lung250m_4b} in Appendix \ref{appendix:hyperparameter_selection}. In other words, our method has significantly better overall performance.
    \item \textbf{cLapIRN:} Our method outperforms cLapIRN on all four datasets. On OASIS raw and Lung250M-4B datasets our method outperforms cLapIRN clearly. On the pre-aligned OASIS and LPBA40 datasets our method has only slightly better tissue overlap performance than cLapIRN, but has clearly superior deformation regularity in terms of folding voxels.
    \item \textbf{ByConstructionICON:} On the OASIS datasets ByCounstrictionICON performs similarily to our method, although slightly worse on the OASIS raw dataset. However, on the smaller LPBA40 and Lung250M-4B datasets the performance of our method is clearly better, suggesting that our method generalizes from less data than ByConstructionICON.
\end{itemize}

For the Lung250M-4B dataset the test set is standardized, and we can compare the results to the methods benchmarked in the paper \citep{falta2024lung250m}. However, our metrics are computed in the coordinates of the inhale image whereas the paper uses coordinates of the exhale image. In the exhale coordinates our method obtains TRE $=2.27$ which is very competitive, clearly outperforming the best deep learning method VoxelMorph++ without instance optimization (TRE $=4.47$) trained using additional landmark supervision unlike our unsupervised method. VoxelMorph++ with instance optimization obtains TRE $=2.26$.

\subsection{Ablation study}

Based on the ablation study, the symmetric formulation helps especially on the smaller LPBA40 and Lung250M-4B datasets. On the larger OASIS dataset the performance on tissue overlap metrics is similar but even there is a slight improvement in terms of deformation regularity. The result is in line with the common machine learning wisdom that incorporating inductive bias into models has more relevance when the training set is small.

\subsection{Computational performance}

Inference time of our method is slightly larger than that of the compared methods, but unlike VoxelMorph and cLapIRN, it produces deformations in both directions immediately. Also, half a second runtime is still very fast and restrictive only in the most time-critical use cases. In terms of inference memory usage our method is competitive.

\section{Conclusions}

We proposed a novel image registration architecture inbuilt with the desirable properties of symmetry, inverse consistency, and topology preservation. The multi-resolution formulation was capable of accurately registering images even with large intial misalignments. As part of our method, we developed a new neural network component \textit{deformation inversion layer}. The model is easily end-to-end trainable and does not require tedious multi-stage training strategies. In the experiments the method demonstrates state-of-the-art registration performance. The main limitation is somewhat heavier computational cost than other methods.


\acks{This work was supported by the Academy of Finland (Flagship programme: Finnish Center for Artificial Intelligence FCAI, and grants 336033, 352986) and EU (H2020 grant 101016775 and NextGenerationEU). We also acknowledge the computational resources provided by the Aalto Science-IT Project.}

%
\ethics{The work follows appropriate ethical standards in conducting research and writing the manuscript, following all applicable laws and regulations regarding treatment of animals or human subjects.}

\coi{We declare we don't have conflicts of interest.}

\data{All data used in the experiments is publicly available. The exact OASIS dataset used can be accessed from the website of Learn2reg challenge \url{https://learn2reg.grand-challenge.org/} and the LPBA40 dataset is available for download at \url{https://www.loni.usc.edu/research/atlas_downloads}. Lung250M-4B dataset can be obtained based on instructions at \url{https://github.com/multimodallearning/Lung250M-4B}. The provided codebase contains implementation for all additional data pre-processing that was done.}
\clearpage
\bibliography{references}

\begin{thebibliography}{69}
\providecommand{\natexlab}[1]{#1}
\providecommand{\url}[1]{\texttt{#1}}
\expandafter\ifx\csname urlstyle\endcsname\relax
  \providecommand{\doi}[1]{doi: #1}\else
  \providecommand{\doi}{doi: \begingroup \urlstyle{rm}\Url}\fi

\bibitem[Arsigny et~al.(2006)Arsigny, Commowick, Pennec, and Ayache]{arsigny2006log}
Vincent Arsigny, Olivier Commowick, Xavier Pennec, and Nicholas Ayache.
\newblock A log-euclidean framework for statistics on diffeomorphisms.
\newblock In \emph{International Conference on Medical Image Computing and Computer-Assisted Intervention}, pages 924--931. Springer, 2006.

\bibitem[Ashburner(2007)]{ashburner2007fast}
John Ashburner.
\newblock A fast diffeomorphic image registration algorithm.
\newblock \emph{Neuroimage}, 38\penalty0 (1):\penalty0 95--113, 2007.

\bibitem[Avants et~al.(2008)Avants, Epstein, Grossman, and Gee]{avants2008symmetric}
Brian~B Avants, Charles~L Epstein, Murray Grossman, and James~C Gee.
\newblock Symmetric diffeomorphic image registration with cross-correlation: evaluating automated labeling of elderly and neurodegenerative brain.
\newblock \emph{Medical image analysis}, 12\penalty0 (1):\penalty0 26--41, 2008.

\bibitem[Avants et~al.(2009)Avants, Tustison, Song, et~al.]{avants2009advanced}
Brian~B Avants, Nick Tustison, Gang Song, et~al.
\newblock Advanced normalization tools (ants).
\newblock \emph{Insight j}, 2\penalty0 (365):\penalty0 1--35, 2009.

\bibitem[Bai et~al.(2019)Bai, Kolter, and Koltun]{bai2019deep}
Shaojie Bai, J~Zico Kolter, and Vladlen Koltun.
\newblock Deep equilibrium models.
\newblock \emph{Advances in Neural Information Processing Systems}, 32, 2019.

\bibitem[Bai et~al.(2020)Bai, Koltun, and Kolter]{bai2020multiscale}
Shaojie Bai, Vladlen Koltun, and J~Zico Kolter.
\newblock Multiscale deep equilibrium models.
\newblock \emph{Advances in Neural Information Processing Systems}, 33:\penalty0 5238--5250, 2020.

\bibitem[Balakrishnan et~al.(2019)Balakrishnan, Zhao, Sabuncu, Guttag, and Dalca]{balakrishnan2019voxelmorph}
Guha Balakrishnan, Amy Zhao, Mert~R Sabuncu, John Guttag, and Adrian~V Dalca.
\newblock {V}oxel{M}orph: {A} learning framework for deformable medical image registration.
\newblock \emph{IEEE transactions on medical imaging}, 38\penalty0 (8):\penalty0 1788--1800, 2019.

\bibitem[Balik et~al.(2013)Balik, Weiss, Jan, Roman, Sleeman, Fatyga, Christensen, Zhang, Murphy, Lu, Keall, Williamson, and Hugo]{BALIK2013372}
Salim Balik, Elisabeth Weiss, Nuzhat Jan, Nicholas Roman, William~C. Sleeman, Mirek Fatyga, Gary~E. Christensen, Cheng Zhang, Martin~J. Murphy, Jun Lu, Paul Keall, Jeffrey~F. Williamson, and Geoffrey~D. Hugo.
\newblock Evaluation of 4-dimensional computed tomography to 4-dimensional cone-beam computed tomography deformable image registration for lung cancer adaptive radiation therapy.
\newblock \emph{International Journal of Radiation Oncology*Biology*Physics}, 86\penalty0 (2):\penalty0 372--379, 2013.
\newblock ISSN 0360-3016.
\newblock \doi{https://doi.org/10.1016/j.ijrobp.2012.12.023}.
\newblock URL \url{https://www.sciencedirect.com/science/article/pii/S0360301613000023}.

\bibitem[Beg et~al.(2005)Beg, Miller, Trouv{\'e}, and Younes]{beg2005computing}
M~Faisal Beg, Michael~I Miller, Alain Trouv{\'e}, and Laurent Younes.
\newblock Computing large deformation metric mappings via geodesic flows of diffeomorphisms.
\newblock \emph{International journal of computer vision}, 61:\penalty0 139--157, 2005.

\bibitem[Bronstein et~al.(2017)Bronstein, Bruna, LeCun, Szlam, and Vandergheynst]{bronstein2017geometric}
Michael~M Bronstein, Joan Bruna, Yann LeCun, Arthur Szlam, and Pierre Vandergheynst.
\newblock Geometric deep learning: going beyond euclidean data.
\newblock \emph{IEEE Signal Processing Magazine}, 34\penalty0 (4):\penalty0 18--42, 2017.

\bibitem[Bronstein et~al.(2021)Bronstein, Bruna, Cohen, and Veli{\v{c}}kovi{\'c}]{bronstein2021geometric}
Michael~M Bronstein, Joan Bruna, Taco Cohen, and Petar Veli{\v{c}}kovi{\'c}.
\newblock Geometric deep learning: Grids, groups, graphs, geodesics, and gauges.
\newblock \emph{arXiv preprint arXiv:2104.13478}, 2021.

\bibitem[Cao et~al.(2005)Cao, Miller, Winslow, and Younes]{cao2005large}
Yan Cao, Michael~I Miller, Raimond~L Winslow, and Laurent Younes.
\newblock Large deformation diffeomorphic metric mapping of vector fields.
\newblock \emph{IEEE transactions on medical imaging}, 24\penalty0 (9):\penalty0 1216--1230, 2005.

\bibitem[Castillo et~al.(2013)Castillo, Castillo, Fuentes, Ahmad, Wood, Ludwig, and Guerrero]{castillo2013reference}
Richard Castillo, Edward Castillo, David Fuentes, Moiz Ahmad, Abbie~M Wood, Michelle~S Ludwig, and Thomas Guerrero.
\newblock A reference dataset for deformable image registration spatial accuracy evaluation using the copdgene study archive.
\newblock \emph{Physics in Medicine \& Biology}, 58\penalty0 (9):\penalty0 2861, 2013.

\bibitem[Chen et~al.(2023)Chen, Liu, Wei, Bian, Subramanian, Carass, Prince, and Du]{chen2023survey}
Junyu Chen, Yihao Liu, Shuwen Wei, Zhangxing Bian, Shalini Subramanian, Aaron Carass, Jerry~L Prince, and Yong Du.
\newblock A survey on deep learning in medical image registration: New technologies, uncertainty, evaluation metrics, and beyond.
\newblock \emph{arXiv preprint arXiv:2307.15615}, 2023.

\bibitem[Chen et~al.(2008)Chen, Lu, Chen, Ruchala, and Olivera]{chen2008simple}
Mingli Chen, Weiguo Lu, Quan Chen, Kenneth~J Ruchala, and Gustavo~H Olivera.
\newblock A simple fixed-point approach to invert a deformation field.
\newblock \emph{Medical physics}, 35\penalty0 (1):\penalty0 81--88, 2008.

\bibitem[Choi and Lee(2000)]{choi2000injectivity}
Yongchoel Choi and Seungyong Lee.
\newblock Injectivity conditions of 2{D} and 3{D} uniform cubic {B}-spline functions.
\newblock \emph{Graphical models}, 62\penalty0 (6):\penalty0 411--427, 2000.

\bibitem[Christensen et~al.(1995)Christensen, Rabbitt, Miller, Joshi, Grenander, Coogan, and Van~Essen]{christensen11995topological}
Gary~E Christensen, Richard~D Rabbitt, Michael~I Miller, Sarang~C Joshi, Ulf Grenander, Thomas~A Coogan, and David~C Van~Essen.
\newblock Topological properties of smooth anatomic maps.
\newblock In \emph{Information processing in medical imaging}, volume~3, page 101. Springer Science \& Business Media, 1995.

\bibitem[Clark et~al.(2013)Clark, Vendt, Smith, Freymann, Kirby, Koppel, Moore, Phillips, Maffitt, Pringle, Tarbox, and Prior]{TCIAClark2013}
Kenneth Clark, Bruce Vendt, Kirk Smith, John Freymann, Justin Kirby, Paul Koppel, Stephen Moore, Stanley Phillips, David Maffitt, Michael Pringle, Lawrence Tarbox, and Fred Prior.
\newblock The cancer imaging archive (tcia): Maintaining and operating a public information repository.
\newblock \emph{Journal of Digital Imaging}, 26\penalty0 (6):\penalty0 1045--1057, Dec 2013.
\newblock ISSN 1618-727X.
\newblock \doi{10.1007/s10278-013-9622-7}.
\newblock URL \url{https://doi.org/10.1007/s10278-013-9622-7}.

\bibitem[Dalca et~al.(2018)Dalca, Balakrishnan, Guttag, and Sabuncu]{dalca2018unsupervised}
Adrian~V Dalca, Guha Balakrishnan, John Guttag, and Mert~R Sabuncu.
\newblock Unsupervised learning for fast probabilistic diffeomorphic registration.
\newblock In \emph{Medical Image Computing and Computer Assisted Intervention--MICCAI 2018: 21st International Conference, Granada, Spain, September 16-20, 2018, Proceedings, Part I}, pages 729--738. Springer, 2018.

\bibitem[De~Vos et~al.(2019)De~Vos, Berendsen, Viergever, Sokooti, Staring, and I{\v{s}}gum]{de2019deep}
Bob~D De~Vos, Floris~F Berendsen, Max~A Viergever, Hessam Sokooti, Marius Staring, and Ivana I{\v{s}}gum.
\newblock A deep learning framework for unsupervised affine and deformable image registration.
\newblock \emph{Medical image analysis}, 52:\penalty0 128--143, 2019.

\bibitem[Duvenaud et~al.(2020)Duvenaud, Kolter, and Johnson]{duvenaud2020deep}
David Duvenaud, J~Zico Kolter, and Matthew Johnson.
\newblock Deep implicit layers tutorial-neural {ODE}s, deep equilibirum models, and beyond.
\newblock \emph{Neural Information Processing Systems Tutorial}, 2020.

\bibitem[Eslick et~al.(2018)Eslick, Kipritidis, Gradinscak, Stevens, Bailey, Harris, Booth, and Keall]{ESLICK2018267}
Enid~M. Eslick, John Kipritidis, Denis Gradinscak, Mark~J. Stevens, Dale~L. Bailey, Benjamin Harris, Jeremy~T. Booth, and Paul~J. Keall.
\newblock Ct ventilation imaging derived from breath hold ct exhibits good regional accuracy with galligas pet.
\newblock \emph{Radiotherapy and Oncology}, 127\penalty0 (2):\penalty0 267--273, 2018.
\newblock ISSN 0167-8140.
\newblock \doi{https://doi.org/10.1016/j.radonc.2017.12.010}.
\newblock URL \url{https://www.sciencedirect.com/science/article/pii/S0167814017327652}.

\bibitem[Eslick et~al.(2022)Eslick, Kipritidis, Gradinscak, Stevens, Bailey, Harris, Booth, and Keall]{Eslick2022-mu}
Enid~M Eslick, John Kipritidis, Denis Gradinscak, Mark~J Stevens, Dale~L Bailey, Benjamin Harris, Jeremy~T Booth, and Paul~J Keall.
\newblock {CT} ventilation as a functional imaging modality for lung cancer radiotherapy ({CT-vs-PET-Ventilation-Imaging}), 2022.

\bibitem[Estienne et~al.(2021)Estienne, Vakalopoulou, Battistella, Henry, Lerousseau, Leroy, Paragios, and Deutsch]{estienne2021mics}
Th{\'e}o Estienne, Maria Vakalopoulou, Enzo Battistella, Theophraste Henry, Marvin Lerousseau, Amaury Leroy, Nikos Paragios, and Eric Deutsch.
\newblock {MICS}: {M}ulti-steps, inverse consistency and symmetric deep learning registration network.
\newblock \emph{arXiv preprint arXiv:2111.12123}, 2021.

\bibitem[Falta et~al.(2024)Falta, Gro{\ss}br{\"o}hmer, Hering, Bigalke, and Heinrich]{falta2024lung250m}
Fenja Falta, Christoph Gro{\ss}br{\"o}hmer, Alessa Hering, Alexander Bigalke, and Mattias Heinrich.
\newblock Lung250m-4b: a combined 3d dataset for ct-and point cloud-based intra-patient lung registration.
\newblock \emph{Advances in Neural Information Processing Systems}, 36, 2024.

\bibitem[Greer et~al.(2023)Greer, Tian, Vialard, Kwitt, Bouix, San Jose~Estepar, Rushmore, and Niethammer]{greer2023inverse}
Hastings Greer, Lin Tian, Francois-Xavier Vialard, Roland Kwitt, Sylvain Bouix, Raul San Jose~Estepar, Richard Rushmore, and Marc Niethammer.
\newblock Inverse consistency by construction for multistep deep registration.
\newblock In \emph{International Conference on Medical Image Computing and Computer-Assisted Intervention}, pages 688--698. Springer, 2023.

\bibitem[Gu et~al.(2020)Gu, Cao, Ma, Chen, Liu, Shen, and Xue]{gu2020pair}
Dongdong Gu, Xiaohuan Cao, Shanshan Ma, Lei Chen, Guocai Liu, Dinggang Shen, and Zhong Xue.
\newblock Pair-wise and group-wise deformation consistency in deep registration network.
\newblock In \emph{International Conference on Medical Image Computing and Computer-Assisted Intervention}, pages 171--180. Springer, 2020.

\bibitem[He et~al.(2016)He, Zhang, Ren, and Sun]{he2016deep}
Kaiming He, Xiangyu Zhang, Shaoqing Ren, and Jian Sun.
\newblock Deep residual learning for image recognition.
\newblock In \emph{Proceedings of the IEEE conference on computer vision and pattern recognition}, pages 770--778, 2016.

\bibitem[Hering et~al.(2022)Hering, Hansen, Mok, Chung, Siebert, H{\"a}ger, Lange, Kuckertz, Heldmann, Shao, et~al.]{hering2022learn2reg}
Alessa Hering, Lasse Hansen, Tony~CW Mok, Albert~CS Chung, Hanna Siebert, Stephanie H{\"a}ger, Annkristin Lange, Sven Kuckertz, Stefan Heldmann, Wei Shao, et~al.
\newblock {L}earn2{R}eg: {C}omprehensive multi-task medical image registration challenge, dataset and evaluation in the era of deep learning.
\newblock \emph{IEEE Transactions on Medical Imaging}, 2022.

\bibitem[Holden et~al.(2000)Holden, Hill, Denton, Jarosz, Cox, Rohlfing, Goodey, and Hawkes]{holden2000voxel}
Mark Holden, Derek~LG Hill, Erika~RE Denton, Jo~M Jarosz, Tim~CS Cox, Torsten Rohlfing, Joanne Goodey, and David~J Hawkes.
\newblock Voxel similarity measures for 3-d serial mr brain image registration.
\newblock \emph{IEEE transactions on medical imaging}, 19\penalty0 (2):\penalty0 94--102, 2000.

\bibitem[Hoopes et~al.(2021)Hoopes, Hoffmann, Fischl, Guttag, and Dalca]{hoopes2021hypermorph}
Andrew Hoopes, Malte Hoffmann, Bruce Fischl, John Guttag, and Adrian~V Dalca.
\newblock Hypermorph: {A}mortized hyperparameter learning for image registration.
\newblock In \emph{International Conference on Information Processing in Medical Imaging}, pages 3--17. Springer, 2021.

\bibitem[Hugo et~al.(2016)Hugo, Weiss, Sleeman, Balik, Keall, Lu, and Williamson]{Hugo2016-aa}
Geoffrey~D Hugo, Elisabeth Weiss, William~C Sleeman, Salim Balik, Paul~J Keall, Jun Lu, and Jeffrey~F Williamson.
\newblock Data from {4D} lung imaging of {NSCLC} patients, 2016.

\bibitem[Hugo et~al.(2017)Hugo, Weiss, Sleeman, Balik, Keall, Lu, and Williamson]{hugoetal2017}
Geoffrey~D. Hugo, Elisabeth Weiss, William~C. Sleeman, Salim Balik, Paul~J. Keall, Jun Lu, and Jeffrey~F. Williamson.
\newblock A longitudinal four-dimensional computed tomography and cone beam computed tomography dataset for image-guided radiation therapy research in lung cancer.
\newblock \emph{Medical Physics}, 44\penalty0 (2):\penalty0 762--771, 2017.
\newblock \doi{https://doi.org/10.1002/mp.12059}.
\newblock URL \url{https://aapm.onlinelibrary.wiley.com/doi/abs/10.1002/mp.12059}.

\bibitem[Iglesias(2023)]{iglesias2023ready}
Juan~Eugenio Iglesias.
\newblock A ready-to-use machine learning tool for symmetric multi-modality registration of brain mri.
\newblock \emph{Scientific Reports}, 13\penalty0 (1):\penalty0 6657, 2023.

\bibitem[Joshi and Hong(2023)]{joshi2023r2net}
Ankita Joshi and Yi~Hong.
\newblock R2net: Efficient and flexible diffeomorphic image registration using lipschitz continuous residual networks.
\newblock \emph{Medical Image Analysis}, 89:\penalty0 102917, 2023.

\bibitem[Kim et~al.(2019)Kim, Kim, Lee, Kim, Park, and Ye]{kim2019unsupervised}
Boah Kim, Jieun Kim, June-Goo Lee, Dong~Hwan Kim, Seong~Ho Park, and Jong~Chul Ye.
\newblock Unsupervised deformable image registration using cycle-consistent {CNN}.
\newblock In \emph{International Conference on Medical Image Computing and Computer-Assisted Intervention}, pages 166--174. Springer, 2019.

\bibitem[Klein et~al.(2009)Klein, Andersson, Ardekani, Ashburner, Avants, Chiang, Christensen, Collins, Gee, Hellier, et~al.]{klein2009evaluation}
Arno Klein, Jesper Andersson, Babak~A Ardekani, John Ashburner, Brian Avants, Ming-Chang Chiang, Gary~E Christensen, D~Louis Collins, James Gee, Pierre Hellier, et~al.
\newblock Evaluation of 14 nonlinear deformation algorithms applied to human brain mri registration.
\newblock \emph{Neuroimage}, 46\penalty0 (3):\penalty0 786--802, 2009.

\bibitem[Krebs et~al.(2018)Krebs, Mansi, Mailh{\'e}, Ayache, and Delingette]{krebs2018unsupervised}
Julian Krebs, Tommaso Mansi, Boris Mailh{\'e}, Nicholas Ayache, and Herv{\'e} Delingette.
\newblock Unsupervised probabilistic deformation modeling for robust diffeomorphic registration.
\newblock In \emph{Deep Learning in Medical Image Analysis and Multimodal Learning for Clinical Decision Support}, pages 101--109. Springer, 2018.

\bibitem[Krebs et~al.(2019)Krebs, Delingette, Mailh{\'e}, Ayache, and Mansi]{krebs2019learning}
Julian Krebs, Herv{\'e} Delingette, Boris Mailh{\'e}, Nicholas Ayache, and Tommaso Mansi.
\newblock Learning a probabilistic model for diffeomorphic registration.
\newblock \emph{IEEE transactions on medical imaging}, 38\penalty0 (9):\penalty0 2165--2176, 2019.

\bibitem[Mahapatra and Ge(2019)]{mahapatra2019training}
Dwarikanath Mahapatra and Zongyuan Ge.
\newblock Training data independent image registration with {GAN}s using transfer learning and segmentation information.
\newblock In \emph{2019 IEEE 16th International Symposium on Biomedical Imaging (ISBI 2019)}, pages 709--713. IEEE, 2019.

\bibitem[Marcus et~al.(2007)Marcus, Wang, Parker, Csernansky, Morris, and Buckner]{marcus2007open}
Daniel~S Marcus, Tracy~H Wang, Jamie Parker, John~G Csernansky, John~C Morris, and Randy~L Buckner.
\newblock {O}pen {A}ccess {S}eries of {I}maging {S}tudies ({OASIS}): {C}ross-sectional {MRI} data in young, middle aged, nondemented, and demented older adults.
\newblock \emph{Journal of cognitive neuroscience}, 19\penalty0 (9):\penalty0 1498--1507, 2007.

\bibitem[Mok and Chung(2020{\natexlab{a}})]{mok2020fast}
Tony~CW Mok and Albert Chung.
\newblock Fast symmetric diffeomorphic image registration with convolutional neural networks.
\newblock In \emph{Proceedings of the IEEE/CVF conference on computer vision and pattern recognition}, pages 4644--4653, 2020{\natexlab{a}}.

\bibitem[Mok and Chung(2020{\natexlab{b}})]{mok2020large}
Tony~CW Mok and Albert Chung.
\newblock Large deformation diffeomorphic image registration with laplacian pyramid networks.
\newblock In \emph{International Conference on Medical Image Computing and Computer-Assisted Intervention}, pages 211--221. Springer, 2020{\natexlab{b}}.

\bibitem[Mok and Chung(2021)]{mok2021conditional}
Tony~CW Mok and Albert Chung.
\newblock Conditional deformable image registration with convolutional neural network.
\newblock In \emph{International Conference on Medical Image Computing and Computer-Assisted Intervention}, pages 35--45. Springer, 2021.

\bibitem[Murphy et~al.(2011)Murphy, Van~Ginneken, Reinhardt, Kabus, Ding, Deng, Cao, Du, Christensen, Garcia, et~al.]{murphy2011evaluation}
Keelin Murphy, Bram Van~Ginneken, Joseph~M Reinhardt, Sven Kabus, Kai Ding, Xiang Deng, Kunlin Cao, Kaifang Du, Gary~E Christensen, Vincent Garcia, et~al.
\newblock Evaluation of registration methods on thoracic ct: the empire10 challenge.
\newblock \emph{IEEE transactions on medical imaging}, 30\penalty0 (11):\penalty0 1901--1920, 2011.

\bibitem[Niethammer et~al.(2019)Niethammer, Kwitt, and Vialard]{niethammer2019metric}
Marc Niethammer, Roland Kwitt, and Francois-Xavier Vialard.
\newblock Metric learning for image registration.
\newblock In \emph{Proceedings of the IEEE/CVF Conference on Computer Vision and Pattern Recognition}, pages 8463--8472, 2019.

\bibitem[{NLST Research Team}(2011)]{doi:10.1056/NEJMoa1102873}
{NLST Research Team}.
\newblock Reduced lung-cancer mortality with low-dose computed tomographic screening.
\newblock \emph{New England Journal of Medicine}, 365\penalty0 (5):\penalty0 395--409, 2011.
\newblock \doi{10.1056/NEJMoa1102873}.
\newblock URL \url{https://www.nejm.org/doi/full/10.1056/NEJMoa1102873}.

\bibitem[{NLST Research Team}(2013)]{nlst2013}
{NLST Research Team}.
\newblock Data from the national lung screening trial ({NLST}).
\newblock \url{https://doi.org/10.7937/TCIA.HMQ8-J677}, 2013.
\newblock The Cancer Imaging Archive.

\bibitem[Oliveira and Tavares(2014)]{oliveira2014medical}
Francisco~PM Oliveira and Joao Manuel~RS Tavares.
\newblock Medical image registration: a review.
\newblock \emph{Computer methods in biomechanics and biomedical engineering}, 17\penalty0 (2):\penalty0 73--93, 2014.

\bibitem[Paszke et~al.(2019)Paszke, Gross, Massa, Lerer, Bradbury, Chanan, Killeen, Lin, Gimelshein, Antiga, Desmaison, Kopf, Yang, DeVito, Raison, Tejani, Chilamkurthy, Steiner, Fang, Bai, and Chintala]{pytorch}
Adam Paszke, Sam Gross, Francisco Massa, Adam Lerer, James Bradbury, Gregory Chanan, Trevor Killeen, Zeming Lin, Natalia Gimelshein, Luca Antiga, Alban Desmaison, Andreas Kopf, Edward Yang, Zachary DeVito, Martin Raison, Alykhan Tejani, Sasank Chilamkurthy, Benoit Steiner, Lu~Fang, Junjie Bai, and Soumith Chintala.
\newblock Pytorch: An imperative style, high-performance deep learning library.
\newblock In \emph{Advances in Neural Information Processing Systems 32}, pages 8024--8035. Curran Associates, Inc., 2019.
\newblock URL \url{http://papers.neurips.cc/paper/9015-pytorch-an-imperative-style-high-performance-deep-learning-library.pdf}.

\bibitem[Pluim et~al.(2016)Pluim, Muenzing, Eppenhof, and Murphy]{pluim2016truth}
Josien~PW Pluim, Sascha~EA Muenzing, Koen~AJ Eppenhof, and Keelin Murphy.
\newblock The truth is hard to make: Validation of medical image registration.
\newblock In \emph{2016 23rd International Conference on Pattern Recognition (ICPR)}, pages 2294--2300. IEEE, 2016.

\bibitem[Ramon et~al.(2022)Ramon, Hernandez, and Mayordomo]{ramon2022lddmm}
Ubaldo Ramon, Monica Hernandez, and Elvira Mayordomo.
\newblock Lddmm meets gans: Generative adversarial networks for diffeomorphic registration.
\newblock In \emph{International Workshop on Biomedical Image Registration}, pages 18--28. Springer, 2022.

\bibitem[Rohlfing(2011)]{rohlfing2011image}
Torsten Rohlfing.
\newblock Image similarity and tissue overlaps as surrogates for image registration accuracy: widely used but unreliable.
\newblock \emph{IEEE transactions on medical imaging}, 31\penalty0 (2):\penalty0 153--163, 2011.

\bibitem[Roman et~al.(2012)Roman, Shepherd, Mukhopadhyay, Hugo, and Weiss]{ROMAN20121566}
Nicholas~O. Roman, Wes Shepherd, Nitai Mukhopadhyay, Geoffrey~D. Hugo, and Elisabeth Weiss.
\newblock Interfractional positional variability of fiducial markers and primary tumors in locally advanced non-small-cell lung cancer during audiovisual biofeedback radiotherapy.
\newblock \emph{International Journal of Radiation Oncology*Biology*Physics}, 83\penalty0 (5):\penalty0 1566--1572, 2012.
\newblock ISSN 0360-3016.
\newblock \doi{https://doi.org/10.1016/j.ijrobp.2011.10.051}.
\newblock URL \url{https://www.sciencedirect.com/science/article/pii/S0360301611034596}.

\bibitem[Rueckert et~al.(1999)Rueckert, Sonoda, Hayes, Hill, Leach, and Hawkes]{rueckert1999nonrigid}
Daniel Rueckert, Luke~I Sonoda, Carmel Hayes, Derek~LG Hill, Martin~O Leach, and David~J Hawkes.
\newblock Nonrigid registration using free-form deformations: {A}pplication to breast {MR} images.
\newblock \emph{IEEE Transactions on Medical Imaging}, 18\penalty0 (8):\penalty0 712--721, 1999.

\bibitem[Rueckert et~al.(2006)Rueckert, Aljabar, Heckemann, Hajnal, and Hammers]{rueckert2006diffeomorphic}
Daniel Rueckert, Paul Aljabar, Rolf~A Heckemann, Joseph~V Hajnal, and Alexander Hammers.
\newblock Diffeomorphic registration using {B}-splines.
\newblock In \emph{International Conference on Medical Image Computing and Computer-Assisted Intervention}, pages 702--709. Springer, 2006.

\bibitem[Shattuck et~al.(2008)Shattuck, Mirza, Adisetiyo, Hojatkashani, Salamon, Narr, Poldrack, Bilder, and Toga]{shattuck2008construction}
David~W Shattuck, Mubeena Mirza, Vitria Adisetiyo, Cornelius Hojatkashani, Georges Salamon, Katherine~L Narr, Russell~A Poldrack, Robert~M Bilder, and Arthur~W Toga.
\newblock Construction of a 3{D} probabilistic atlas of human cortical structures.
\newblock \emph{Neuroimage}, 39\penalty0 (3):\penalty0 1064--1080, 2008.

\bibitem[Shen et~al.(2019{\natexlab{a}})Shen, Han, Xu, and Niethammer]{shen2019networks}
Zhengyang Shen, Xu~Han, Zhenlin Xu, and Marc Niethammer.
\newblock Networks for joint affine and non-parametric image registration.
\newblock In \emph{Proceedings of the IEEE/CVF Conference on Computer Vision and Pattern Recognition}, pages 4224--4233, 2019{\natexlab{a}}.

\bibitem[Shen et~al.(2019{\natexlab{b}})Shen, Vialard, and Niethammer]{shen2019region}
Zhengyang Shen, Fran{\c{c}}ois-Xavier Vialard, and Marc Niethammer.
\newblock Region-specific diffeomorphic metric mapping.
\newblock \emph{Advances in Neural Information Processing Systems}, 32, 2019{\natexlab{b}}.

\bibitem[Sotiras et~al.(2013)Sotiras, Davatzikos, and Paragios]{sotiras2013deformable}
Aristeidis Sotiras, Christos Davatzikos, and Nikos Paragios.
\newblock Deformable medical image registration: A survey.
\newblock \emph{IEEE transactions on medical imaging}, 32\penalty0 (7):\penalty0 1153--1190, 2013.

\bibitem[Vercauteren et~al.(2009)Vercauteren, Pennec, Perchant, and Ayache]{vercauteren2009diffeomorphic}
Tom Vercauteren, Xavier Pennec, Aymeric Perchant, and Nicholas Ayache.
\newblock Diffeomorphic demons: Efficient non-parametric image registration.
\newblock \emph{NeuroImage}, 45\penalty0 (1):\penalty0 S61--S72, 2009.

\bibitem[Walker and Ni(2011)]{walker2011anderson}
Homer~F Walker and Peng Ni.
\newblock Anderson acceleration for fixed-point iterations.
\newblock \emph{SIAM Journal on Numerical Analysis}, 49\penalty0 (4):\penalty0 1715--1735, 2011.

\bibitem[Wang and Zhang(2020)]{wang2020deepflash}
Jian Wang and Miaomiao Zhang.
\newblock Deepflash: An efficient network for learning-based medical image registration.
\newblock In \emph{Proceedings of the IEEE/CVF conference on computer vision and pattern recognition}, pages 4444--4452, 2020.

\bibitem[Wang et~al.(2023)Wang, Xing, Druzgal, Wells~III, and Zhang]{wang2023metamorph}
Jian Wang, Jiarui Xing, Jason Druzgal, William~M Wells~III, and Miaomiao Zhang.
\newblock Metamorph: learning metamorphic image transformation with appearance changes.
\newblock In \emph{International Conference on Information Processing in Medical Imaging}, pages 576--587. Springer, 2023.

\bibitem[Wu et~al.(2022)Wu, Jiahao, Wang, Yushkevich, Hsieh, and Gee]{wu2022nodeo}
Yifan Wu, Tom~Z Jiahao, Jiancong Wang, Paul~A Yushkevich, M~Ani Hsieh, and James~C Gee.
\newblock Nodeo: A neural ordinary differential equation based optimization framework for deformable image registration.
\newblock In \emph{Proceedings of the IEEE/CVF Conference on Computer Vision and Pattern Recognition}, pages 20804--20813, 2022.

\bibitem[Young et~al.(2022)Young, Balbastre, Dalca, Wells, Iglesias, and Fischl]{young2022superwarp}
Sean~I Young, Ya{\"e}l Balbastre, Adrian~V Dalca, William~M Wells, Juan~Eugenio Iglesias, and Bruce Fischl.
\newblock {S}uper{W}arp: {S}upervised learning and warping on {U}-{N}et for invariant subvoxel-precise registration.
\newblock \emph{arXiv preprint arXiv:2205.07399}, 2022.

\bibitem[Zhang(2018)]{zhang2018inverse}
Jun Zhang.
\newblock Inverse-consistent deep networks for unsupervised deformable image registration.
\newblock \emph{arXiv preprint arXiv:1809.03443}, 2018.

\bibitem[Zhang et~al.(2023)Zhang, Ning, Zhou, and Liao]{zhang2023symmetric}
Liutong Zhang, Guochen Ning, Lei Zhou, and Hongen Liao.
\newblock Symmetric pyramid network for medical image inverse consistent diffeomorphic registration.
\newblock \emph{Computerized Medical Imaging and Graphics}, 104:\penalty0 102184, 2023.

\bibitem[Zheng et~al.(2021)Zheng, Sui, Jiang, Che, Zhang, Yang, and Li]{zheng2021symreg}
Yuanjie Zheng, Xiaodan Sui, Yanyun Jiang, Tongtong Che, Shaoting Zhang, Jie Yang, and Hongsheng Li.
\newblock Symreg-gan: symmetric image registration with generative adversarial networks.
\newblock \emph{IEEE transactions on pattern analysis and machine intelligence}, 44\penalty0 (9):\penalty0 5631--5646, 2021.

\end{thebibliography}


\clearpage
\renewcommand{\theHsection}{A\arabic{section}}
\appendix

\section{Hyperparameter selection details}\label{appendix:hyperparameter_selection}

We experimented on validation set with different hyperparameters during the development. While the final results on test set are computed only for one chosen configuration, the results on validation set might still be of interest for the reader. Results of these experiments for the pre-aligned OASIS dataset are shown in Table \ref{appendix-table:hyperparameter_optimization_oasis}, for the LPBA40 dataset in Table \ref{appendix-table:hyperparameter_optimization_lpba40}, and for the Lung250M-4B dataset in Table \ref{appendix-table:hyperparameter_optimization_lung250m_4b}.

For the OASIS raw dataset without pre-alignment we used $6$ resolution levels, together with an affine transformation prediction stage before the other deformation updates. We omitted the predicted affine transformation from the deformation regularization. The same regularization weight was used as for the pre-aligned OASIS dataset.

\begin{table}[h]
\centering
\setlength\tabcolsep{3pt}
\scriptsize
\caption{Hyperparameter optimization results for our method calculated on the OASIS validation set. The chosen configuration was $\lambda = 1.0$, and $K=4$. HD95 metric is not included due to relatively high computational cost.}
\begin{tabular}{cccccccc}
\toprule
\multicolumn{2}{c}{Hyperparameters}  & \multicolumn{1}{c}{Accuracy} & \multicolumn{2}{c}{Deformation regularity} & \multicolumn{2}{c}{Consistency}\\
\cmidrule(r){1-2} \cmidrule(r){3-3} \cmidrule(r){4-5} \cmidrule(r){6-7}
$\lambda$  & $K$ & Dice $\uparrow$ & $\%$ of $|J_{\phi}|_{\leq 0} \downarrow$ & $\operatorname{std}(|J_{\phi}|) \downarrow$ &       Cycle $\downarrow$        &      Inverse $\downarrow$       \\
 \midrule 1.0    &  5  & $0.822 (0.035)$ & $\expnumber{9.1}{-3} (\expnumber{1.7}{-3})$ &               $0.45 (0.027)$                & $\expnumber{5.7}{-3} (\expnumber{6.0}{-4})$ & $\expnumber{5.7}{-3} (\expnumber{6.0}{-4})$ \\
    1.5    &  5  & $0.818 (0.034)$ & $\expnumber{1.9}{-3} (\expnumber{5.1}{-4})$ &               $0.40 (0.023)$                & $\expnumber{3.7}{-3} (\expnumber{3.4}{-4})$ & $\expnumber{3.7}{-3} (\expnumber{3.4}{-4})$ \\
    2.0    &  5  & $0.815 (0.035)$ & $\expnumber{3.7}{-4} (\expnumber{2.0}{-4})$ &               $0.37 (0.021)$                & $\expnumber{2.6}{-3} (\expnumber{2.1}{-4})$ & $\expnumber{2.6}{-3} (\expnumber{2.1}{-4})$ \\
    1.0    &  4  & $0.822 (0.034)$ & $\expnumber{8.2}{-3} (\expnumber{1.5}{-3})$ &               $0.44 (0.028)$                & $\expnumber{5.5}{-3} (\expnumber{5.6}{-4})$ & $\expnumber{5.5}{-3} (\expnumber{5.6}{-4})$ \\
    1.5    &  4  & $0.819 (0.035)$ & $\expnumber{2.1}{-3} (\expnumber{5.8}{-4})$ &               $0.40 (0.023)$                & $\expnumber{3.4}{-3} (\expnumber{3.3}{-4})$ & $\expnumber{3.4}{-3} (\expnumber{3.3}{-4})$ \\
    2.0    &  4  & $0.815 (0.036)$ & $\expnumber{3.6}{-4} (\expnumber{2.1}{-4})$ &               $0.37 (0.020)$                & $\expnumber{2.6}{-3} (\expnumber{2.2}{-4})$ & $\expnumber{2.6}{-3} (\expnumber{2.2}{-4})$ \\
\bottomrule
\end{tabular}
\label{appendix-table:hyperparameter_optimization_oasis}
\end{table}

\begin{table}[h]
\centering
\setlength\tabcolsep{3pt}
\scriptsize
\vskip -1em
\caption{Hyperparameter optimization results for our method calculated on the LPBA40 validation set. The chosen configuration was $\lambda = 1.0$, $K=7$, and $\text{Affine} = \text{No}$.}
\begin{tabular}{ccccccccccc}
\toprule
\multicolumn{3}{c}{Hyperparameters}  & \multicolumn{2}{c}{Accuracy} & \multicolumn{2}{c}{Deformation regularity} & \multicolumn{2}{c}{Consistency}\\
\cmidrule(r){1-3} \cmidrule(r){4-5} \cmidrule(r){6-7} \cmidrule(r){8-9}
$\lambda$  & $K$ & Affine & Dice $\uparrow$ & HD95 $\downarrow$ & $\%$ of $|J_{\phi}|_{\leq 0} \downarrow$ & $\operatorname{std}(|J_{\phi}|) \downarrow$ &       Cycle $\downarrow$        &      Inverse $\downarrow$       \\
 \midrule 1.0 &  4  &   No   & $0.710 (0.015)$ &   $6.10 (0.46)$   & $\expnumber{2.5}{-3} (\expnumber{8.7}{-4})$ &               $0.31 (0.020)$                & $\expnumber{2.5}{-3} (\expnumber{3.5}{-4})$ & $\expnumber{2.5}{-3} (\expnumber{3.5}{-4})$ \\
    1.0     &  5  &   No   & $0.720 (0.014)$ &   $5.83 (0.36)$   & $\expnumber{1.7}{-3} (\expnumber{5.7}{-4})$ &               $0.30 (0.019)$                & $\expnumber{2.3}{-3} (\expnumber{3.1}{-4})$ & $\expnumber{2.3}{-3} (\expnumber{3.1}{-4})$ \\
    1.0     &  6  &   No   & $0.725 (0.012)$ &   $5.70 (0.31)$   & $\expnumber{2.1}{-3} (\expnumber{4.8}{-4})$ &               $0.29 (0.019)$                & $\expnumber{2.3}{-3} (\expnumber{3.0}{-4})$ & $\expnumber{2.3}{-3} (\expnumber{3.0}{-4})$ \\
    1.0     &  7  &   No   & $0.726 (0.011)$ &   $5.69 (0.30)$   & $\expnumber{1.9}{-3} (\expnumber{5.3}{-4})$ &               $0.29 (0.019)$                & $\expnumber{2.3}{-3} (\expnumber{3.1}{-4})$ & $\expnumber{2.3}{-3} (\expnumber{3.1}{-4})$ \\
    1.0     &  5  &  Yes   & $0.719 (0.014)$ &   $5.86 (0.35)$   & $\expnumber{2.2}{-3} (\expnumber{7.2}{-4})$ &               $0.30 (0.019)$                & $\expnumber{2.4}{-3} (\expnumber{3.3}{-4})$ & $\expnumber{2.4}{-3} (\expnumber{3.3}{-4})$ \\
    1.0     &  6  &  Yes   & $0.721 (0.015)$ &   $5.78 (0.37)$   & $\expnumber{2.6}{-3} (\expnumber{5.8}{-4})$ &               $0.30 (0.019)$                & $\expnumber{2.4}{-3} (\expnumber{3.2}{-4})$ & $\expnumber{2.4}{-3} (\expnumber{3.2}{-4})$ \\
    2.0     &  4  &   No   & $0.703 (0.018)$ &   $6.20 (0.50)$   & $\expnumber{5.0}{-5} (\expnumber{8.1}{-5})$ &               $0.25 (0.017)$                & $\expnumber{1.2}{-3} (\expnumber{1.5}{-4})$ & $\expnumber{1.2}{-3} (\expnumber{1.5}{-4})$ \\
    2.0     &  5  &   No   & $0.718 (0.014)$ &   $5.84 (0.35)$   & $\expnumber{6.5}{-5} (\expnumber{7.9}{-5})$ &               $0.25 (0.016)$                & $\expnumber{1.1}{-3} (\expnumber{1.5}{-4})$ & $\expnumber{1.1}{-3} (\expnumber{1.5}{-4})$ \\
    2.0     &  6  &   No   & $0.722 (0.012)$ &   $5.76 (0.30)$   & $\expnumber{5.0}{-5} (\expnumber{7.4}{-5})$ &               $0.24 (0.016)$                & $\expnumber{1.1}{-3} (\expnumber{1.5}{-4})$ & $\expnumber{1.1}{-3} (\expnumber{1.5}{-4})$ \\
    2.0     &  7  &   No   & $0.721 (0.012)$ &   $5.77 (0.30)$   & $\expnumber{4.0}{-5} (\expnumber{5.8}{-5})$ &               $0.25 (0.016)$                & $\expnumber{1.2}{-3} (\expnumber{1.5}{-4})$ & $\expnumber{1.2}{-3} (\expnumber{1.5}{-4})$ \\
    2.0     &  5  &  Yes   & $0.715 (0.014)$ &   $5.95 (0.40)$   & $\expnumber{4.0}{-5} (\expnumber{8.6}{-5})$ &               $0.24 (0.016)$                & $\expnumber{1.1}{-3} (\expnumber{1.4}{-4})$ & $\expnumber{1.1}{-3} (\expnumber{1.4}{-4})$ \\
    2.0     &  6  &  Yes   & $0.721 (0.014)$ &   $5.77 (0.34)$   & $\expnumber{5.0}{-5} (\expnumber{7.4}{-5})$ &               $0.25 (0.017)$                & $\expnumber{1.1}{-3} (\expnumber{1.4}{-4})$ & $\expnumber{1.1}{-3} (\expnumber{1.4}{-4})$ \\
\bottomrule
\end{tabular}
\label{appendix-table:hyperparameter_optimization_lpba40}
\end{table}

\clearpage
\begin{table}[t]
\centering
\setlength\tabcolsep{3pt}
\vskip -1em
\scriptsize
\caption{Hyperparameter optimization results for our method calculated on the Lung250M-4b validation set. The chosen configuration was $\lambda = 1.0$, $K=6$.}
\begin{tabular}{ccccccccccc}
\toprule
\multicolumn{2}{c}{Hyperparameters}  & \multicolumn{1}{c}{Accuracy} & \multicolumn{2}{c}{Deformation regularity} & \multicolumn{2}{c}{Consistency}\\
\cmidrule(r){1-2}  \cmidrule(r){3-3}  \cmidrule(r){4-5} \cmidrule(r){6-7}
$\lambda$  & $K$ & TRE $\downarrow$ & $\%$ of $|J_{\phi}|_{\leq 0} \downarrow$ & $\operatorname{std}(|J_{\phi}|) \downarrow$ &       Cycle $\downarrow$        &      Inverse $\downarrow$       \\
 \midrule 1.0    &  5  &  $3.05 (0.84)$   & $\expnumber{1.2}{-5} (\expnumber{4.1}{-5})$ &                $0.21 (0.11)$                & $\expnumber{1.1}{-2} (\expnumber{4.1}{-3})$ & $\expnumber{1.1}{-2} (\expnumber{4.1}{-3})$ \\
    1.0    &  6  &  $2.77 (0.65)$   & $\expnumber{6.2}{-6} (\expnumber{2.4}{-5})$ &                $0.20 (0.11)$                & $\expnumber{9.0}{-3} (\expnumber{3.1}{-3})$ & $\expnumber{9.0}{-3} (\expnumber{3.1}{-3})$ \\
    2.0    &  6  &  $2.84 (0.68)$   &  $\expnumber{0.0}{0} (\expnumber{0.0}{0})$  &               $0.18 (0.088)$                & $\expnumber{7.0}{-3} (\expnumber{1.9}{-3})$ & $\expnumber{7.0}{-3} (\expnumber{1.9}{-3})$ \\
\bottomrule
\end{tabular}
\label{appendix-table:hyperparameter_optimization_lung250m_4b}
\end{table}
\section{Hyperparameter selection details for baselines}\label{appendix:baseline_hyperparameter}

For cLapIRN baseline we used the regularization parameter value $\overline{\lambda} = 0.05$ for the OASIS datasets, value $\overline{\lambda} = 0.1$ for the LPBA40 dataset, and $\overline{\lambda}=0.01$ for the Lung250M-4b dataset where $\overline{\lambda}$ is used as in the paper presenting the method \citep{mok2021conditional}. The values were chosen based on the validation set results shown in Tables \ref{appendix-table:clapirn_hyperparameter_optimization_oasis}, \ref{appendix-table:clapirn_hyperparameter_optimization_oasis_raw}, \ref{appendix-table:clapirn_hyperparameter_optimization_lpba40}, and \ref{appendix-table:clapirn_hyperparameter_optimization_lung250m_4b}.

For ByConstructionICON baseline we used regularization parameter $\lambda = 0.5$ for the OASIS datasets, $\lambda = 1.0$ for the LPBA40 dataset, and $\lambda = 5.0$ for the Lung250M-4B dataset. The values were chosen based on the validation set results shown in Tables \ref{appendix-table:byconstructionicon_hyperparameter_optimization_oasis}, \ref{appendix-table:byconstructionicon_hyperparameter_optimization_lpba40}, and \ref{appendix-table:byconstructionicon_hyperparameter_optimization_lung250m_4b}.

We trained VoxelMorph with losses and regularization weight identical to our method and for SYMNet we used hyperparameters directly provided by \citet{mok2020fast}. We used the default number of convolution features for the baselines except for VoxelMorph we doubled the number of features, as that was suggested in the paper \citep{balakrishnan2019voxelmorph}.

\begin{table}[h]
\centering
\setlength\tabcolsep{3pt}
\scriptsize
\caption{Regularization parameter optimization results for cLapIRN calculated on the pre-aligned OASIS validation set. Here $\overline{\lambda}$ refers to the normalized regularization weight of the gradient loss of cLapIRN and should be in range $[0,\ 1]$. Value $\overline{\lambda} = 0.05$ was chosen. HD95 metric is not included due to relatively high computational cost.}
\begin{tabular}{cccccc}
\toprule
Hyperparameters & \multicolumn{1}{c}{Accuracy} & \multicolumn{2}{c}{Deformation regularity} & Consistency\\
\cmidrule(r){1-1} \cmidrule(r){2-3} \cmidrule(r){4-5} \cmidrule(r){6-6}
$\overline{\lambda}$ & Dice $\uparrow$ & $|J_{\phi}|_{\leq 0} \downarrow$ & $\operatorname{std}(|J_{\phi}|) \downarrow$ &       Cycle $\downarrow$     \\
\midrule
     0.01      & $0.812 (0.034)$ & $\expnumber{2.5}{0} (\expnumber{2.9}{-1})$  &               $0.82 (0.048)$                & $\expnumber{1.7}{0} (\expnumber{1.5}{-1})$  \\
         0.05         & $0.817 (0.034)$ & $\expnumber{1.1}{0} (\expnumber{1.8}{-1})$  &               $0.56 (0.029)$                & $\expnumber{1.2}{0} (\expnumber{1.3}{-1})$  \\
         0.1          & $0.812 (0.035)$ & $\expnumber{4.2}{-1} (\expnumber{1.1}{-1})$ &               $0.43 (0.020)$                & $\expnumber{8.9}{-1} (\expnumber{1.1}{-1})$ \\
         0.2          & $0.798 (0.038)$ & $\expnumber{7.2}{-2} (\expnumber{3.9}{-2})$ &               $0.30 (0.013)$                & $\expnumber{6.0}{-1} (\expnumber{8.3}{-2})$ \\
         0.4          & $0.769 (0.042)$ & $\expnumber{1.4}{-3} (\expnumber{1.7}{-3})$ &               $0.18 (0.0087)$               & $\expnumber{3.5}{-1} (\expnumber{4.4}{-2})$ \\
         0.8          & $0.727 (0.049)$ & $\expnumber{3.4}{-6} (\expnumber{2.2}{-5})$ &               $0.10 (0.0050)$               & $\expnumber{2.5}{-1} (\expnumber{3.8}{-2})$ \\
         1.0          & $0.711 (0.052)$ & $\expnumber{1.3}{-6} (\expnumber{1.7}{-5})$ &              $0.082 (0.0042)$               & $\expnumber{2.3}{-1} (\expnumber{3.8}{-2})$ \\
\bottomrule
\end{tabular}

\label{appendix-table:clapirn_hyperparameter_optimization_oasis}
\end{table}

\begin{table}[h]
\centering
\setlength\tabcolsep{3pt}
\scriptsize
\caption{Regularization parameter optimization results for cLapIRN calculated on the OASIS raw validation set. The table is interpreted similarly to Table \ref{appendix-table:clapirn_hyperparameter_optimization_oasis}. Value $\overline{\lambda} = 0.05$ was chosen since it resulted in clearly the highest Dice score. HD95 metric is not included due to relatively high computational cost.}
\begin{tabular}{cccccc}
\toprule
Hyperparameters & \multicolumn{1}{c}{Accuracy} & \multicolumn{2}{c}{Deformation regularity} & Consistency\\
\cmidrule(r){1-1} \cmidrule(r){2-3} \cmidrule(r){4-5} \cmidrule(r){6-6}
$\overline{\lambda}$ & Dice $\uparrow$ & $|J_{\phi}|_{\leq 0} \downarrow$ & $\operatorname{std}(|J_{\phi}|) \downarrow$ &       Cycle $\downarrow$     \\
\midrule
     0.01      & $0.736 (0.11)$  & $\expnumber{4.9}{-1} (\expnumber{1.3}{-1})$ &               $0.36 (0.040)$                & $\expnumber{3.1}{0} (\expnumber{1.9}{0})$ \\
         0.02         & $0.738 (0.11)$  & $\expnumber{5.1}{-1} (\expnumber{1.3}{-1})$ &               $0.36 (0.038)$                & $\expnumber{3.2}{0} (\expnumber{2.2}{0})$ \\
         0.05         & $0.740 (0.11)$  & $\expnumber{2.9}{-1} (\expnumber{8.0}{-2})$ &               $0.28 (0.028)$                & $\expnumber{2.9}{0} (\expnumber{2.1}{0})$ \\
         0.1          & $0.733 (0.12)$  & $\expnumber{9.7}{-2} (\expnumber{3.4}{-2})$ &               $0.21 (0.019)$                & $\expnumber{2.6}{0} (\expnumber{2.1}{0})$ \\
\bottomrule
\end{tabular}

\label{appendix-table:clapirn_hyperparameter_optimization_oasis_raw}
\end{table}

\begin{table}[h]
\centering
\setlength\tabcolsep{3pt}
\scriptsize
\caption{Regularization parameter optimization results for cLapIRN calculated on the LPBA40 validation set. The table is interpreted similarly to Table \ref{appendix-table:clapirn_hyperparameter_optimization_oasis}. Value $\overline{\lambda} = 0.1$ was chosen due to the best overall performance.}
\begin{tabular}{cccccc}
\toprule
Hyperparameters & \multicolumn{2}{c}{Accuracy} & \multicolumn{2}{c}{Deformation regularity} & Consistency\\
\cmidrule(r){1-1} \cmidrule(r){2-3} \cmidrule(r){4-5} \cmidrule(r){6-6}
$\overline{\lambda}$ & Dice $\uparrow$ & HD95 $\downarrow$ & $|J_{\phi}|_{\leq 0} \downarrow$ & $\operatorname{std}(|J_{\phi}|) \downarrow$ &       Cycle $\downarrow$     \\
\midrule 0.01      & $0.714 (0.014)$ & $\expnumber{9.9}{-1} (\expnumber{1.5}{-1})$ &               $0.45 (0.029)$                &               $0.45 (0.029)$                & $\expnumber{9.9}{-1} (\expnumber{2.2}{-1})$ \\
         0.05         & $0.715 (0.014)$ & $\expnumber{3.2}{-1} (\expnumber{6.8}{-2})$ &               $0.33 (0.018)$                &               $0.33 (0.018)$                & $\expnumber{8.0}{-1} (\expnumber{2.1}{-1})$ \\
         0.1          & $0.714 (0.014)$ & $\expnumber{7.4}{-2} (\expnumber{2.4}{-2})$ &               $0.25 (0.012)$                &               $0.25 (0.012)$                & $\expnumber{6.6}{-1} (\expnumber{2.1}{-1})$ \\
         0.2          & $0.709 (0.015)$ & $\expnumber{4.4}{-3} (\expnumber{2.4}{-3})$ &               $0.19 (0.0090)$               &               $0.19 (0.0090)$               & $\expnumber{4.9}{-1} (\expnumber{1.9}{-1})$ \\
         0.4          & $0.698 (0.017)$ & $\expnumber{3.5}{-5} (\expnumber{5.7}{-5})$ &               $0.13 (0.0071)$               &               $0.13 (0.0071)$               & $\expnumber{3.6}{-1} (\expnumber{1.9}{-1})$ \\
         0.8          & $0.678 (0.019)$ & $\expnumber{5.0}{-6} (\expnumber{2.2}{-5})$ &              $0.085 (0.0062)$               &              $0.085 (0.0062)$               & $\expnumber{3.0}{-1} (\expnumber{1.9}{-1})$ \\
         1.0          & $0.671 (0.021)$ & $\expnumber{5.0}{-6} (\expnumber{2.2}{-5})$ &              $0.074 (0.0061)$               &              $0.074 (0.0061)$               & $\expnumber{3.0}{-1} (\expnumber{1.9}{-1})$ \\
\bottomrule
\end{tabular}
\label{appendix-table:clapirn_hyperparameter_optimization_lpba40}
\end{table}

\begin{table}[h]
\centering
\setlength\tabcolsep{3pt}
\scriptsize
\caption{Regularization parameter optimization results for cLapIRN calculated on the Lung250M-4B validation set. The table is interpreted similarly to Table \ref{appendix-table:clapirn_hyperparameter_optimization_oasis}. Value $\overline{\lambda} = 0.01$ was chosen due to the best overall performance.}
\begin{tabular}{cccccc}
\toprule
Hyperparameters & \multicolumn{1}{c}{Accuracy} & \multicolumn{2}{c}{Deformation regularity} & Consistency\\
\cmidrule(r){1-1}  \cmidrule(r){2-2} \cmidrule(r){3-4} \cmidrule(r){5-5}
$\overline{\lambda}$ & TRE $\downarrow$ & $|J_{\phi}|_{\leq 0} \downarrow$ & $\operatorname{std}(|J_{\phi}|) \downarrow$ &       Cycle $\downarrow$     \\
\midrule 0.01      &   $4.33 (1.5)$   & $\expnumber{1.7}{-3} (\expnumber{2.1}{-3})$ &               $0.20 (0.078)$                & $\expnumber{5.8}{0} (\expnumber{6.0}{0})$ \\
         0.05         &   $4.35 (1.6)$   & $\expnumber{1.4}{-3} (\expnumber{1.7}{-3})$ &               $0.19 (0.073)$                & $\expnumber{5.8}{0} (\expnumber{6.3}{0})$ \\
         0.1          &   $4.41 (1.7)$   & $\expnumber{8.9}{-4} (\expnumber{1.2}{-3})$ &               $0.18 (0.066)$                & $\expnumber{5.9}{0} (\expnumber{6.8}{0})$ \\
         0.2          &   $4.67 (2.1)$   & $\expnumber{3.9}{-4} (\expnumber{5.9}{-4})$ &               $0.17 (0.055)$                & $\expnumber{6.1}{0} (\expnumber{8.1}{0})$ \\
         0.4          &   $5.51 (3.6)$   & $\expnumber{1.5}{-4} (\expnumber{2.9}{-4})$ &               $0.14 (0.039)$                & $\expnumber{6.9}{0} (\expnumber{1.1}{1})$ \\
         0.8          &   $7.00 (5.7)$   & $\expnumber{1.1}{-4} (\expnumber{2.1}{-4})$ &               $0.13 (0.025)$                & $\expnumber{3.7}{0} (\expnumber{3.6}{0})$ \\
         1.0          &   $7.39 (5.9)$   & $\expnumber{7.3}{-5} (\expnumber{2.0}{-4})$ &               $0.12 (0.022)$                & $\expnumber{3.0}{0} (\expnumber{1.7}{0})$ \\
\bottomrule
\end{tabular}
\label{appendix-table:clapirn_hyperparameter_optimization_lung250m_4b}
\end{table}

\begin{table}[h]
\centering
\setlength\tabcolsep{3pt}
\scriptsize
\caption{Regularization parameter optimization results for ByConstructionICON calculated on the pre-aligned OASIS validation set. Here $\lambda$ refers to the weight of the regularizing bending energy loss used by the method. Value $\lambda = 0.5$ was chosen due to the best performance (the paper \citep{greer2023inverse} used $\lambda = 5.0$ for the OASIS dataset which we found to be suboptimal). HD95 metric is not included due to relatively high computational cost.}
\begin{tabular}{cccccc}
\toprule
Hyperparameters & \multicolumn{1}{c}{Accuracy} & \multicolumn{2}{c}{Deformation regularity} & \multicolumn{2}{c}{Consistency}\\
\cmidrule(r){1-1} \cmidrule(r){2-2} \cmidrule(r){3-4} \cmidrule(r){5-6}
$\overline{\lambda}$ & Dice $\uparrow$ & $|J_{\phi}|_{\leq 0} \downarrow$ & $\operatorname{std}(|J_{\phi}|) \downarrow$ &       Cycle $\downarrow$ & Inverse $\downarrow$     \\
\midrule 0.5 & $0.818 (0.031)$ & $\expnumber{2.5}{-2} (\expnumber{6.1}{-3})$ &               $0.48 (0.045)$                & $\expnumber{5.4}{-3} (\expnumber{1.0}{-3})$ & $\expnumber{5.4}{-3} (\expnumber{1.0}{-3})$ \\
    1.0     & $0.815 (0.031)$ & $\expnumber{6.6}{-3} (\expnumber{2.3}{-3})$ &               $0.43 (0.036)$                & $\expnumber{2.8}{-3} (\expnumber{4.9}{-4})$ &$\expnumber{2.8}{-3} (\expnumber{4.9}{-4})$ \\
    5.0     & $0.796 (0.033)$ & $\expnumber{3.9}{-6} (\expnumber{2.1}{-5})$ &               $0.29 (0.021)$                & $\expnumber{4.1}{-4} (\expnumber{5.0}{-5})$ & $\expnumber{4.1}{-4} (\expnumber{5.0}{-5})$ \\
\bottomrule
\end{tabular}
\label{appendix-table:byconstructionicon_hyperparameter_optimization_oasis}
\end{table}

\begin{table}[h]
\centering
\setlength\tabcolsep{3pt}
\scriptsize
\caption{Regularization parameter optimization results for ByConstructionICON calculated on the LPBA40 validation set. The table is interpreted similarly to Table \ref{appendix-table:byconstructionicon_hyperparameter_optimization_oasis}. Value $\lambda = 1.0$ was chosen due to the best overall performance. HD95 metric is not included due to relatively high computational cost.}
\begin{tabular}{ccccccc}
\toprule
Hyperparameters & \multicolumn{2}{c}{Accuracy} & \multicolumn{2}{c}{Deformation regularity} & \multicolumn{2}{c}{Consistency}\\
\cmidrule(r){1-1} \cmidrule(r){2-3} \cmidrule(r){4-5} \cmidrule(r){6-7}
$\overline{\lambda}$ & Dice $\uparrow$ & HD95 $\downarrow$ & $|J_{\phi}|_{\leq 0} \downarrow$ & $\operatorname{std}(|J_{\phi}|) \downarrow$ &       Cycle $\downarrow$ & Inverse $\downarrow$     \\
\midrule 0.5 & $0.686 (0.020)$ &   $6.23 (0.46)$   & $\expnumber{2.8}{-2} (\expnumber{1.2}{-2})$ &               $0.34 (0.042)$                & $\expnumber{3.9}{-3} (\expnumber{1.1}{-3})$ & $\expnumber{3.9}{-3} (\expnumber{1.1}{-3})$ \\
    1.0     & $0.684 (0.018)$ &   $6.30 (0.48)$   & $\expnumber{3.0}{-3} (\expnumber{2.0}{-3})$ &               $0.27 (0.025)$                & $\expnumber{1.2}{-3} (\expnumber{2.5}{-4})$ & $\expnumber{1.2}{-3} (\expnumber{2.5}{-4})$ \\
\bottomrule
\end{tabular}
\label{appendix-table:byconstructionicon_hyperparameter_optimization_lpba40}
\end{table}

\begin{table}[h]
\centering
\setlength\tabcolsep{3pt}
\scriptsize
\caption{Regularization parameter optimization results for ByConstructionICON calculated on the Lung250M-4B validation set. The table is interpreted similarly to Table \ref{appendix-table:byconstructionicon_hyperparameter_optimization_oasis}. Value $\lambda = 5.0$ was chosen due to the best performance.}
\begin{tabular}{ccccccc}
\toprule
Hyperparameters & Accuracy & \multicolumn{2}{c}{Deformation regularity} & \multicolumn{2}{c}{Consistency}\\
\cmidrule(r){1-1}  \cmidrule(r){2-2} \cmidrule(r){3-4} \cmidrule(r){5-6}
$\lambda$ & TRE $\downarrow$ & $|J_{\phi}|_{\leq 0} \downarrow$ & $\operatorname{std}(|J_{\phi}|) \downarrow$ &       Cycle $\downarrow$ & Inverse $\downarrow$     \\
\midrule 0.5 &   $8.76 (4.5)$   &  $\expnumber{2.6}{-3} (\expnumber{3.5}{-3})$  &                $0.32 (0.15)$                & $\expnumber{1.2}{-3} (\expnumber{9.7}{-4})$& $\expnumber{1.2}{-3} (\expnumber{9.7}{-4})$ \\
    1.0     &   $7.63 (4.0)$   &  $\expnumber{3.3}{-4} (\expnumber{9.9}{-4})$  &                $0.25 (0.12)$                & $\expnumber{4.8}{-4} (\expnumber{4.7}{-4})$& $\expnumber{4.8}{-4} (\expnumber{4.7}{-4})$ \\
    3.0     &   $6.38 (3.9)$   & $\expnumber{0.0}{0.0} (\expnumber{0.0}{0.0})$ &                $0.17 (0.10)$                & $\expnumber{1.1}{-4} (\expnumber{8.4}{-5})$ & $\expnumber{1.1}{-4} (\expnumber{8.4}{-5})$ \\
    5.0     &   $6.42 (4.1)$   & $\expnumber{0.0}{0.0} (\expnumber{0.0}{0.0})$ &               $0.15 (0.089)$                & $\expnumber{6.8}{-5} (\expnumber{5.1}{-5})$& $\expnumber{6.8}{-5} (\expnumber{5.1}{-5})$ \\
\bottomrule
\end{tabular}
\label{appendix-table:byconstructionicon_hyperparameter_optimization_lung250m_4b}
\end{table}

\clearpage

\section{Proof of theoretical properties}\label{appendix:theoretical_properties_proof}

While in the main text dependence of the intermediate outputs $d_{1\to1.5}^{(k)}$, $d_{2\to1.5}^{(k)}$, $z^{(k)}_1$, $z^{(k)}_2$, and $\delta^{(k)}$ on the input images $x_A, x_B$ is not explicitly written, throughout this proof we include the dependence in the notation since it is relevant for the proof.

\subsection{Inverse consistent by construction (Theorem \ref{theorem:inverse_consistent})}
\begin{proof}
Inverse consistency by construction follows directly from Equation \ref*{eq:final_deformation}:
\begin{align*}
f_{1\to2}(x_A, x_B) &= d_{1\to1.5}^{(0)}(x_A, x_B) \circ d_{2\to1.5}^{(0)}(x_A, x_B)^{-1}\\
&= \left(d_{2\to1.5}^{(0)}(x_A, x_B) \circ d_{1\to1.5}^{(0)}(x_A, x_B)^{-1}\right)^{-1}\\
&= f_{2\to1}(x_A, x_B)^{-1}
\end{align*}
\end{proof}
Note that due to limited sampling resolution the inverse consistency error is not exactly zero despite of the proof. The same is true for earlier inverse consistent by construction registration methods, as discussed in Section \ref{sec:intro}.

To be more specific, sampling resolution puts a limit on the accuracy of the inverses obtained using deformation inversion layer, and also limits accuracy of compositions if deformations are resampled to their original resolution as part of the composition operation (see Section \ref{sec:inference}). While another possible source could be the fixed point iteration in deformation inversion layer converging imperfectly, that can be proven to be insignificant. As shown by Appendix \ref{appendix:optimal_gamma}, the fixed point iteration is guaranteed to converge, and error caused by the lack of convergence of fixed point iteration can hence be controlled by the stopping criterion. In our experiments we used as a stopping criterion maximum inversion error within all the sampling locations reaching below one hundredth of a voxel, which is very small.

\subsection{Symmetric by construction (Theorem \ref{theorem:symmetric})}
\begin{proof}
We use induction. Assume that for any $x_A$ and $x_B$ at level $k+1$ the following holds: $d^{(k + 1)}_{1\to1.5}(x_A, x_B) = d^{(k + 1)}_{2\to1.5}(x_B, x_A)$. For level $K$ it holds trivially since $d^{(K)}_{1\to1.5}(x_A, x_B)$ and $d^{(K)}_{2\to1.5}(x_A, x_B)$ are defined as identity deformations. Using the induction assumption we have at level $k$:
\begin{equation*}
z_1^{(k)}(x_A, x_B) = h^{(k)}(x_A) \circ d^{(K)}_{1\to1.5}(x_A, x_B) = h^{(k)}(x_A) \circ d^{(K)}_{2\to1.5}(x_B, x_A) = z_2^{(k)}(x_B, x_A)\
\end{equation*}
Then also:
\begin{align*}
\delta^{(k)}(x_A, x_B) &= u^{(k)}(z_1^{(k)}(x_A, x_B), z_2^{(k)}(x_A, x_B)) \circ u^{(k)}(z_2^{(k)}(x_A, x_B), z_1^{(k)}(x_A, x_B))^{-1}
\\
&= u^{(k)}(z_2^{(k)}(x_B, x_A), z_1^{(k)}(x_B, x_A)) \circ u^{(k)}(z_1^{(k)}(x_B, x_A), z_2^{(k)}(x_B, x_A))^{-1}\\
&=  \left[u^{(k)}(z_1^{(k)}(x_B, x_A), z_2^{(k)}(x_B, x_A)) \circ u^{(k)}(z_2^{(k)}(x_B, x_A), z_1^{(k)}(x_B, x_A))^{-1}\right]^{-1}\\
&=  \delta^{(k)}(x_B, x_A)^{-1}\\
\end{align*}

Then we can finalize the induction step:
\begin{align*}
d^{(k)}_{1\to1.5}(x_A, x_B) &= d^{(k + 1)}_{1\to1.5}(x_A, x_B) \circ \delta^{(k)}(x_A, x_B)\\
&= d^{(k + 1)}_{2\to1.5}(x_B, x_A) \circ \delta^{(k)}(x_B, x_A)^{-1} = d^{(k)}_{2\to1.5}(x_B, x_A)
\end{align*}

From this follows that the method is symmetric by construction:
\begin{align*}
f_{1\to2}(x_A, x_B) &= d_{1\to1.5}^{(0)}(x_A, x_B) \circ d_{2\to1.5}^{(0)}(x_A, x_B)^{-1}\\
&= d_{2\to1.5}^{(0)}(x_B, x_A) \circ d_{1\to1.5}^{(0)}(x_B, x_A)^{-1} = f_{2\to1}(x_B, x_A)
\end{align*}
\end{proof}

The proven relation holds exactly.

\subsection{Topology preserving (Theorem \ref{theorem:topology_preserving})}

\begin{proof}
    As shown by Appendix \ref{appendix:optimal_gamma}, each $u^{(k)}$ produces topology preserving (everywhere positive Jacobian determinants) deformations (architecture of $u^{(k)}$ is described in Section \ref{sec:topology_preserving_u}). Since the overall deformation is composition of multiple outputs of $u^{(k)}$ and their inverses, the whole deformation has also everywhere positive Jacobians since the Jacobian determinants at each point can be obtained as a product of the Jacobian determinants of the composed deformations.
\end{proof} 

The inveritibility is not perfect if the compositions of $u^{(k)}$ and their inverses are resampled to the input image resolution, as is common practice in image registration. However, invertibility everywhere can be achieved by storing all the individual deformations and evaluating the composed deformation as their true composition (see Section \ref{sec:inference} on inference variants and the results in Section \ref{sec:results}).

\section{Deriving the optimal bound for control points}\label{appendix:optimal_gamma}

\subsection{Proof summary}

As discussed in Section \ref{sec:topology_preserving_u}, we limit absolute values of the predicted cubic spline control points defining the displacement field by a hard constraint $\gamma^{(k)}$ for each resolution level $k  \in \{0,\dots,K - 1\}$. We want to find optimal $\gamma^{(k)}$ which ensures invertibility of individual deformations and convergence of the fixed point iteration in deformation inversion layer. Note that the proof provides a significantly shorter and more general proof of the theorems 1 and 4 in \citep{choi2000injectivity}.

We start by showing the optimal bound in continuous case (infinite resolution), and then extend it to our discrete case where values between the B-spline samples are defined by linear interpolation.

The derived continuous bound $\gamma$ equals the reciprocal of the maximum possible matrix $\infty$ norm of the local Jacobian matrices over all possible generated displacement fields. The matrix norm of a local Jacobian matrix equals the local Lipschitz constant and hence the bound ensures that the Lipschitz constant with respect to the $\infty$ norm stays under one, which by \citep{chen2008simple} guarantees both invertibility and convergence of the fixed point iteration. However, a straightforward formula for the bound is intractable computationally in three dimensions, and we simplify it by showing that due to symmetries the absolute value around the derivatives can be removed from the function being maximixed, yielding a formula which can be evaluated exactly. We further show with a counter example that the proposed bound is tight.

The bound equals the bound obtained by \citep{choi2000injectivity} using a very different approach, further validating the result (their proof does not cover the convergence of the fixed point iteration and is less general).

Note that the bound ensures positivity of the Jacobians, not only invertibility, since we are limiting displacements (and zero displacement deformation has trivially positive Jacobian).

\subsection{Proof of the continuous case}

By \citep{chen2008simple} a deformation field is invertible by the proposed deformation inversion layer (and hence invertible in general) if its displacement field is contractive mapping with respect to some norm (the convergence is then also with respect to that norm). In finite dimensions convergence in any p-norm is equal and hence we should choose the norm which gives the loosest bound.

Let us choose $||\cdot||_{\infty}$ norm for our analysis, which, as it turns out, gives the loosest possible bound. Then the Lipschitz constant of a displacement field is equivalent to the maximum $||\cdot||_{\infty}$ operator norm of the local Jacobian matrices of the displacement field.

Since for matrices $||\cdot||_{\infty}$ norm corresponds to maximum absolute row sum it is enough to consider one component of the displacement field.

Let $B: \mathbb{R}\to\mathbb{R}$ be a centered cardinal B-spline of some degree (actually any continuous almost everywhere differentiable function with finite support is fine) and let us consider an infinite grid of control points $\phi: \mathbb{Z}^n \to \mathbb{R}$ where $n$ is the dimensionality of the displacement field. For notational convinience, let us define a set $N:=\{1, \dots, n\}$.

Now let $f_\phi$ be the $n$-dimensional displacement field (or one component of it) defined by the control point grid:

\begin{equation}
    f_\phi(x) = \sum_{\alpha\in\mathbb{Z}^n} \phi(\alpha) \prod_{i\in N}B(x_i - \alpha_i)
\end{equation}

Note that since the function $B$ has finite support the first sum over $\mathbb{Z}^n$ can be defined as a finite sum for any $x$ and is hence well-defined. Also, without loss of generality it is enough to look at region $x \in [0, 1]^n$ due to the unit spacing of the control point grid.

For partial derivatives $\frac{\partial f_\phi}{\partial x_j}: \mathbb{R}^n \to \mathbb{R}$ we have
\begin{equation}
    \frac{\partial f_\phi}{\partial x_j}(x) := \sum_{\alpha\in \mathbb{Z}^n} \phi({\alpha})\ B'(x_j - \alpha_j) \prod_{i\in N\setminus \{j\}} B(x_i - \alpha_i) = \sum_{\alpha\in\mathbb{Z}^n} \phi_{\alpha} D^j(x - \alpha)
\end{equation}
where $D^j(x - \alpha) := B'(x_j - \alpha_j) \prod_{i\in N\setminus \{j\}} B(x_i - \alpha_i)$.

Following the power set notation, let us denote control points limited to some set $S \subset \mathbb{R}$ as $S^{\mathbb{Z}^n}$. That is, if $\phi \in S^{\mathbb{Z}^n}$, then for all $\alpha \in \mathbb{Z}^n$, $\phi(\alpha) \in S$.

\begin{lemma}\label{appendix-lemma:contraction_tilde}
For all $\phi \in\ ]-1/\tilde{K}_n, 1/\tilde{K}_n[^{\mathbb{Z}^n}$, $f_\phi$ is a contractive mapping with respect to the $||\cdot||_{\infty}$ norm, where
\begin{equation}
    \tilde{K}_n := \max_{\begin{subarray}{l}x \in [0, 1]^n\\\tilde{\phi} \in [-1, 1]^{\mathbb{Z}^n}\end{subarray}}\ \sum_{j\in N} \left| \frac{\partial f_{\tilde{\phi}}}{\partial x_j}(x) \right|.
\end{equation}
\end{lemma}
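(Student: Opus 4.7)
My plan is to exploit two facts: $f_\phi$ depends linearly on the control-point field $\phi$, and a pointwise bound on $\sum_{j\in N} |\partial f_\phi/\partial x_j|$ converts into a Lipschitz bound in $||\cdot||_\infty$ via the mean value inequality. The constant $\tilde{K}_n$ is precisely defined as the worst-case value of this sum over $\tilde{\phi} \in [-1,1]^{\mathbb{Z}^n}$, so homogeneity in $\phi$ produces the claimed contraction factor once the two ingredients are combined.

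\textbf{Execution.} For $\phi \neq 0$ with $||\phi||_\infty < 1/\tilde{K}_n$, set $\tilde{\phi} := \phi / ||\phi||_\infty \in [-1,1]^{\mathbb{Z}^n}$. Linearity of $f_\phi$ in $\phi$ gives $\partial f_\phi / \partial x_j = ||\phi||_\infty \cdot \partial f_{\tilde{\phi}}/\partial x_j$ pointwise, so by the definition of $\tilde{K}_n$,
\begin{equation*}
\sum_{j \in N} \left| \frac{\partial f_\phi}{\partial x_j}(x) \right| \leq ||\phi||_\infty \, \tilde{K}_n < 1 \qquad \text{for all } x \in [0,1]^n.
\end{equation*}
I would extend this bound to all $x \in \mathbb{R}^n$ by translational invariance: substituting $x \mapsto x + \beta$ and reindexing $\alpha \mapsto \alpha + \beta$ for $\beta \in \mathbb{Z}^n$ in the definition of $f_\phi$ leaves the sum formally unchanged while reparametrising $\phi$ into a translated copy still in $[-1,1]^{\mathbb{Z}^n}$. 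I would then apply the fundamental theorem of calculus along the segment $y + t(x-y)$ together with Hölder's inequality (paired $\ell^1$ and $\ell^\infty$):
\begin{equation*}
|f_\phi(x) - f_\phi(y)| \leq \int_0^1 \biggl(\, \textstyle\sum_{j \in N} \bigl| \tfrac{\partial f_\phi}{\partial x_j}(y + t(x-y)) \bigr| \biggr) dt \, \cdot \, ||x-y||_\infty \leq ||\phi||_\infty \, \tilde{K}_n \, ||x-y||_\infty.
\end{equation*}

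\textbf{From scalar to vector field, and the main obstacle.} The full $n$-dimensional displacement field has the same structure in each component, each governed by its own control-point grid satisfying the same absolute bound, so every component is $||\phi||_\infty \tilde{K}_n$-Lipschitz in $||\cdot||_\infty$. Since the $||\cdot||_\infty \to ||\cdot||_\infty$ operator norm of a matrix is the maximum absolute row sum, the vector-valued displacement inherits the same Lipschitz constant, which is strictly less than 1; hence $f_\phi$ is a contraction, as claimed (the case $\phi = 0$ is trivial). The one step requiring some care is the translation-invariance argument reducing the supremum over $\mathbb{R}^n$ to $[0,1]^n$: it follows from a direct change of variables, but it is crucial here because without it the definition of $\tilde{K}_n$ (which restricts to the unit cube) would not suffice. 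Everything else is routine, since the cubic B-spline $B$ is smooth enough that $\nabla f_\phi$ exists pointwise and the mean value inequality applies without further qualifications.
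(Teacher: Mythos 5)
Your proof is correct and follows essentially the same route as the paper's: both exploit homogeneity of $f_\phi$ in $\phi$ to reduce to the normalized case, bound the pointwise $\ell^1$ norm of the gradient by $\tilde{K}_n\,||\phi||_\infty < 1$, and identify this quantity with the $||\cdot||_\infty$ operator norm of the local Jacobian to conclude contractivity. You merely spell out the translation-invariance reduction to $[0,1]^n$ and the mean-value/H\"older step, which the paper leaves implicit.
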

\begin{proof}
For all $x\in[0, 1]^n$,  $\phi \in\ ]-1/\tilde{K}_n, 1/\tilde{K}_n[^{\mathbb{Z}^n}$
\begin{equation}
\begin{aligned}
    \sum_{j\in N} \left| \frac{\partial f_\phi}{\partial x_j}(x) \right| &<
    \max_{\begin{subarray}{l}\tilde{x} \in [0, 1]^n\\\tilde{\phi} \in [-1/\tilde{K}_n, 1/\tilde{K}_n]^{\mathbb{Z}^n}\end{subarray}}\ \sum_{j\in N} \left| \frac{\partial f_{\tilde{\phi}}}{\partial \tilde{x}_j}(\tilde{x}) \right|\\
    &= \max_{\begin{subarray}{l}\tilde{x} \in [0, 1]^n\\\tilde{\phi} \in [-1, 1]^{\mathbb{Z}^n}\end{subarray}}\ \sum_{j\in N} \left| \frac{\partial f_{\tilde{\phi}/\tilde{K}_n}}{\partial \tilde{x}_j}(\tilde{x}) \right|\\
    &= \frac{1}{\tilde{K}_n}\ \max_{\begin{subarray}{l}\tilde{x} \in [0, 1]^n\\\tilde{\phi} \in [-1, 1]^{\mathbb{Z}^n}\end{subarray}}\ \sum_{j\in N} \left| \frac{\partial f_{\tilde{\phi}/\tilde{K}_n}}{\partial \tilde{x}_j}(\tilde{x}) \right| = \frac{\tilde{K}_n}{\tilde{K}_n} = 1.
\end{aligned}
\end{equation}
Sums of absolute values of partial derivatives are exactly the $||\cdot||_{\infty}$ operator norms of the local Jacobian matrices of $f$, hence $f$ is a contraction.
\end{proof}

\begin{lemma}\label{appendix-lemma:no_abs_needed}
    For any $k \in N$, $x \in [0, 1]^n,\ \phi \in [-1, 1]^{\mathbb{Z}^n}$, we can find some $\tilde{x} \in [0, 1]^n, \tilde{\phi}\in [-1, 1]^{\mathbb{Z}^n}$ such that
    \begin{equation}
        \frac{\partial f_{\tilde{\phi}}}{\partial x_j}(\tilde{x}) =
        \begin{cases}
            -\frac{\partial f_\phi}{\partial x_j}(x)&\quad \text{for}\ j = k\\
            \frac{\partial f_\phi}{\partial x_j}(x)&\quad \text{for}\ j \in N\setminus \{k\}.
        \end{cases}
    \end{equation}
\end{lemma}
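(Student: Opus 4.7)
The plan is to exploit two elementary symmetries: a centered cardinal B-spline $B$ is even, so $B'$ is odd, and both the integer lattice $\mathbb{Z}^n$ and the unit cube $[0,1]^n$ are preserved by reflection around the half-integer $1/2$. Combining these should let us flip the sign of exactly the $k$-th partial derivative while leaving all others untouched.

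Concretely, I would introduce the involution $\beta: \mathbb{Z}^n \to \mathbb{Z}^n$ defined by $\beta(\alpha)_k = 1 - \alpha_k$ and $\beta(\alpha)_i = \alpha_i$ for $i \neq k$, and then set $\tilde{\phi} := \phi \circ \beta$ together with $\tilde{x}_k := 1 - x_k$ and $\tilde{x}_i := x_i$ for $i \neq k$. The admissibility checks are immediate: $\tilde{x} \in [0,1]^n$ since $1 - x_k \in [0,1]$ whenever $x_k \in [0,1]$, and $\tilde{\phi} \in [-1,1]^{\mathbb{Z}^n}$ because $\tilde{\phi}$ takes exactly the same set of values as $\phi$, merely reindexed.

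The core computation is to reindex the defining sum of $\frac{\partial f_{\tilde{\phi}}}{\partial x_j}(\tilde{x})$ by the change of variables $\gamma = \beta(\alpha)$. This is unambiguous because $\beta$ is an involution on $\mathbb{Z}^n$ and only finitely many terms are nonzero by the finite support of $B$, so no convergence issues arise. After reindexing, every argument of the form $\tilde{x}_k - \beta(\gamma)_k$ collapses to $-(x_k - \gamma_k)$, while for $i \neq k$ both $\tilde{x}_i$ and $\beta(\gamma)_i$ agree with their untilded counterparts. Oddness of $B'$ then produces the required minus sign precisely when the differentiated coordinate is $k$; for $j \neq k$ the $k$-th factor of the product is a value of $B$, so evenness leaves it unchanged, and all other factors are identical by construction. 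Matching terms concludes the argument.

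I do not anticipate a real obstacle, only the small subtlety of choosing the correct reflection point: reflecting the $k$-th axis around $0$ would send $\tilde{x}_k$ into $[-1,0]$, outside the allowed domain, whereas reflecting around $1/2$ simultaneously preserves $[0,1]$ and fixes $\mathbb{Z}$ setwise, which is exactly what is needed to keep both $\tilde{x}$ and $\tilde{\phi}$ admissible.
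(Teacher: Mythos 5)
Your proposal is correct and rests on the same underlying idea as the paper's proof: exploit the evenness of $B$ (hence oddness of $B'$) together with the reflection $t \mapsto 1-t$, which preserves both $[0,1]$ and $\mathbb{Z}$, and reindex the sum by the induced involution on the control-point lattice. The only difference is the dual choice of construction --- you reflect just the $k$-th coordinate and take $\tilde{\phi} = \phi \circ \beta$ without a sign change, whereas the paper reflects all coordinates except the $k$-th and sets $\tilde{\phi} = -\phi \circ g$; both verifications go through, and yours is arguably the slightly more economical of the two.
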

\begin{proof}
The B-splines are symmetric around origin:
\begin{equation}
    B(x) = B(-x) \implies B'(x) = -B'(-x)
\end{equation}

Let us propose
\begin{equation}
    \tilde{x}_i :=
    \begin{cases}
        1 - x_i, &\quad \text{when}\ i \in N \setminus k\\
        x_i, &\quad \text{when}\ i = k
    \end{cases}
\end{equation}

and $\tilde{\phi}: \mathbb{Z}^n \to \mathbb{R}$ as $\tilde{\phi}(\alpha) := -\phi(g(\alpha))$ where $g: \mathbb{Z}^n \to \mathbb{Z}^n$ is a bijection defined as follows:
\begin{equation}
    g(\alpha)_i :=
    \begin{cases}
        1 - \alpha_i, &\quad \text{when}\ i \in N \setminus k\\
        \alpha_i, &\quad \text{when}\ i = k.
    \end{cases}
\end{equation}

Then for all $\alpha \in \mathbb{Z}^n$:
\begin{equation}
    \begin{aligned}
        D^k(\tilde{x} - \alpha) &= B'(\tilde{x}_k - \alpha_k) \prod_{i\in N\setminus \{k\}} B(\tilde{x}_i - \alpha_i)\\
        &= B'(x_k - g(\alpha)_k) \prod_{i\in N\setminus \{k\}} B(-(x_i - g(\alpha)_i))\\
        &= D^k(x - g(\alpha))
    \end{aligned}
\end{equation}

which gives

\begin{equation}
    \begin{aligned}
        \frac{\partial f_{\tilde{\phi}}}{\partial \tilde{x}_k}(\tilde{x}) &= \sum_{\alpha\in\mathbb{Z}^n} \tilde{\phi}(\alpha) D^k(\tilde{x} - \alpha)\\
        &= \sum_{\alpha\in\mathbb{Z}^n} -\phi(g(\alpha)) D^k(x - g(\alpha))&&\quad \text{g is bijective}\\
        &= -\frac{\partial f_\phi}{\partial x_k}(x).
    \end{aligned}
\end{equation}

And for all $j \in N \setminus \{k\}$, $\alpha \in \mathbb{Z}^n$

\begin{equation}
    \begin{aligned}
        D^j(\tilde{x} - \alpha) &= B'(\tilde{x}_j - \alpha_j) \prod_{i\in N\setminus \{j\}} B(\tilde{x}_i - \alpha_i)\\
        &= B'(\tilde{x}_j - \alpha_j)\ B(\tilde{x}_k - \alpha_k) \prod_{i\in N\setminus \{j, k\}} B(\tilde{x}_i - \alpha_i)\\
        &= B'(-(x_j - g(\alpha)_j))\ B(x_k - g(\alpha)_k) \prod_{i\in N\setminus \{j, k\}} B(-(x_i - g(\alpha)_i))\\
        &= -B'(x_j - g(\alpha)_j)\ B(x_k - g(\alpha)_k) \prod_{i\in N\setminus \{j, k\}} B(x_i - g(\alpha)_i)\\
        &= -B'(x_j - g(\alpha)_j) \prod_{i\in N\setminus \{j\}} B(x_i - g(\alpha)_i)\\
        &= - D^k(x - g(\alpha))
    \end{aligned}
\end{equation}

which gives for all $j \in N \setminus \{k\}$
\begin{equation}
    \begin{aligned}
        \frac{\partial f_{\tilde{\phi}}}{\partial \tilde{x}_j}(\tilde{x}) &= \sum_{\alpha\in\mathbb{Z}^n} \tilde{\phi}(\alpha) D^j(\tilde{x} - \alpha)\\
        &= \sum_{\alpha\in\mathbb{Z}^n} -\phi(g(\alpha)) - D^j(x - g(\alpha))&&\quad \text{g is bijective}\\
        &= \frac{\partial f_\phi}{\partial x_j}(x).
    \end{aligned}
\end{equation}
\end{proof}

\begin{theorem}\label{appendix-theorem:contraction}
For all $\phi \in\ ]-1/K_n, 1/K_n[^{\mathbb{Z}^n}$, $f_\phi$ is a contractive mapping with respect to the $||\cdot||_{\infty}$ norm, where
\begin{equation}
    K_n := \max_{\begin{subarray}{l}x \in [0, 1]^n\end{subarray}}\ \sum_{\alpha\in \mathbb{Z}^n} \left|\sum_{j\in N} D^j_{\alpha}(x)\right|.
\end{equation}
\end{theorem}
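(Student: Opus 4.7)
The plan is to show that the constant $\tilde{K}_n$ appearing in Lemma~\ref{appendix-lemma:contraction_tilde} coincides with the constant $K_n$ in the statement; the theorem then follows immediately by Lemma~\ref{appendix-lemma:contraction_tilde}. To do this I would first use Lemma~\ref{appendix-lemma:no_abs_needed} to strip the inner absolute values in $\sum_{j\in N} \bigl|\partial f_\phi / \partial x_j(x)\bigr|$, and then exchange the order of the two sums and maximize over $\phi$ pointwise in $\alpha$.

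For the first step, I would iterate Lemma~\ref{appendix-lemma:no_abs_needed} once per coordinate direction. A single application flips the sign of one chosen partial derivative while leaving every other partial unchanged, and the reparametrized $\tilde{\phi}$ is obtained from $\phi$ by negation and permutation of its values, hence remains in $[-1,1]^{\mathbb{Z}^n}$. Composing $n$ such applications (one per direction) shows that for any $x \in [0,1]^n$, $\phi \in [-1,1]^{\mathbb{Z}^n}$, and any sign pattern $\epsilon \in \{-1,+1\}^n$, there exist $\tilde{x} \in [0,1]^n$ and $\tilde{\phi} \in [-1,1]^{\mathbb{Z}^n}$ with $\partial f_{\tilde{\phi}}/\partial x_j(\tilde{x}) = \epsilon_j \cdot \partial f_\phi/\partial x_j(x)$ for every $j \in N$. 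Choosing $\epsilon_j := \operatorname{sign}\bigl(\partial f_\phi / \partial x_j(x)\bigr)$ turns every term into its absolute value without changing the sum, giving
\begin{equation*}
\tilde{K}_n \;=\; \max_{\substack{x \in [0,1]^n \\ \phi \in [-1,1]^{\mathbb{Z}^n}}} \;\sum_{j \in N} \frac{\partial f_\phi}{\partial x_j}(x) \;=\; \max_{x,\phi} \;\sum_{\alpha \in \mathbb{Z}^n} \phi(\alpha) \sum_{j \in N} D^j(x - \alpha),
\end{equation*}
where in the last equality I expand the partial derivatives and swap the (locally finite) sums.

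For the second step, I would fix $x$ and observe that the expression is linear in each individual value $\phi(\alpha)$, with the constraint $|\phi(\alpha)| \le 1$ decoupled across $\alpha$. The optimum over $\phi \in [-1,1]^{\mathbb{Z}^n}$ is therefore attained by the pointwise choice $\phi(\alpha) = \operatorname{sign}\bigl(\sum_{j \in N} D^j(x-\alpha)\bigr)$, which yields $\tilde{K}_n = \max_x \sum_\alpha \bigl|\sum_{j} D^j(x-\alpha)\bigr| = K_n$. Combined with Lemma~\ref{appendix-lemma:contraction_tilde}, this proves the theorem.

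The main obstacle is the first step: I need to verify that iterating Lemma~\ref{appendix-lemma:no_abs_needed} across coordinate directions is legitimate, namely that each subsequent sign flip preserves the sign flips already achieved in earlier directions and that the reparametrized control-point array stays inside $[-1,1]^{\mathbb{Z}^n}$ throughout. Both follow from the explicit form of the reparametrization in that lemma, which merely negates and bijectively permutes the values of $\phi$ and leaves the partials in the untouched directions invariant. Everything after that reduces to linear optimization with a box constraint and is routine.
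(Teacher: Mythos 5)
Your proposal is correct and follows essentially the same route as the paper's proof: both establish $\tilde{K}_n = K_n$ by invoking Lemma~\ref{appendix-lemma:no_abs_needed} to drop the absolute values around the partial derivatives, swapping the sums over $j$ and $\alpha$, and then optimizing over $\phi$ pointwise in $\alpha$, after which Lemma~\ref{appendix-lemma:contraction_tilde} gives contractivity. Your explicit iteration of Lemma~\ref{appendix-lemma:no_abs_needed} over the $n$ coordinate directions to realize an arbitrary sign pattern is a detail the paper's one-line citation leaves implicit, but it is the same argument.
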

\begin{proof}
Let us show that $K_n = \tilde{K}_n$.
\begin{equation}
    \begin{aligned}
        \tilde{K}_n &= \max_{\begin{subarray}{l}x \in [0, 1]^n\\\phi \in [-1, 1]^{\mathbb{Z}^n}\end{subarray}}\ \sum_{j\in N} \left| \frac{\partial f_\phi}{\partial x_j}(x) \right|&&\\
        &= \max_{\begin{subarray}{l}x \in [0, 1]^n\\\phi \in [-1, 1]^{\mathbb{Z}^n}\end{subarray}}\ \sum_{j\in N} \frac{\partial f_\phi}{\partial x_j}(x)&&\quad \text{(Lemma \ref{appendix-lemma:no_abs_needed})}\\
        &= \max_{\begin{subarray}{l}x \in [0, 1]^n\\\phi \in [-1, 1]^{\mathbb{Z}^n}\end{subarray}}\ \sum_{j\in N} \sum_{\alpha\in\mathbb{Z}^n} \phi_{\alpha}\ D^j(x - \alpha) &&\\
        &= \max_{\begin{subarray}{l}x \in [0, 1]^n\\\phi \in [-1, 1]^{\mathbb{Z}^n}\end{subarray}}\ \sum_{\alpha\in\mathbb{Z}^n} \phi_{\alpha}\ \sum_{j\in N} D^j(x - \alpha)&&\\
        &= \max_{\begin{subarray}{l}x \in [0, 1]^n\end{subarray}}\ \sum_{\alpha\in\mathbb{Z}^n} \left|\sum_{j\in N} D^j_{\alpha}(x)\right| = K_n&&\\
    \end{aligned}
\end{equation}
The last step follows from the obvious fact that the sum is maximized when choosing each $\phi_\alpha$ to be either $1$ or $-1$ based on the sign of the inner sum $\sum_{j\in N} D^j(x - \alpha)$.

By Lemma \ref{appendix-lemma:contraction_tilde} $f$ is then a contractive mapping with respect to the $||\cdot||_{\infty}$ norm.
\end{proof}

Theorem \ref{appendix-theorem:contraction} proves that if we limit the control point absolute values to be less than $1 / K_n$, then the resulting deformation is invertible by the fixed point iteration. Also, approximating $K_n$ accurately is possible at least for $n \leq 3$. Subset of $\mathbb{Z}^n$ over which the sum needs to be taken depends on the support of the function $B$ which again depends on the degree of the B-splines used.

Next we want to show that the obtained bound is also tight bound for invertibility of the deformation. That also then shows that $||\cdot||_{\infty}$ norm gives the loosest possible bound.

Since $f_\phi$ corresponds only to one component of a displacement field, let us consider a fully defined displacement field formed by stacking $n$ number of $f_\phi$ together. Let us define
\begin{equation}
    g_\phi(x) := (f_\phi)_{i\in N}.
\end{equation}
\begin{theorem}
    There exists $\phi \in [-1/K_n, 1/K_n]^{\mathbb{Z}^n}$, $x \in [0, 1]^n$ s.t. $\det \left(\frac{\partial g_\phi}{\partial x} + I\right)(x) = 0$ where $\frac{\partial g_\phi}{\partial x}$ is the Jacobian matrix of $g_\phi$ and $I$ is the identity matrix. 
\end{theorem}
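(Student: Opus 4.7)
The plan is to exhibit an explicit $(\phi, x)$ pair realizing the equality by choosing control points that saturate the bound in the definition of $K_n$ with the ``worst'' sign. By Theorem \ref{appendix-theorem:contraction} the candidates for the extremal configuration are already described: pick $x^* \in [0,1]^n$ achieving the maximum in
\begin{equation*}
K_n = \max_{x \in [0,1]^n}\ \sum_{\alpha \in \mathbb{Z}^n}\left|\sum_{j \in N} D^j_{\alpha}(x)\right|,
\end{equation*}
and define
\begin{equation*}
\phi(\alpha) := -\frac{1}{K_n}\operatorname{sign}\!\left(\sum_{j \in N} D^j_{\alpha}(x^*)\right),
\end{equation*}
with $\phi(\alpha) := 0$ when the inner sum vanishes. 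Then $\phi \in [-1/K_n, 1/K_n]^{\mathbb{Z}^n}$ by construction.

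Next I would compute the row sum of the Jacobian. Exchanging the order of summation exactly as in the proof of Theorem \ref{appendix-theorem:contraction},
\begin{equation*}
\sum_{j \in N}\frac{\partial f_\phi}{\partial x_j}(x^*) = \sum_{\alpha \in \mathbb{Z}^n}\phi(\alpha)\sum_{j \in N}D^j_\alpha(x^*) = -\frac{1}{K_n}\sum_{\alpha \in \mathbb{Z}^n}\left|\sum_{j\in N}D^j_\alpha(x^*)\right| = -1.
\end{equation*}
Since $g_\phi = (f_\phi, \ldots, f_\phi)$ has all coordinate functions equal, every row of the Jacobian matrix $\frac{\partial g_\phi}{\partial x}(x^*)$ is the vector $(\partial_1 f_\phi(x^*), \ldots, \partial_n f_\phi(x^*))$, so each row of $\frac{\partial g_\phi}{\partial x}(x^*) + I$ sums to $-1 + 1 = 0$. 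Hence the all-ones vector $\mathbf{1} \in \mathbb{R}^n$ lies in the kernel of $\frac{\partial g_\phi}{\partial x}(x^*) + I$, which forces $\det\!\left(\frac{\partial g_\phi}{\partial x} + I\right)(x^*) = 0$.

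The only nontrivial step is choosing the sign pattern so that $\sum_j \partial_j f_\phi(x^*)$ saturates at $-1$ rather than $+1$; this is what makes $-\mathbf{1}$ an eigendirection of the Jacobian with eigenvalue $-1$, and therefore makes $J+I$ singular. Existence of the maximizer $x^*$ follows from continuity of the map $x \mapsto \sum_\alpha |\sum_j D^j_\alpha(x)|$ (a finite sum on the compact set $[0,1]^n$ since $B$ has finite support), so the construction goes through without further analytic subtleties.
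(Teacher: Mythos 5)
Your proposal is correct and follows essentially the same route as the paper: both saturate the bound $K_n$ with a worst-case sign pattern on the control points so that $\sum_j \partial_j f_\phi(x^*) = -1$, and then observe that the all-ones vector lies in the kernel of $\frac{\partial g_\phi}{\partial x} + I$. The only cosmetic difference is that you construct the extremal $\phi$ explicitly via the sign function, whereas the paper invokes its Lemma on sign-flipping together with the contraction theorem to assert its existence.
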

\begin{proof}
By Lemma \ref{appendix-lemma:no_abs_needed} and Theorem \ref{appendix-theorem:contraction} there exists $x \in [0, 1]^n$ and $\tilde{\phi} \in [-1, 1]^{\mathbb{Z}^n}$ such that
\begin{equation}
    \sum_{j\in N} \frac{\partial f_{\tilde{\phi}}}{\partial x_j}(x) = -K_n
\end{equation}
where all $\frac{\partial f_{\tilde{\phi}}}{\partial x_j}(x) < 0$.

Let us define $\phi := \tilde{\phi} / K_n \in [-1/K_n, 1/K_n]^{\mathbb{Z}^n}$. Then
\begin{equation}
    \sum_{j\in N} \frac{\partial f_{\phi}}{\partial x_j}(x) = -1.
\end{equation}

Now let $y \in \mathbb{R}^n$ be a vector consisting only of values $1$, that is $y=:(1)_{i\in N}$. Then one has
\begin{equation}
    \begin{aligned}
        \left(\frac{\partial g_{{\phi}}}{\partial x} + I\right)(x) y &= \left(\frac{\partial g_{{\phi}}}{\partial x}(x)\right)y + y\\
        &= \left(\sum_{j\in N}1\ \frac{\partial f_{{\phi}}}{\partial x_j}(x)\right)_{i\in N} + (1)_{i\in N}\\
        &= (-1)_{i\in N} + (1)_{i\in N} = 0.
    \end{aligned}
\end{equation}
In other words $y$ is an eigenvector of $\left(\frac{\partial g_{{\phi}}}{\partial x} + I\right)(x)$ with eigenvalue $0$ meaning that the determinant of $\left(\frac{\partial g_{{\phi}}}{\partial x} + I\right)(x)$ is also $0$.
\end{proof}

The proposed bound is hence the loosest possible since the deformation can have zero Jacobian at the bound, meaning it is not invertible.

\subsection{Sampling based case (Equation \ref{eq:optimal_gamma})}

The bound used in practice, given in Equation \ref{eq:optimal_gamma}, is slightly different to the bound proven in Theorem \ref{appendix-theorem:contraction}. The reason is that for computational efficiency we do not use directly the cubic B-spline representation for the displacement field but instead take only samples of the displacement field in the full image resolution (see Appendix \ref{sec:topology_preserving_u}), and use efficient bi- or trilinear interpolation for defining the intermediate values. As a result the continuous case bound does not apply anymore.

However, finding the exact bounds for our approximation equals evaluating the maximum in Theorem \ref{appendix-theorem:contraction} over a finite set of sampling locations and replacing $D^j(x)$ with finite difference derivatives. The mathematical argument for that goes almost identically and will not be repeated here. However, to justify using finite difference derivatives, we need the following two trivial remarks:
\begin{itemize}
    \item When defining a displacement field using grid of values and bi- or trilinear interpolation, the highest value for $||\cdot||_\infty$ operator norm is obtained at the corners of each interpolation patch.
    \item Due to symmetry, it is enough to check derivative only at one of $2^n$ corners of each bi- or trilinear interpolation patch in computing the maximum (corresponding to finite difference derivative in only one direction over each dimension).
\end{itemize}

Maximum is evaluated over the relative sampling locations with respect to the resolution of the control point grid (which is in the resolution of the features $z_1^{(k)}$ and $z_2^{(k)}$). The exact sampling grid depends on how the sampling is implemented (which is an implementation detail), and in our case we used the locations $X := \{1 / 2 + \frac{1}{2^{k + 1}} + \frac{i}{2^k}\ |\ i \in \mathbb{Z}\}^n \cap [0, 1]^n$ which have, without loss of generality, been again limited to the unit cube.

No additional insights are required to show that the equation \ref{eq:optimal_gamma} gives the optimal bound. For concrete values of $K^{(k)}_n$, see Table \ref{appendix-table:K_values}.

\begin{table}
\centering
\setlength\tabcolsep{3pt}
\scriptsize
\caption{Values of $K_2$ and $K_3$ for different sampling rates with respect to the control point grid. The bound for $\text{Sampling rate} = \infty$ is from \citep{choi2000injectivity}. For each resolution level we define the maximum control point absolute values $\gamma^{(k)}$ as $0.99 \times \frac{1}{K^{(k)}_n}$ (in our experiments we have $n=3$ dimensional data). Codebase contains implementation for computing the value for other $k$.}
\begin{tabular}{llcc}
\toprule
k & Sampling rate                    &    $K^{(k)}_2$    &    $K^{(k)}_3$    \\
\midrule
0 & 1                                    & 2.222222222 & 2.777777778 \\
1 & 2                                    & 2.031168620 & 2.594390728 \\
2 & 4                                    & 2.084187826 & 2.512366240 \\
3 & 8                                    & 2.063570023 & 2.495476474 \\
4 & 16                                   & 2.057074951 & 2.489089713 \\
5 & 32                                   & 2.052177394 & 2.484247818 \\
6 & 64                                   & 2.049330491 & 2.481890143 \\
7 & 128                                  & 2.047871477 & 2.480726430 \\
8 & 256                                  & 2.047136380 & 2.480102049 \\
$\infty$  & $\infty$ & 2.046392675 & 2.479472335 \\
\bottomrule
\end{tabular}
\label{appendix-table:K_values}
\end{table}
\clearpage
\section{Deformation inversion layer memory usage}\label{appendix:memory}

We conducted an experiment on the memory usage of the deformation inversion layer compared to the stationary velocity field (SVF) framework \citep{arsigny2006log} since SVF framework could also be used to implement the suggested architecture in practice.

With the SVF framework one could slightly simplify the deformation update Equation \ref{eq:update_deformation_formula} to the form
\begin{equation}\label{appendix-eq:svf_symmetric_update}
    U^{(k)} := \exp(u^{(k)}(z_1^{(k)}, z_2^{(k)}) - u^{(k)}(z_2^{(k)}, z_1^{(k)}))
\end{equation}
where $\exp$ is the SVF integration (corresponding to Lie algebra exponentiation), and $u^{(k)}$ now predicts an auxiliary velocity field. We compared memory usage of this to our implementation, and used the implementation by \citet{dalca2018unsupervised} for SVF integration.

The results are shown in Table \ref{appendix-table:svf_deformation_inversion_layer_memory_comparison}. Our version implemented using the deformation inversion layer requires $5$ times less data to be stored in memory for the backward pass compared to the SVF integration. The peak memory usage during the inversion is also slightly lower. The memory saving is due to the memory efficient back-propagation through the fixed point iteration layers, which requires only the final inverted volume to be stored for backward pass. Since our architecture requires two such operations for each resolution level ($U^{(k)}$ and its inverse), the memory saved during training is significant.

\begin{table}[h]
\centering
\setlength\tabcolsep{3pt}
\scriptsize
\caption{\textbf{Memory usage comparison between deformation inversion layer and stationary velocity field (SVF) based implementations.} The comparison is between executing Equation \ref{eq:update_deformation_formula} using deformation inversion layers and executing Equation \ref{appendix-eq:svf_symmetric_update} using SVF integration implementation by \citet{dalca2018unsupervised}. Between passes memory usage refers to the amount memory needed for storing values between forward and backward passes, and peak memory usage refers to the peak amount of memory needed during forward and backward passes. A volume of shape $(256, 256, 256)$ with $32$ bit precision was used. We used $7$ scalings and squarings for the SVF integration.}
\begin{tabular}{lcc}
\toprule
Method & Between passes memory usage (GB) $\downarrow$ & Peak memory usage (GB) $\downarrow$\\
\midrule
Deformation inversion layer & $\bm{0.5625}$ & $\bm{3.9375}$\\
SVF integration & $2.8125$ & $4.125$\\
\bottomrule
\end{tabular}
\label{appendix-table:svf_deformation_inversion_layer_memory_comparison}
\end{table}

\section{Extended related work}
\label{appendix:related_work}
In this appendix we aim to provide a more thorough analysis of the related work, by introducing in more detail the works that we find closest to our method, and explaining how our method differs from those.

\subsection{Classical registration methods}

Classical registration methods, as opposed to the learning based methods, optimize the deformation independently for any given single image pair. For this reason they are sometimes called optimization based methods.

\textbf{DARTEL} by \citet{ashburner2007fast} is a classical optimization based registration method built on top of the stationary velocity field (SVF) \citep{arsigny2006log} framework offering symmetric by construction, inverse consistent, and topology preserving registration. The paper is to our knowledge the first symmetric by construction, inverse consistent, and topology preserving registration method.

\textbf{SyN} by \citet{avants2008symmetric} is another classical symmetric by construction, inverse consistent, and topology preserving registration method. The properties are achieved by the Large Deformation Diffeomorphic Metric Mapping LDDMM framework \citep{beg2005computing} in which diffeormphisms are generated from time-varying velocity fields (as opposed to the stationary ones in the SVF framework). The LDDMM framework has not been used much in unsupervised deep learning for generating diffeomorphims due to its computational cost, but some works do exist \citep{shen2019region, ramon2022lddmm, wang2023metamorph}, and others which are inspired by the framework but make significant modifications, and as a result lose the by construction topology preserving properties \citep{wang2020deepflash, wu2022nodeo, joshi2023r2net}. SyN is to our knowledge the first work suggesting matching the images in the intermediate coordinates for achieving symmetry, the idea which was also used in our work. However, the usual implementation of SyN in ANTs \citep{avants2009advanced} is not as a whole symmetric since the affine registration is not applied in a symmetric manner.

SyN has performed well in evaluation studies between different classical registration methods. e.g. \citep{klein2009evaluation}. However, it is significantly slower than the strong baselines included in our study, and has already earlier been compared with those\citep{balakrishnan2019voxelmorph, mok2020fast, mok2020large}, and hence was not included in our study.

\subsection{Deep learning methods (earlier work)}

Unlike the optimization based methods above, deep learning methods train a neural network that, for two given input images, outputs a deformation directly. The benefits of this class of methods include the significant speed improvement and more robust performance (avoiding local optima) \citep{de2019deep}. Our model belongs to this class of methods.

\textbf{SYMNet} by \citet{mok2020fast} uses as single forward pass of a U-Net style neural network to predict two stationary velocity fields, $v_{1\to1.5}$ and $v_{2\to1.5}$ (in practice two 3 channeled outputs are extracted from the last layer features using separate convolutions). The stationary velocity fields are integrated into two half-way deformations (and their inverses). The training loss matches the images both in the intermediate coordinates and in the original coordinate spaces (using the composed full deformations). While the network is a priori symmetric with respect to the input images, changing the input order of of the images (concatenating the inputs in the opposite order for the U-Net) can in principle result in any two $v_{1\to1.5}$ and $v_{2\to1.5}$ (instead of swapping them), meaning that the method is not symmetric by construction as defined in Section \ref{sec:intro} (this is confirmed by the cycle consistency experiments).

Our method does not use the stationary velocity field (SVF) framework to invert the deformations, but instead uses the novel deformation inversion layer. Also, SYMNet does not employ the multi-resolution strategy. The use of intermediate coordinates is similar to our work.

\textbf{MICS} by \citet{estienne2021mics} uses a shared convolutional encoder $E$ to encode both input images into some feature representations $E(x_A)$ and $E(x_B)$. A convolutional decoder network $D$ is then used to extract gradient volumes (constrained to contain only positive values) of the deformations for both forward and inverse deformations with formulas

\begin{equation}\label{appendix-eq:MICS}
    \nabla f(x_A, x_B)_{1\to2} = D(E(x_A) - E(x_B))\text{ and }\nabla f(x_A, x_B)_{2\to1} = D(E(x_B) - E(x_A)).
\end{equation}

The final deformations are obtained from the gradient volumes by a cumulative sum operation. Since gradients are constrained to be positive the resulting deformation will be roughly invertible. However, as stated in their work, this only puts a positive constraint on the diagonal of the Jacobians, not on its determinant, unlike our work which guarantees positive determinants (Theorem \ref{theorem:topology_preserving}).

While MICS is symmetric by construction in the sense that swapping $x_A$ and $x_B$ will result in swapping the predicted forward and inverse deformations, this symmetry is achieved by subtraction (Equation \ref{appendix-eq:MICS}) instead of mathematical inverse operation (as in our work, Equation \ref{eq:anti-symmetric_formulation}). As a result the predicted "forward" and "inverse" deformations are not actually by construction constrained to be forward and inverse deformations of each other. MICS uses a loss to enforce this. Also, while MICS employs a multi-step approach, the symmetric by construction property is lost over the whole architecture due to not using the intermediate coordinate approach employed by our work.

\textbf{Additional baselines}: In addition to SYMNet, \textbf{cLapIRN}\citep{mok2021conditional} was chosen as a baseline because it was the best method on OASIS dataset in the Learn2Reg challenge \citep{hering2022learn2reg}. It employs a standard and straightforward multi-resolution approach, and is not topology preserving, inverse consistent, or symmetric. Apart from the multi-resolution approach, it is not methodologically close to our method. \textbf{VoxelMorph} \citep{balakrishnan2019voxelmorph} is a standard baseline in deep learning based unsupervised registration, and it is based on a straightforward application of U-Net architecture to image registration. 

\subsection{Deep learning methods (recent methods parallel with our work)}

These methods are very recent deep learning based registrations methods which have been developed independently of and in parallel with our work. A comparison with these methods and our model is therefore a fruitful topic for future research.

\textbf{\citet{iglesias2023ready}} uses a similar approach to achieve symmetry as our work but in the stationary velocity field (SVF) framework. In SVF framework, given some velocity field $v$, we have the property that $\exp(v) = \exp(-v)^{-1}$ where $\exp$ represents the integration operation \citep{arsigny2006log} generating the deformation from the velocity field. Hence to obtain symmetric by construction method, one can modify the formula \ref{eq:anti-symmetric_formulation} to the form
\begin{equation}
    f(x_A, x_B) := \exp(u(x_A, x_B) - u(x_B, x_A)).
\end{equation}
which will result in a method which is symmetric by construction, inverse consistent, and topology preserving. We measure memory usage against this formulation in Appendix \ref{appendix:memory} and show that our formulation using the novel deformation inversion layer requires storing $5$ times less memory for the backward pass. Their method includes only a single registration step, and not the robust multi-resolution architecture like ours.

\textbf{\citet{greer2023inverse}} extends the approach the approach by \citet{iglesias2023ready} to multi-step formulation in very similar way to how we construct our multi-resolution architecture by using the intermediate coordinates (square roots of deformations). They also employ the SVF framework, as opposed to our work which uses the deformation inversion layers, which, as shown in Appendix \ref{appendix:memory}, requires storing $5$ times less memory for the backward pass, which is significant for being able to train the multi-step network with many steps on large images (such as our OASIS raw data). Also, their paper treats the separate steps of multi-step architecture as independent whereas we develop very efficient multi-resolution formulation based on first extracting the multi-resolution features using ResNet-style encoder.

\textbf{\citet{zhang2023symmetric}} propose a symmetric by construction multi-resolution architecture that is similar to ours. Differences include again using the SVF framework, as well as applying losses at different resolutions, as opposed to us only applying them at the final level.

\clearpage

\section{Deformation inversion layer practical convergence}\label{appendix:deformation_inversion_iteration_counts}

We conducted an experiment on the fixed point iteration convergence in the deformation inversion layers with the model trained on OASIS dataset. The results can be seen in Figure \ref{appendix-fig:deformation_inversion_iteration_counts}. The main result was that in the whole OASIS test set of 9591 pairs not a single deformation required more than 8 iterations for convergence. Deformations requiring 8 iterations were only $0.05\%$ of all the deformations and a significant majority of the deformations ($96\%$) required $2$ to $5$ iterations. In all the experiments, including this one, the stopping criterion for the iterations was maximum displacement error within the whole volume reaching below one hundredth of a voxel, which is a very small error.

\begin{figure}[h]
\centering
\includegraphics[width=1.0\textwidth]{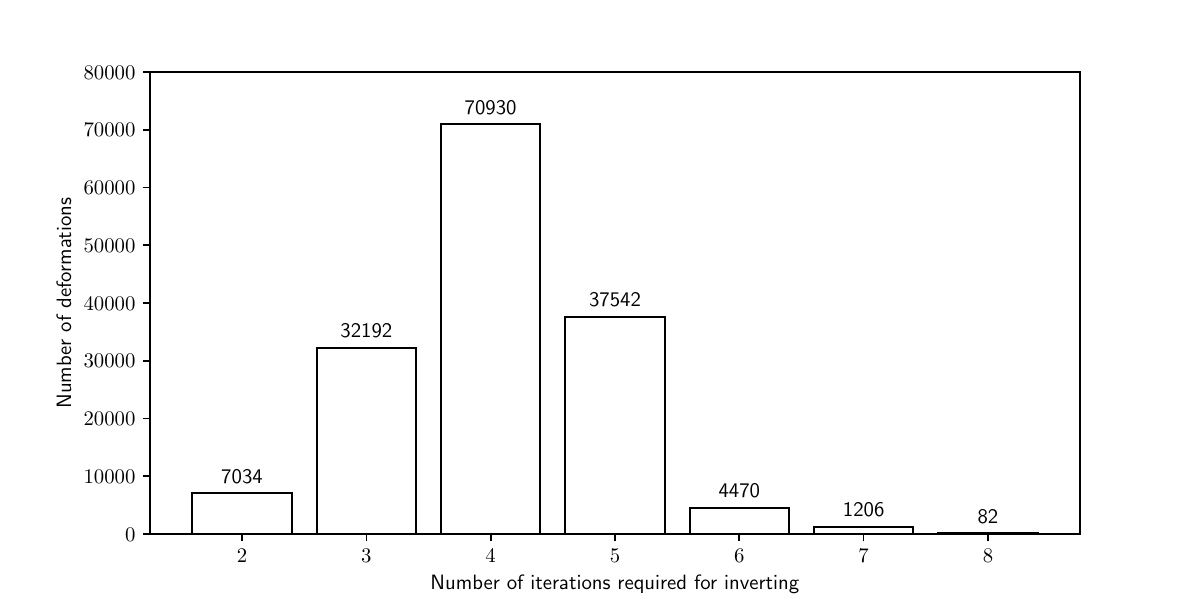}
\caption{\textbf{Number of fixed point iterations required for convergence in deformation inversion layers with the model trained on OASIS dataset.} The stopping criterion for the fixed point iteration was maximum displacement error within the whole volume reaching below one hundredth of a voxel. All deformation inversions for the whole OASIS test set are included.}
\label{appendix-fig:deformation_inversion_iteration_counts}
\end{figure}
\clearpage
\section{Additional visualizations}\label{appendix:result_visualization}

Figures \ref{appendix-fig:inverse_consistency_visualization}, and \ref{appendix-fig:cycle_consistency_visualization}
visualize the differences in inverse consistency, and cycle consistency respectively.

Figures \ref{appendix-fig:dice_scores_oasis} and \ref{appendix-fig:dice_scores_lpba40} visualize dice scores for individual anatomical regions for both OASIS and LPBA40 datasets. VoxelMorph and SYMNet perform systematically worse than our method, while cLapIRN and our method perform very similarly on most regions.

Figure \ref{appendix-fig:detailed_deformation_example} visualizes how the deformation is being gradually updated during the multi-resolution architecture.

\begin{figure}[h]
\centering
\includegraphics[width=0.9\textwidth]{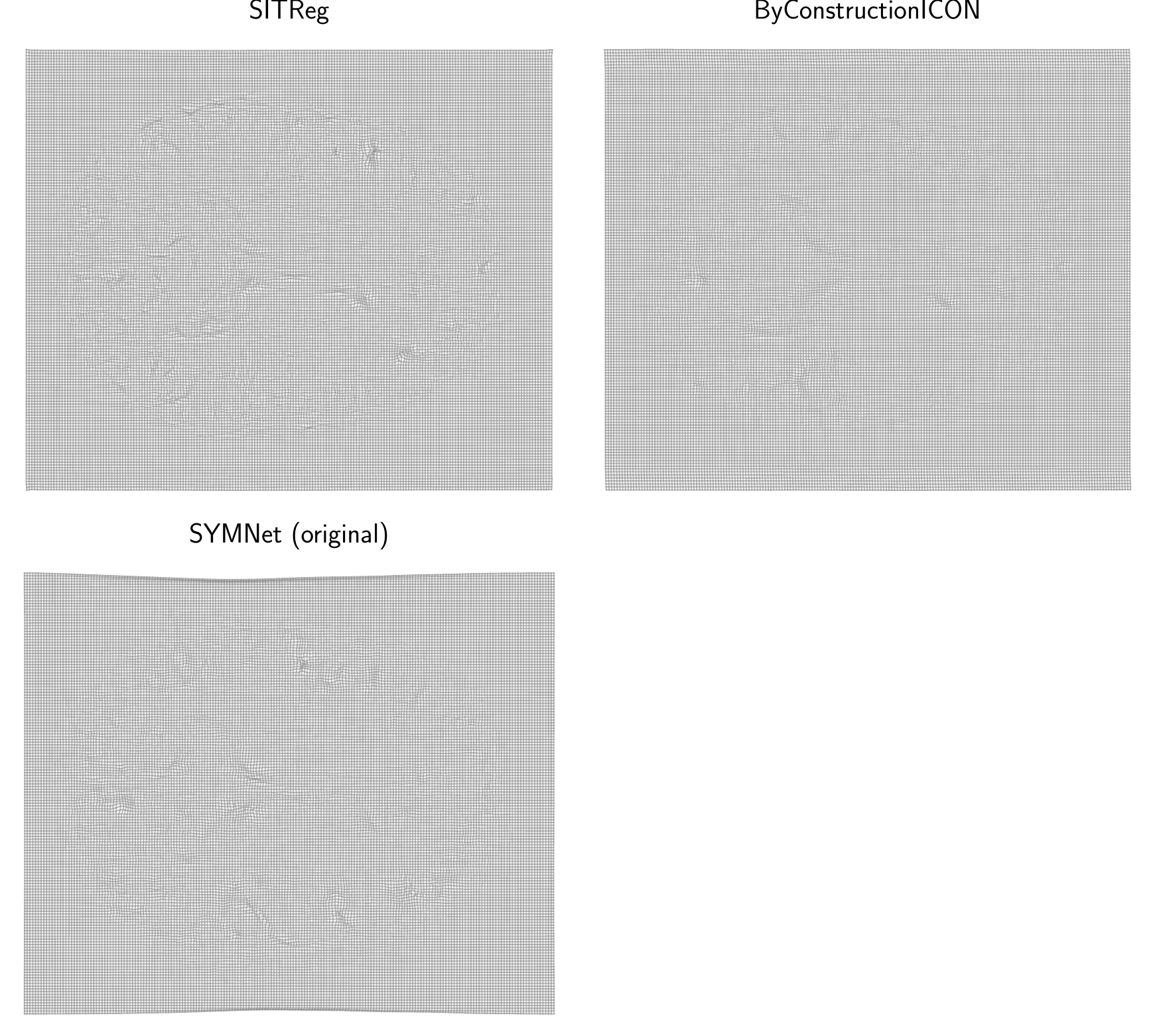}
\caption{\textbf{Visual inverse consistency comparison.} The deformation $\bm{f(x_A, x_B)_{1\to2} \circ f(x_A, x_B)_{2\to1}}$ is visualized for SITReg and SYMNet models for a single image pair in LPBA40 experiment. Since cLapIRN and VoxelMorph do not generate explicit inverses, they are not included in the figure. Ideally, $f(x_A, x_B)_{1\to2} \circ f(x_A, x_B)_{2\to1}$ should equal the identity mapping, and as can be seen, the property is well fulfilled by all of the three methods. Only one axial slice of the predicted 3D deformation is visible.}
\label{appendix-fig:inverse_consistency_visualization}
\end{figure}

\begin{figure}[h]
\centering
\includegraphics[width=0.8\textwidth]{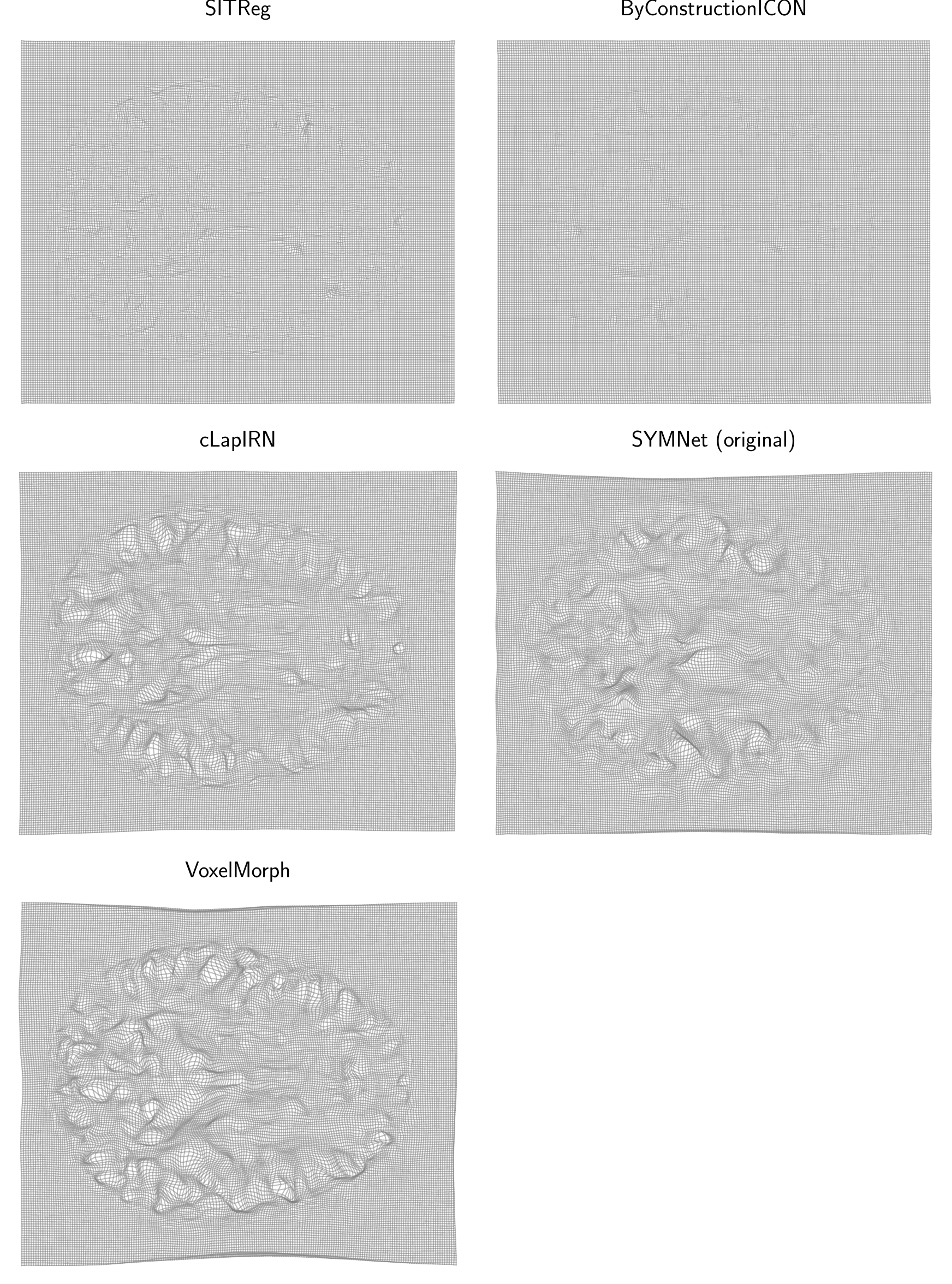}
\caption{\textbf{Visual cycle consistency comparison.} The deformation composition $f(x_A, x_B) \circ f(x_B, x_A)$ is visualized for each model for a single image pair in LPBA40 experiment. Ideally, changing the order of the input images should result in the same coordinate mapping but in the inverse direction, since anatomical correspondence is not dependent on the input order. In other words, the deformation composition $f(x_A, x_B) \circ f(x_B, x_A)$ should equal the identity deformation. As can be seen, the property is only fulfilled (up to small sampling errors) by our method and ByConstructionICON. Only one axial slice of the predicted 3D deformation is shown.}
\label{appendix-fig:cycle_consistency_visualization}
\end{figure}

\begin{figure}[h]
\centering
\includegraphics[width=1.0\textwidth]{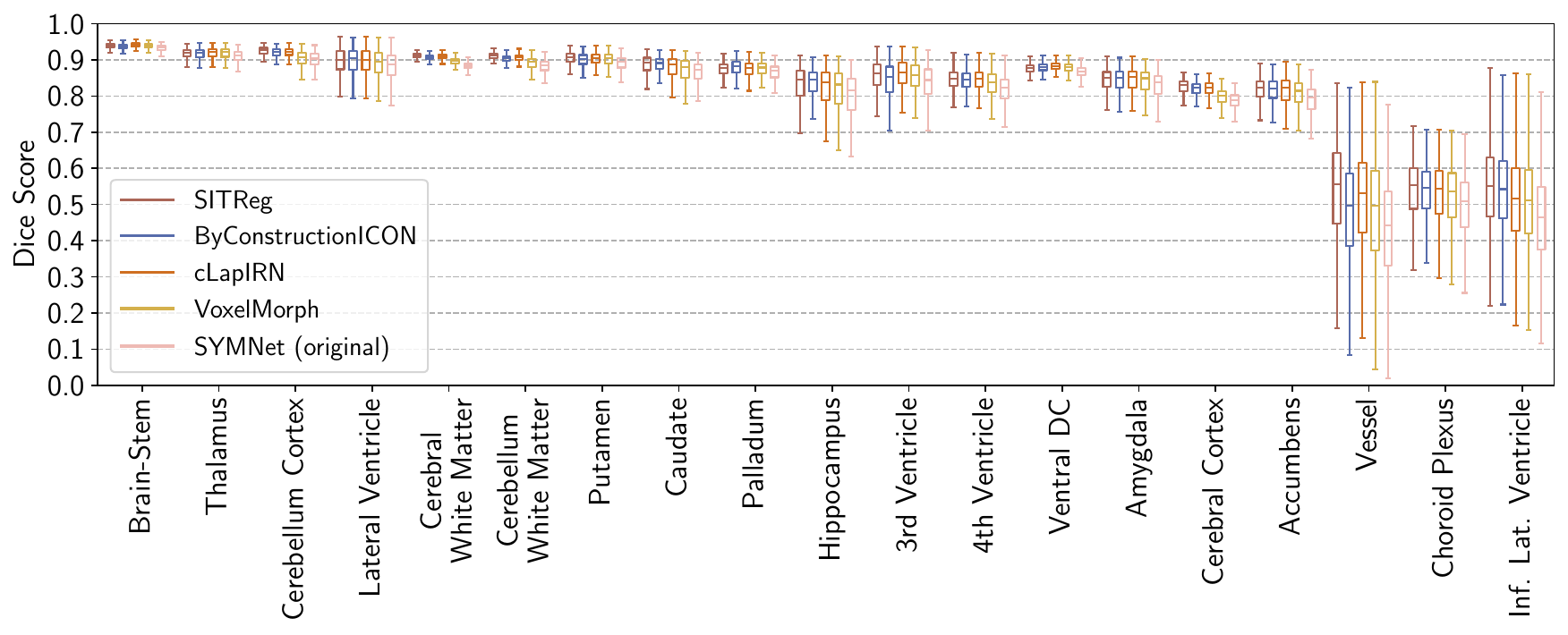}
\caption{\textbf{Individual brain structure dice scores for the OASIS experiment.} Boxplot shows performance of each of the compared methods on each of the brain structures in the OASIS dataset. Algorithms from left to right in each group: SITReg, ByConstructionICON, cLapIRN, VoxelMorph, SYMNet (original)}
\label{appendix-fig:dice_scores_oasis}
\end{figure}

\begin{figure}[h]
\centering
\includegraphics[width=1.0\textwidth]{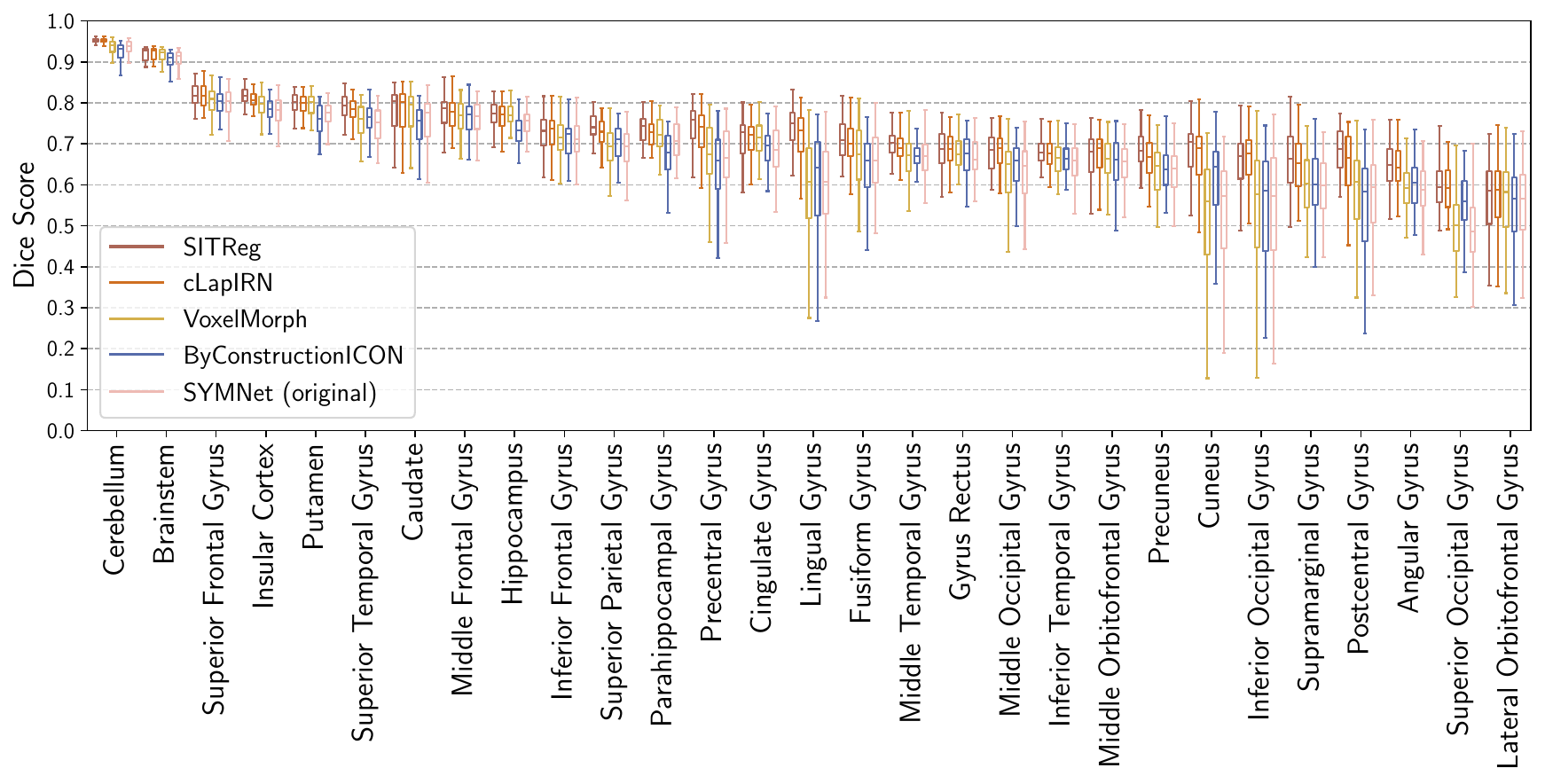}
\caption{\textbf{Individual brain structure dice scores for the LPBA40 experiment.} Boxplot shows performance of each of the compared methods on each of the brain structures in the LPBA40 dataset. Algorithms from left to right in each group: SITReg, cLapIRN, VoxelMorph, ByConstructionICON, SYMNet (original)}
\label{appendix-fig:dice_scores_lpba40}
\end{figure}

\begin{figure}[h]
\centering
\includegraphics[width=0.96\textwidth]{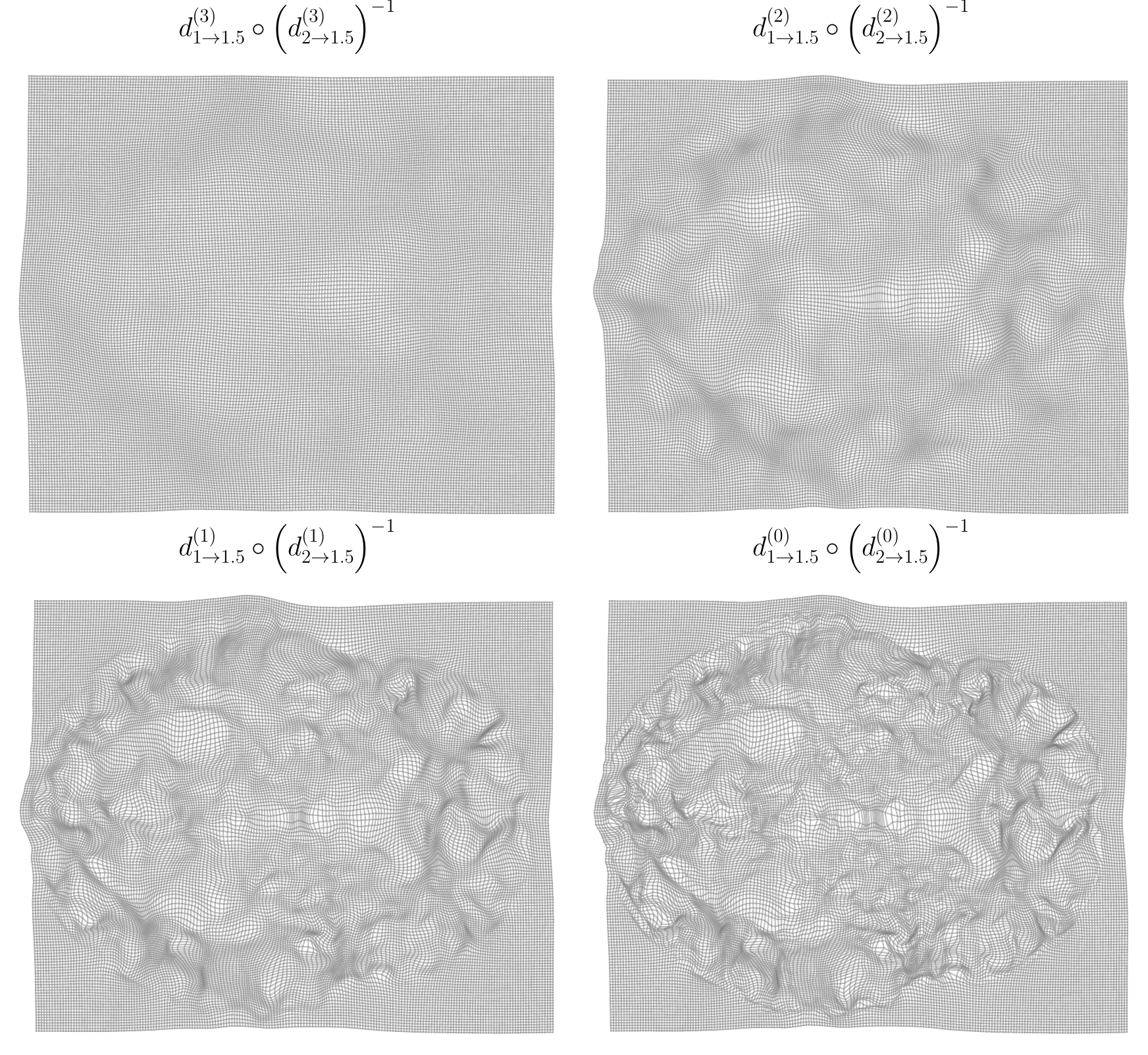}
\caption{\textbf{Visualization of deformation being gradually updated.} Each $d_{1\to1.5}^{(k)} \circ \left( d_{2\to1.5}^{(k)} \right)^{-1}$ corresponds to the full deformation after resolution level $k$. The example is from the OASIS experiment.}
\label{appendix-fig:detailed_deformation_example}
\end{figure}
\clearpage

\section{Details on statistical significance}\label{appendix:statistical_significance}

We computed statistical significance of the results comparing the test set predictions of the trained models with each other. We measured the statistical significance using permutation test, and in practice sampled $10000$ permutations.

To establish for certain the relative performance of the methods with respect to the tight metrics, one should train multiple models per method with different random seeds. However, our claim is not that the developed method improves the results with respect to a single tight metric but rather that the overall performance is better by a clear margin (see Section \ref{sec:discussion}).

\section{Clarifications on symmetry, inverse consistency, and topology preservation}\label{appendix:property_examples}

Here we provide examples of symmetry, inverse consistency and lack of topology preservation to further clarify how the terms are used in the paper.

Since symmetry and inverse consistency are quite similar properties, their exact difference might remain unclear. Examples of registration methods that are \textit{inverse consistent by construction but not symmetric} are many deep learning frameworks applying the stationary velocity field \citep{arsigny2006log} approach, e.g,  \citep{dalca2018unsupervised, krebs2018unsupervised, krebs2019learning, mok2020fast}. All of them use a neural network to predict a velocity field for an ordered pair of input images. The final deformation is then produced via Lie algebra exponentiation of the velocity field, that is, by integrating the velocity field over itself over unit time. Details of the exponentiation are not important here but the operation has an interesting property: By negating the velocity field to be exponentiated, the exponentiation results in inverse deformation. Denoting the Lie algebra exponential by $\exp$, and using notation from Section \ref{sec:intro}, we can define such methods as
\begin{equation}
    \begin{cases}
        f_{1\to2}(x_A, x_B) &:= \exp(g(x_A, x_B))\\
        f_{2\to1}(x_A, x_B) &:= \exp(-g(x_A, x_B))
    \end{cases}
\end{equation}
where $g$ is the learned neural network predicting the velocity field. As a result, the methods are inverse consistent by construction since $\exp(g(x_A, x_B)) = \exp(-g(x_A, x_B))^{-1}$ (accuracy is limited by spatial sampling resolution). However, by changing the order of inputs to $g$, there is no guarantee that $g(x_A, x_B) = -g(x_B, x_A)$ and hence such methods are not symmetric by construction.

MICS \citep{estienne2021mics} is an example of a method which is \textit{symmetric by construction but not inverse consistent}. MICS is composed of two components: encoder, say $E$, and decoder, say $D$, both of which are learned. The method can be defined as
\begin{equation}
    \begin{cases}
        f_{1\to2}(x_A, x_B) &:= D(E(x_A, x_B) - E(x_B, x_A))\\
        f_{2\to1}(x_A, x_B) &:= D(E(x_B, x_A) - E(x_A, x_B)).
    \end{cases}
\end{equation}
As a result, the method is symmetric by construction since $f_{1\to2}(x_A, x_B) = f_{2\to1}(x_B, x_A)$ holds exactly. However, there is no architectural guarantee that $f_{1\to2}(x_A, x_B)$ and $f_{2\to1}(x_B, x_A)$ are inverses of each other, and the paper proposes to encourage that using a loss function. In the paper they use such components in multi-steps manner, and as a result the overall architecture is no longer symmetric.

Lack of topology preservation means in practice that the predicted deformation folds on top of itself. An example of such deformation is shown in Figure \ref{appendix-fig:folding_deformation_example}.

\begin{figure}[h]
\centering
\includegraphics[width=0.6\textwidth]{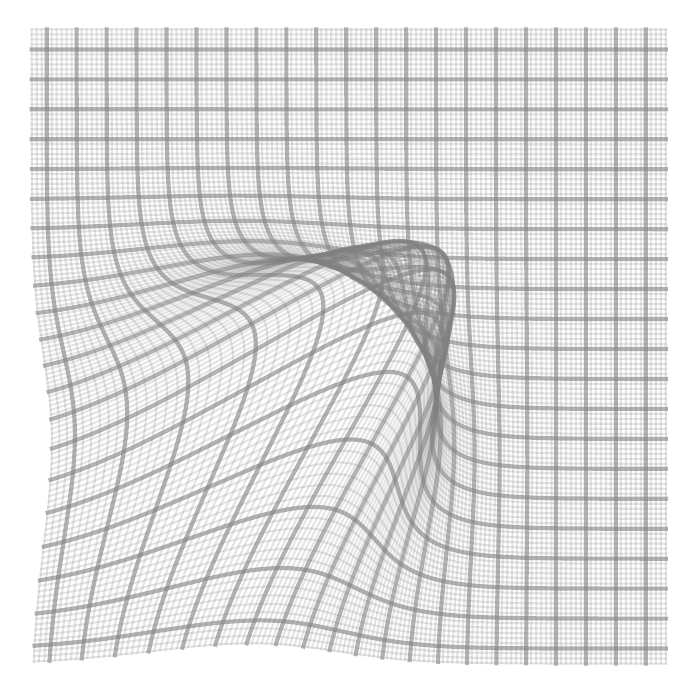}
\caption{\textbf{Visualization of a 2D deformation which is not topology preserving.} The deformation can be seen folding on top of itself.}
\label{appendix-fig:folding_deformation_example}
\end{figure}

\end{document}